\theoremstyle{plain}
\newtheorem{theorem}{Theorem}[section]
\newtheorem{hypothesis}[theorem]{Hypothesis}
\newtheorem{proposition}[theorem]{Proposition}
\newtheorem{lemma}[theorem]{Lemma}
\newtheorem{remark}[theorem]{Remark}
\theoremstyle{definition}
\newtheorem{definition}[theorem]{Definition}
\newtheorem{assumption}[theorem]{Assumption}
\def\##1\#{\begin{align}#1\end{align}}
\def\$#1\${\begin{align*}#1\end{align*}}
\newcommand{\R}{\mathbb{R}}
\renewcommand{\O}{\mathcal{O}}
\newcommand{\eps}{\varepsilon}
\newenvironment{proof_sketch}{\noindent{\textbf{Proof Sketch}.}}{\hfill$\square$}
\icmltitlerunning{On the Clean Generalization and Robust Overfitting in Adversarial Training}
\begin{document}

\twocolumn[
\icmltitle{On the Clean Generalization and Robust Overfitting in Adversarial Training from Two Theoretical Views: Representation Complexity and Training Dynamics}




\begin{icmlauthorlist}
\icmlauthor{Binghui Li}{yyy}
\icmlauthor{Yuanzhi Li}{comp}
\end{icmlauthorlist}

\icmlaffiliation{yyy}{Center for Machine Learning Research, Peking University}
\icmlaffiliation{comp}{Machine Learning Department, Carnegie Mellon University}

\icmlcorrespondingauthor{Binghui Li}{libinghui@pku.edu.cn}
\icmlcorrespondingauthor{Yuanzhi Li}{yuanzhil@andrew.cmu.edu}

\icmlkeywords{Machine Learning, ICML}

\vskip 0.3in
]



\printAffiliationsAndNotice{}  

\begin{abstract}
Similar to surprising performance in the standard deep learning, deep nets trained by adversarial training also generalize well for \textit{unseen clean data (natural data)}. However, despite adversarial training can achieve low robust training error, there exists a significant \textit{robust generalization gap}. We call this phenomenon the \textit{Clean Generalization and Robust Overfitting (CGRO)}. In this work, we study the CGRO phenomenon in adversarial training from two views: \textit{representation complexity} and \textit{training dynamics}. Specifically, we consider a binary classification setting with $N$ separated training data points. \textit{First}, we prove that, based on the assumption that we assume there is $\operatorname{poly}(D)$-size clean classifier (where $D$ is the data dimension), ReLU net with only $\Tilde{O}(N D)$ extra parameters is able to leverages robust memorization to achieve the CGRO, while robust classifier still requires exponential representation complexity in worst case. \textit{Next}, we focus on a structured-data case to analyze training dynamics, where we train a two-layer convolutional network with $\Tilde{O}(N D)$ width against adversarial perturbation. We then show that a three-stage phase transition occurs during learning process and the network provably converges to robust memorization regime, which thereby results in the CGRO. \textit{Besides}, we also empirically verify our theoretical analysis by experiments in real-image recognition datasets.
\end{abstract}

\vspace{-0.75em}
\section{Introduction}
\vspace{-0.75em}
\label{intro}

Nowadays, deep neural networks have achieved excellent performance in a variety of disciplines, especially including in computer vision \citep{krizhevsky2012imagenet,dosovitskiy2020image,kirillov2023segment} and natural language process \citep{devlin2018bert,brown2020language,ouyang2022training}. However, it is well-known that small but adversarial perturbations to the natural data can make well-trained classifiers confused \citep{biggio2013evasion,szegedy2013intriguing,goodfellow2014explaining}, which  potentially gives rise to reliability and security problems in real-world applications and promotes designing adversarial robust learning algorithms. 

In practice, adversarial training methods \citep{goodfellow2014explaining,madry2017towards,shafahi2019adversarial,zhang2019theoretically,pang2022robustness} are widely used to improve the robustness of models by regarding perturbed data as training data. However, while these robust learning algorithms are able to achieve high robust training accuracy \citep{gao2019convergence}, it still leads to a non-negligible robust generalization gap \citep{raghunathan2019adversarial}, which is also called \textit{robust overfitting} \citep{rice2020overfitting,yu2022understanding}. 

To explain this puzzling phenomenon, a series of works have attempted to provide theoretical understandings from different perspectives. Despite these aforementioned works seem to provide a series of convincing evidence from theoretical views in different settings, there still exists \textit{a gap between theory and practice} for at least \textit{two reasons}. 

\textit{First}, although previous works have shown that adversarial robust generalization requires more data and larger models \citep{schmidt2018adversarially,gowal2021improving,li2022robust,bubeck2023universal}, it is unclear that what mechanism, during adversarial training process, \textit{directly} causes robust overfitting. A line of work about uniform algorithmic stability \citep{xing2021algorithmic,xiao2022stability}, under Lipschitzian smoothness assumptions, also suggest that robust generalization gap increases when training iteration is large. In other words, we know there is no robust generalization gap for a trivial model that only guesses labels totally randomly (e.g. deep neural networks at random initialization), which implies that we should take learning process into consideration to analyze robust generalization.

\textit{Second and most importantly}, while some works \citep{tsipras2018robustness,zhang2019theoretically,hassani2022curse} point out that achieving robustness may hurt clean test accuracy, in most of the cases, it is observed that drop of robust test accuracy is much higher than drop of clean test accuracy in adversarial training \citep{madry2017towards,schmidt2018adversarially,raghunathan2019adversarial} (see in Figure \ref{fig:madry}, where clean test accuracy is more than $80\%$ but robust test accuracy only attains nearly $50\%$). Namely, a weak version of benign overfitting \citep{zhang2017understanding}, which means that overparameterized deep neural networks can both fit random data powerfully and generalize well for unseen \textit{clean} data, remains after adversarial training. Therefore, it is natural to ask the following question:
\vspace{-0.25em}
\begin{center}
    \textit{What is the
underlying mechanism that results in both Clean Generalization and Robust Overfitting (CGRO)} during adversarial training?
\end{center}
\vspace{-0.25em}

\begin{figure}[t]
\vskip 0.2in
\begin{center}
\centerline{\includegraphics[width=0.7\columnwidth]{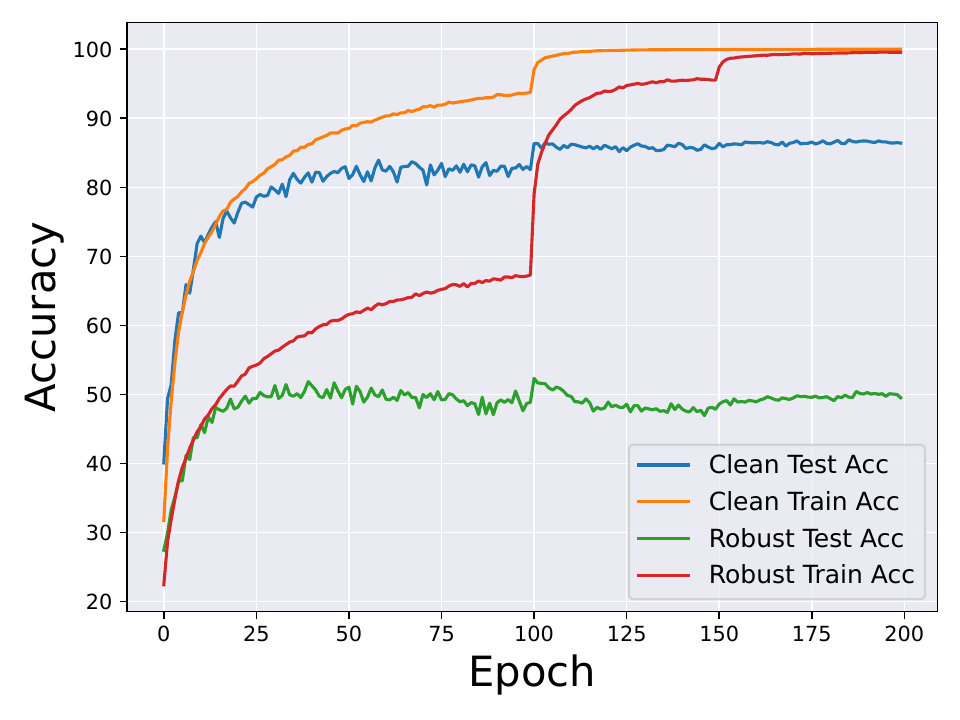}}
\caption{The learning curves of adversarial training on CIFAR10 with $\ell_{\infty}$-perturbation radius $\delta = 8/255$ \citep{rice2020overfitting}. }
\label{fig:madry}
\end{center}
\vskip -0.2in
\end{figure}

In this paper, we provide a theoretical understanding of this question. Precisely, we make the following contributions:

\begin{itemize}
    \item 
    In Section \ref{preliminaries}, we first present some useful notations used in our work, and then provide the formal definition of the CGRO classifier (Definition \ref{def:CGRO}).
    \item 
    In Section \ref{representation}, we study the CGRO classifiers via the view of representation complexity. Based on some data assumptions observed in practice, we prove that achieving CGRO classifier only needs extra \textit{linear} parameters by leveraging robust memorization (Theorem \ref{thm:rep_upper}), but robust classifier requires even \textit{exponential} model capacity in worst case (Theorem \ref{thm:rep_lower}).
    \item 
    In Section \ref{dynamics}, under our theoretical framework of adversarial training, we apply a two-layer convolutional network to learn the structured data. We propose a \textit{three-stage} analysis technique to decouple the complicated training dynamics of adversarial training, which shows that the network learner provably converges to the CGRO regime (Theorem \ref{thm:main}).
    \item 
    In Section \ref{sec:experiments}, we empirically demonstrate our theoretical results in Section \ref{representation} and Section \ref{dynamics} by experiments in real-world data and synthetic data, respectively.
\end{itemize}

\section{Additional Related Work}
\label{Related_work}

{\noindent \textbf{Empirical Works on Robust Overfitting.}} One surprising behavior of deep learning is that over-parameterized neural networks can generalize well, which is also called \textit{benign overfitting} that deep models have not only the powerful memorization but a good performance for unseen data \citep{zhang2017understanding,belkin2019reconciling}. However, in contrast to the standard (non-robust) generalization, for the robust setting, \citet{rice2020overfitting} empirically investigates robust performance of models based on adversarial training methods, which are used to improve adversarial robustness \citep{szegedy2013intriguing, madry2017towards}, and the work \citet{rice2020overfitting} shows that \textit{robust overfitting} can be observed on multiple datasets, including CIFAR10 and ImageNet.

{\noindent \textbf{Theoretical Works on Robust Overfitting.}} A list of works \citep{schmidt2018adversarially,balaji2019instance,dan2020sharp} study the \textit{sample complexity} for  adversarial robustness, and their works manifest that adversarial robust generalization requires more data than the standard setting, which gives an explanation of the robust generalization gap from the perspective of statistical learning theory. And another line of works \citep{tsipras2018robustness,zhang2019theoretically} propose a principle called the \textit{robustness-accuracy trade-off} and have theoretically proven the principle in different setting, which mainly explains the widely observed drop of robust test accuracy due to the trade-off between adversarial robustness and clean accuracy. Recently, \citet{li2022robust} investigates the robust expressive ability of neural networks and shows that robust generalization requires exponentially large models.

{\noindent \textbf{Feature Learning Theory of Deep Learning.}} The \textit{feature learning theory} of neural networks \citep{allen2020backward,allen2020towards,allen2022feature,shen2022data,jelassi2022towards,jelassi2022vision,chen2022towards,chidambaram2023provably} is proposed to study how features are learned in deep learning tasks, which provide a theoretical analysis paradigm beyond the \textit{neural tangent kernel (NTK) theory} \citep{jacot2018neural,du2018gradient,du2019gradient,allen2019convergence,arora2019fine}. In this work, we make a first step to understand clean generalization and robust overfitting (CGRO) phenomenon in adversarial training by analyzing feature learning process under our theoretical framework about structured data.

{\noindent \textbf{Memorization in Adversarial Training.}} \citet{dong2021exploring,xu2021towards} empirically and theoretically explore the memorization effect in adversarial training for promoting a deeper understanding of model capacity, convergence, generalization, and especially robust overfitting of the adversarially trained models. However, different from their works, the concept \textit{clean generalization and robust overfitting (CGRO)} proposed in our paper focuses on both robust overfitting and high clean test accuracy, which means that there is surprisingly no clean memorization or clean overfitting.


\section{Preliminaries}
\label{preliminaries}

In this section, we first introduce some useful notations that are used in this paper, and then present our problem setup.

\subsection{Notations}

Throughout this work, we use letters for scalars and bold letters for vectors. We will use $[k]$ to indicate the index set $\{1,2,\cdots,k\}$. The indicator of an event is defined as $\mathbb{I}\{\cdot\} = 1$ if the event $\cdot$ holds and $\mathbb{I}\{\cdot\} = 0$ otherwise. We use $\operatorname{sgn}(\cdot)$ to denote the sign function of the real number $\cdot$, and we use $\operatorname{span}(\boldsymbol{v}_1,\boldsymbol{v}_2,\cdots,\boldsymbol{v}_n)$ to denote the linear span of the vectors $\boldsymbol{v}_1,\boldsymbol{v}_2,\cdots,\boldsymbol{v}_n\in\mathbb{R}^D$.

For any given two sequences $\left\{A_n\right\}_{n=0}^{\infty}$ and $\left\{B_n\right\}_{n=0}^{\infty}$, we denote $A_n=O\left(B_n\right)$ if there exist some absolute constant $C_1>0$ and $N_1>0$ such that $\left|A_n\right| \leq C_1\left|B_n\right|$ for all $n \geq N_1$. Similarly, we denote $A_n=\Omega\left(B_n\right)$ if there exist $C_2>0$ and $N_2>0$ such that $\left|A_n\right| \geq C_2\left|B_n\right|$ for all $n>N_2$. We say $A_n=\Theta\left(B_n\right)$ if $A_n=O\left(B_n\right)$ and $A_n=\Omega\left(B_n\right)$ both holds. We use $\widetilde{O}(\cdot), \widetilde{\Omega}(\cdot)$, and $\widetilde{\Theta}(\cdot)$ to hide logarithmic factors in these notations respectively. Moreover, we denote $A_n=\operatorname{poly}\left(B_n\right)$ if $A_n=O\left(B_n^K\right)$ for some positive constant $K$, and $A_n=\operatorname{polylog}\left(B_n\right)$ if $B_n=\operatorname{poly}\left(\log \left(B_n\right)\right)$. We also say $A_n = o(B_n)$ if for arbitrary positive constant $C_3 > 0$, there exists $N_3 > 0$ such that $|A_n| < C_3 |B_n|$ for all $n>N_3$. An event is said that it happens with high probability (or w.h.p. for short) if it happens with probability at least $1 - o(1)$. 

We use the notation $\left\|\cdot\right\|_p, p \in [1,+\infty]$ to denote the $\ell_p$ norm in the vector space $\mathbb{R}^D$. For two sets $\mathcal{A}, \mathcal{B} \subset \mathbb{R}^D$, we can define the $\ell_p$-distance between $\mathcal{A}$ and $\mathcal{B}$ as $\operatorname{dist}_{p}(\mathcal{A},\mathcal{B}):=\operatorname{inf}\left\{\|\boldsymbol{X}-\boldsymbol{Y}\|_p: \boldsymbol{X}\in\mathcal{A}, \boldsymbol{Y}\in\mathcal{B}\right\}$. For $r > 0$, $\mathbb{B}_p(\boldsymbol{X},r) := \left\{ \boldsymbol{Z}\in\mathbb{R}^D: \left\|\boldsymbol{Z}-\boldsymbol{X}\right\|_p \leq r\right\}$ is defined as the $\ell_p$-ball with radius $r$ centered at $\boldsymbol{X}$. 

A multi-layer neural network is a function from input in $\mathbb{R}^D$ to output in $\mathbb{R}^m$, which is defined as follows:
\begin{equation}
\begin{aligned}             \boldsymbol{F}_{\boldsymbol{W},\boldsymbol{B}}\left(\boldsymbol{X}\right) &:= \boldsymbol{W}_L \sigma \left(\boldsymbol{W}_{L-1} \sigma\left(\cdots \boldsymbol{W}_1\sigma \left(\boldsymbol{W}_0 \boldsymbol{X} \right.\right.\right.\\ &\left.\left.\left.\quad + \boldsymbol{B}_0\right) + \boldsymbol{B}_1 \cdots  \right) + \boldsymbol{B}_{L-1}\right) + \boldsymbol{B}_L
\nonumber
\end{aligned}
\end{equation}
and
\begin{equation}
\begin{aligned}
    &\boldsymbol{W}_0 \in \mathbb{R}^{m_0 \times D},\quad \boldsymbol{W}_i \in \mathbb{R}^{m_i \times m_{i-1}} , 1\leq i \leq L-1, \\
    &\boldsymbol{W}_L \in \mathbb{R}^{m \times m_{L-1}},\quad \boldsymbol{B}_i \in \mathbb{R}^{m_i}, 0\leq i \leq L-1, \\
    &\boldsymbol{B}_L \in \mathbb{R}^{m} ,\quad \boldsymbol{X} \in \mathbb{R}^{D}, \quad \boldsymbol{F}_{\boldsymbol{W},\boldsymbol{B}}\left(\boldsymbol{X}\right) \in \mathbb{R}^{m} , \nonumber
\end{aligned}
\end{equation}
where the weight $\boldsymbol{W}=[\boldsymbol{W}_0,\boldsymbol{W}_1,\cdots,\boldsymbol{W}_L]$ and the bias $\boldsymbol{B}=[\boldsymbol{B}_0,\boldsymbol{B}_1,\cdots,\boldsymbol{B}_L]$ are learnable parameters, and $\max\{D, m_0, m_1, \cdots, m_{L-1}, m\}$ and $L + 1$ are the width and depth of the neural network, respectively. We use $\sigma(\cdot)$ to denote the (non-linear) entry-wise activation function, and ReLU activation function is defined as $\sigma(\cdot):=\max(\cdot, 0)$.

\subsection{Problem Setup}

We consider a binary classification setting, where we use $\boldsymbol{X}\in \mathcal{X}\subset \mathbb{R}^D$ to denote the data input and the binary label $y$ is in $\mathcal{Y}=\{-1,1\}$. Given a data distribution $\mathcal{D}$ that is a joint distribution over the supporting set $\mathcal{X}\times \mathcal{Y}$, and a function $f:\mathcal{X}\rightarrow\mathbb{R}$ as the classifier, we can define the following measurements to describe the clean (robust) classification performance of the classifier $f$ on data $\mathcal{D}$.

\begin{definition}
    (Clean Test Error) The clean test error of the classifier $f$ w.r.t. the data distribution $\mathcal{D}$ is defined as $\mathcal{L}_{\mathcal{D}}(f) := \mathbb{P}_{(\boldsymbol{X},y)\sim\mathcal{D}}\left[\operatorname{sgn}(f(\boldsymbol{X}))\ne y\right] .$
\end{definition}

\begin{definition}
    (Robust Test Error) Given a $\ell_p$-robust radius $\delta \geq 0$, the robust test error of the classifier $f$ w.r.t. the data distribution $\mathcal{D}$ and $\delta$ under $\ell_p$ norm is defined as $\mathcal{L}_{\mathcal{D}}^{p,\delta}(f) := \mathbb{E}_{(\boldsymbol{X},y)\sim\mathcal{D}}\left[\max_{\|\boldsymbol{X'}-\boldsymbol{X}\|_p\leq\delta}\mathbb{I}\{\operatorname{sgn}(f(\boldsymbol{X'}))\ne y\}\right] .$
\end{definition}

In our work, we mainly focus on the cases when $p=2$ and $p=\infty$, which can be extended to the general $p$ case as well.

In adversarial training, with access to the training dataset $\mathcal{S} = \{(\boldsymbol{X}_1,y_1),(\boldsymbol{X}_2,y_2),\dots,(\boldsymbol{X}_N,y_N)\}$ randomly sampled from the data distribution $\mathcal{D}$, we aim to minimize the following robust training error to derive the robust classifier.

\begin{definition}
    (Robust Training Error) Given a $\ell_p$-robust radius $\delta \geq 0$, the robust training error of the classifier $f$ w.r.t. training dataset $\mathcal{S}$ and $\delta$ under $\ell_p$ norm is defined as
    $\mathcal{L}_{\mathcal{S}}^{p,\delta}(f):= \frac{1}{N}\sum_{i=1}^{N}\max_{\|\boldsymbol{X}_i '-\boldsymbol{X}_i\|_p\leq\delta}\mathbb{I}\{\operatorname{sgn}(f(\boldsymbol{X}_i '))\ne y\}$ .
\end{definition}

Now, we present the concept \textbf{CGRO classifier} that we mainly study in this paper as the following definition.

\begin{definition}
    \label{def:CGRO}
    (CGRO Classifier) Given a $\ell_p$-robust radius $\delta \geq 0$, we say a classifier $f$ is CGRO classifier w.r.t. the data distribution $\mathcal{D}$ and training dataset $\mathcal{S}$ if it satisfies that $\mathcal{L}_{\mathcal{D}}(f)=o(1)$, $\mathcal{L}_{\mathcal{S}}^{p,\delta}(f)=o(1)$ but $\mathcal{L}_{\mathcal{D}}^{p,\delta}(f)=\Omega(1)$.
\end{definition}

\begin{remark}
    In the above definition of CGRO classifier, the asymptotic notations $o(1)$ and $\Omega(1)$ are both defined with respective to the data dimension $D$, which means that CGRO classifier has a good clean test performance but a poor robust generalization at the same time.
\end{remark}

\section{Analyzing the CGRO Phenomenon from the View of Representation Complexity}
\label{representation}

In this section, we provide a theoretical understanding of the CGRO phenomenon from the view of representation complexity. First, We present the data assumption as follow.

For the data distribution $\mathcal{D}$ and $\ell_p$-robust radius $\delta > 0$, the supporting set of the data input $\mathcal{X}$ can be divided into two sets $\mathcal{X}_{+}$ and $\mathcal{X}_{-}$ that correspond to the positive and negative classes respectively, i.e. $\mathcal{X}_{+}:=\{\boldsymbol{X}\in\mathcal{X}: (\boldsymbol{X},y) \in \mathcal{D}, y = 1\}$ and $\mathcal{X}_{-}:=\{\boldsymbol{X}\in\mathcal{X}: (\boldsymbol{X},y) \in \mathcal{D}, y = -1\}$.
\begin{assumption}
\label{ass:bounded}
    (Bounded) There exists a absolute constant $\mathcal{R} > 0$ such that, with high probability over the data distribution $\mathcal{D}$, it holds that the data input $\boldsymbol{X} \in [-\mathcal{R},\mathcal{R}]^{D}$.
\end{assumption}

Recall the definition of CGRO classifier (Definition \ref{def:CGRO}). W.l.o.g., we can only focus on the case $\mathcal{X}\subset [-\mathcal{R},\mathcal{R}]^{D} $.

\begin{assumption}
\label{ass:sep}
    (Well-Separated) we assume that the separation between the positive and negative classes is large than twice the robust radius, i.e. $\operatorname{dist}_p(\mathcal{X}_{+}, \mathcal{X}_{-}) > 2\delta$.
\end{assumption}

This assumption is necessary to ensure the existence of a robust classifier, and which is indeed empirically observed in real-world image classification data \citep{yang2020closer}.

\begin{assumption}
\label{ass:clean_exists}
    (Polynomial-Size Clean Classifier Exists) we assume that there exists a clean classifier $f_{\textit{clean}}:\mathcal{X}\rightarrow\mathbb{R}$ represented as a ReLU network with $\operatorname{poly}(D)$ parameters such that $\mathcal{L}_{\mathcal{D}}(f_{\textit{clean}}) = o(1)$ but $\mathcal{L}_{\mathcal{D}}^{p,\delta}(f_{\textit{clean}}) = \Omega(1)$.
\end{assumption}

In Assumption \ref{ass:clean_exists}, the asymptotic notations $o(1)$ and $\Omega(1)$ are defined w.r.t. the data dimension $D$. It means that the clean classifier with moderate size can perfectly classify the natural data but fails to classify the adversarially perturbed data, which is consistent with the common practice that well-trained neural networks are vulnerable to adversarial examples \citep{szegedy2013intriguing,raghunathan2019adversarial}. 

Now, we present our main result about the upper bound of representation complexity for CGRO classifier as follow.
\begin{theorem}
\label{thm:rep_upper}

(Polynomial Upper Bound for CGRO) Under Assumption  \ref{ass:bounded}, \ref{ass:sep} and \ref{ass:clean_exists}, with $N$-sample training dataset $\mathcal{S} = \{(\boldsymbol{X}_1,y_1),(\boldsymbol{X}_2,y_2),\dots,(\boldsymbol{X}_N,y_N)\}$ drawn from the data distribution $\mathcal{D}$, there exists a CGRO classifier $f_{\textit{CGRO}}:\mathcal{X}\rightarrow\mathbb{R}$ that can be represented as a ReLU network with at most $\operatorname{poly}(D) + \Tilde{O}(N D)$ parameters.
\end{theorem}

\begin{proof_sketch}
    First, we show the existence of the CGRO classifier w.r.t the data distribution $\mathcal{D}$ and training dataset $\mathcal{S}$.
    \begin{lemma}
    \label{lem:f_S}
        For given the data distribution $\mathcal{D}$ satisfying Assumption \ref{ass:bounded}, \ref{ass:sep} and \ref{ass:clean_exists}, and the randomly sampled training dataset $\mathcal{S} = \{(\boldsymbol{X}_1,y_1),(\boldsymbol{X}_2,y_2),\dots,(\boldsymbol{X}_N,y_N)\}$, we consider the following function $f_{\mathcal{S}}:\mathcal{X}\rightarrow\mathbb{R}$ defined as
        \begin{equation}
    \begin{aligned}
    f_{\mathcal{S}}(\boldsymbol{X}) = & \underbrace{f_{\textit{clean}}(\boldsymbol{X})\left(1-\mathbb{I}\{\boldsymbol{X} \in \cup_{i=1}^{N} \mathbb{B}_{p}(\boldsymbol{X}_i, \delta)\}\right)}_{\text{clean classification on unseen test data}} \\
    & + \underbrace{\sum_{i=1}^{N} y_i \mathbb{I}\{\boldsymbol{X} \in \mathbb{B}_{p}(\boldsymbol{X}_i, \delta)\}}_{\text{robust memorization on training data}} . \nonumber 
    \end{aligned}       
    \end{equation}
    And it holds that the function $f_{\mathcal{S}}$ is a CGRO classifier.
    \end{lemma}

Due to Assumption $\ref{ass:sep}$, it is clear that we have $\mathbb{B}_{p}(\boldsymbol{X}_i, \delta) \cap \mathbb{B}_{p}(\boldsymbol{X}_j, \delta) = \emptyset$ for any distinct $i,j \in [N]$. Thus, $f_{\mathcal{S}}$ perfectly clean classifies unseen test data by the first summand (Assumption \ref{ass:clean_exists}) and robustly memorizes training data by the second summand, which belongs to CGRO classifiers.

Back to the proof of Theorem \ref{thm:rep_upper}, the key idea is to approximate the classifier $f_{\mathcal{S}}$ proposed in Lemma \ref{lem:f_S} by ReLU network. Indeed, the function $f_{\mathcal{S}}$ can be rewritten as
\begin{equation}
\begin{aligned}
    f_{\mathcal{S}}(\boldsymbol{X}) 
    &= \underbrace{\sum_{i=1}^{N} (y_i - f_{\textit{clean}}(\boldsymbol{X})) \mathbb{I}\{\|\boldsymbol{X}-\boldsymbol{X}_i\|_{p}\leq \delta \}}_{\text{weighted sum of robust local indicators}} \\
    & + \underbrace{f_{\textit{clean}}(\boldsymbol{X})}_{\operatorname{poly}(D)} . \nonumber
\end{aligned}
\end{equation}

Therefore, we use ReLU nets to approximate the distance function $d_i(\boldsymbol{X}):=\|\boldsymbol{X}-\boldsymbol{X}_i\|_{p}$ in $[-\mathcal{R},\mathcal{R}]^{D}$ efficiently, and we also noticed that the indicator $\mathbb{I}\{\cdot\}$ can be approximated by a soft indicator represented by two ReLU neurons. Combined with the above results, there exists a ReLU net $f_{\textit{CGRO}}$ with $\operatorname{poly}(D) + \Tilde{O}(N D)$ parameters such that $\|f_{\textit{CGRO}}-f_{\mathcal{S}}\|_{\infty}=o(1)$, which implies Theorem \ref{thm:rep_upper}.
\end{proof_sketch}

\begin{remark}
    Theorem \ref{thm:rep_upper} manifests that ReLU net with only $\Tilde{O}(N D)$ extra parameters is able to leverages robust memorization (Lemma \ref{lem:f_S}) to achieve the CGRO regime.
\end{remark}

However, to achieve robust generalization, higher complexity seems necessary. We generalize the result in \citet{li2022robust} from linear-separable setting to Assumption \ref{ass:clean_exists}.

\begin{theorem}
\label{thm:rep_lower}

(Exponential Lower Bound for Robust Classifer) Let $\mathcal{F}_{M}$ be the family of function represented by ReLU networks with at most $M$ parameters. There exists a number $M_{D} = \Omega(\operatorname{exp}(D))$ and a distribution $\mathcal{D}$ satisfying Assumption \ref{ass:bounded}, \ref{ass:sep} and \ref{ass:clean_exists} such that, for any classifier in the family $\mathcal{F}_{M_{D}}$, the robust test error w.r.t. $\mathcal{D}$ is at least $\Omega(1)$.
\end{theorem}

\begin{proof_sketch}
    The main proof idea of Theorem \ref{thm:rep_lower} is the linear region decomposition of ReLU nets \citep{montufar2014number}, and then we apply the technique bounding the VC dimension for local region similar to \citet{li2022robust}.
\end{proof_sketch}

According to Assumption \ref{ass:clean_exists}, Theorem \ref{thm:rep_upper} and Theorem \ref{thm:rep_lower}, we obtain the following inequalities about 
representation complexity of classifiers in different regimes.
\begin{equation}
\begin{aligned}
    \underbrace{\textit{Clean Classifier}}_{\operatorname{poly}(D)} \lesssim \underbrace{\textit{CGRO Classifier}}_{\operatorname{poly}(D) + \Tilde{O}(N D)} \ll \underbrace{\textit{Robust Classifier}}_{\Omega(\operatorname{exp}(D))} . \nonumber
\end{aligned}
\end{equation}

\begin{remark}
    These inequalities states that while CGRO classifiers have mildly higher representation complexity than clean classifiers, adversarial robustness requires excessively higher complexity, which may lead the classifier trained by adversarial training to the CGRO regime. We also empirically verify this by the experiments of adversarial training regarding different model sizes in Section \ref{sec:experiments}.
\end{remark}


\section{Analyzing Dynamics of Adversarial Training on Structured Data}
\label{dynamics}

In the previous section, we show the efficiency of CGRO classifier via representation complexity. However, it is unclear how the classifier trained by adversarial training converges to the CGRO regime. In this section, we will focus on a structured-data setting to study the learning process of adversarial training. First, we introduce our structured-data setting as follow, and then provide a fine-grained analysis of adversarial training under our theoretical framework.

\subsection{Structured Data Setting}

In this section, we consider the case that data input $\boldsymbol{X} \in \mathbb{R}^{D}$ has a patch structure as follow, which is similar to a list of theoretical works about feature learning \citep{allen2020towards,chen2021benign,jelassi2022towards,jelassi2022vision,kou2023benign,chidambaram2023provably}.

\begin{definition}
\label{def:patch_data}
    (Patch Data Distribution) We define a data distribution $\mathcal{D}$, in which each instance consists in an input $\boldsymbol{X} \in \mathcal{X} = \mathbb{R}^{D}$ and a label $y \in  \mathcal{Y}=\{-1,1\}$ generated by
\begin{enumerate}
    \item 
    The label $y$ is uniformly drawn from $\{-1,1\}$.
    \item 
    The input $\boldsymbol{X}=(\boldsymbol{X}[1], \ldots, \boldsymbol{X}[P])$, where each patch $\boldsymbol{X}[j] \in \mathbb{R}^d$ and $P = D/d$ is the number of patches (we assume that $D/d$ is an integer and $P = \operatorname{polylog}(d)$).
    \item 
    Meaningful Signal patch: for each instance, there exists one and only one meaningful patch $\operatorname{signal}(\boldsymbol{X}) \in[P]$ satisfies $\boldsymbol{X}[\operatorname{signal}(\boldsymbol{X})]= \alpha y \boldsymbol{w}^*$, where $\boldsymbol{w}^* \in \mathbb{R}^d (\left\|\boldsymbol{w}^*\right\|_2=1)$ is the unit meaningful signal vector and $\alpha > 0$ is the norm of signal.
    \item 
    Noisy patches: $\boldsymbol{X}[j] \sim \mathcal{N}\left(0,\left(\mathbf{I}_d-\boldsymbol{w}^* \boldsymbol{w}^{* \top}\right) \sigma_p^2\right)$, for $j \in[P] \backslash\{\operatorname{signal}(\boldsymbol{X})\}$, where $\sigma_p^{2} > 0$ denotes the variance of noise .
\end{enumerate}
\end{definition}

\begin{remark}
    The patch structure that we leverage can be viewed as a simplification of real-world vision-recognition datasets. Indeed, images are divided into signal patches that are meaningful for the classification such as the whisker of a cat or the nose of a dog, and noisy patches like the uninformative background of a photo. And our patch data assumption can also be generalized to the case that there exist a set of patches that are meaningful, while analyzing the learning process will be too complicated to un-clarify our main idea that we want to present. Therefore, we focus on the single meaningful patch case in our work.
\end{remark}

To learn our structured data, we use two-layer convolutional neural network (CNN) \citep{lecun1998gradient, krizhevsky2012imagenet} with non-linear activation as the learner network.

\begin{definition}
\label{def:CNN}
    (Two-layer Convolutional Network) For a given input data $\boldsymbol{X} \in \mathbb{R}^{P d}$, our network learner $f_{\boldsymbol{W}}:\mathbb{R}^{P d}\rightarrow\mathbb{R}$ is defined as
    \begin{equation}
    \begin{aligned}
        f_{\boldsymbol{W}}(\boldsymbol{X}) = &\sum_{r=1}^m \sum_{j=1}^P a_r^{+}\sigma\left(\left\langle\boldsymbol{w}_r^{+}, \boldsymbol{X}[j]\right\rangle\right)\\
        & - \sum_{r=1}^m \sum_{j=1}^P a_r^{-}\sigma\left(\left\langle\boldsymbol{w}_r^{-}, \boldsymbol{X}[j]\right\rangle\right). \nonumber
    \end{aligned}
    \end{equation}
    where $\boldsymbol{W} = [\{a_r^{+}\}_{r=1}^{m}, \{a_r^{-}\}_{r=1}^{m}, \{\boldsymbol{w}_r^{+}\}_{r=1}^{m}, \{\boldsymbol{w}_r^{-}\}_{r=1}^{m}]$ are learnable parameters, and $\sigma(\cdot)$ is the $\operatorname{ReLU}^{q}$ activation function defined as $\sigma(\cdot) = \left(\max(\cdot, 0)\right)^{q} (q \geq 2)$.
    
\end{definition}

$\operatorname{ReLU}^{q}$ activation functions are useful to address the non-smoothness of ReLU function at zero, which are widely applied in literatures of deep learning theory \citep{kileel2019expressive,allen2020backward,allen2020towards,chen2021benign,jelassi2022towards,jelassi2022vision}. To simplify our analysis, we fix the second layer weights as $a_r^{+}=a_r^{-}=\frac{1}{2}$. We also assume that $q$ is odd and it holds that $w_r:=w_r^{+}=-w_r^{-}$ during optimization process. At initialization, we sample $\boldsymbol{w}_{r}$ i.i.d from $\mathcal{N}(0, \sigma_{0}^{2}\mathbf{I}_d)$. Our results can be extended to the case when $q$ is even and $w_r^{+}, -w_r^{-}$ are differently.

\textbf{The Role of Non-linearity.} Indeed, a series of recent theoretical works \citep{li2019inductive, chen2021benign, javanmard2022precise} show that linear model can achieve robust generalization for adversarial training under certain settings, which but fails to explain the CGRO phenomenon observed in practice. To mitigate this gap, we improve the expressive power of model by using non-linear activation that can characterize the data structure and learning process more precisely. 

In adversarial training, we aim to minimize the following adversarial training loss, which is a trade-off between natural risk and robust regularization defined as follow.
\begin{definition}
\label{def:adv_loss}
    (Adversarial Training Loss) For a hyperparameter $\lambda > 0$ and the $\ell_p$-robust radius $\delta > 0$, the adversarial training loss of the network $f_{\boldsymbol{W}}$ w.r.t. the training dataset $\mathcal{S} = \{(\boldsymbol{X}_1,y_1),(\boldsymbol{X}_2,y_2),\dots,(\boldsymbol{X}_N,y_N)\}$ is defined as
\begin{equation}
\begin{aligned}
&\widehat{\mathcal{L}}_{\textit{adv}}(\boldsymbol{W}):=\frac{1}{N} \sum_{i=1}^N \underbrace{\mathcal{L}\left(\boldsymbol{W} ; \boldsymbol{X}_i, y_i\right)}_{\text {natural risk }}\\
&+\lambda \cdot \underbrace{\max _{\widehat{\boldsymbol{X}}_i \in \mathbb{A}_{p}\left(\boldsymbol{X}_i, \delta\right)}\left[\mathcal{L}\left(\boldsymbol{W} ; \widehat{\boldsymbol{X}}_i, y_i\right)-\mathcal{L}\left(\boldsymbol{W} ; \boldsymbol{X}_i, y_i\right)\right]}_{\text {robust regularization }} . \nonumber
\end{aligned} 
\end{equation}
where we use $\mathcal{L}(\boldsymbol{W}; \boldsymbol{X}, y)$ to denote the single-point loss with respect to $f_{\boldsymbol{W}}$ on $(\boldsymbol{X}, y)$ and $\mathbb{A}_{p}\left(\boldsymbol{X}, \delta\right)$ denotes the perturbed range of the data point $\boldsymbol{X}$ with $\ell_p$-radius $\delta$.
\end{definition}

\begin{remark}
    Adversarial training loss in Definition \ref{def:adv_loss} gives a general form of adversarial training methods \citep{goodfellow2014explaining,madry2017towards,zhang2019theoretically} for different values of hyperparameter $\lambda$ and different types of loss function $\mathcal{L}(\boldsymbol{W}; \boldsymbol{X}, y)$ and perturbed range $\mathbb{A}_{p}\left(\cdot, \delta\right)$.
\end{remark} 

Here, we use the logistic loss defined as $\mathcal{L}(\boldsymbol{W}; \boldsymbol{X}, y):=\log \left(1+\exp\{-y f_W\left(\boldsymbol{X}\right)\}\right)$. And we apply the perturbed range defined as $\mathbb{A}_{p}\left(\boldsymbol{X}, \delta\right) := \mathbb{B}_{p}\left(\boldsymbol{X}, \delta\right) \cap \boldsymbol{X} + \boldsymbol{\Delta}\left(\boldsymbol{X}\right)$, where $\boldsymbol{\Delta}\left(\boldsymbol{X}\right) \subset \mathbb{R}^{P d}$ satisfies that
\begin{equation}
\boldsymbol{\Delta}\left(\boldsymbol{X}\right)[j] = \begin{cases} \operatorname{span}(\boldsymbol{w}^{*}) & \text {, if } j = \operatorname{signal}(\boldsymbol{X}), \\ \boldsymbol{0} & \text {, otherwise. } \end{cases}\nonumber
\end{equation}

This perturbed range ensures that adversarial perturbations used to generate adversarial examples are always aligned with the meaningful signal vector $\boldsymbol{w}^{*}$ during adversarial training, which exactly simplifies the optimization analysis.

\begin{definition}
\label{def:at}
    (Adversarial Training Algorithm) We run the standard gradient descent method to update the network parameters $\{\boldsymbol{W}^{(t)}\}_{t=0}^{T}$ for $T$ iterations w.r.t. the adversarial training loss that can be rewritten as $\widehat{\mathcal{L}}_{\textit{adv}}^{(t)}(\boldsymbol{W}) =  \frac{1}{N} \sum_{i=1}^N (1-\lambda) \mathcal{L}\left(\boldsymbol{W} ; \boldsymbol{X}_i, y_i\right)  
        + \lambda \mathcal{L}\left(\boldsymbol{W} ; \boldsymbol{X}_i^{\textit{adv}, (t)}, y_i\right)$,
    where $\boldsymbol{X}_i^{\textit{adv}, (t)}$ denotes the adversarial example generated by $i$-th training data $\boldsymbol{X}_i$ at $t$-th iteration. Concretely, the adversarial examples $\boldsymbol{X}_i^{\textit{adv}, (t)}$ and network parameters $\boldsymbol{W}^{(t)}$ are updated alternatively as
    \begin{equation}
    \begin{cases}
        &\boldsymbol{X}_i^{\textit{adv}, (t)} = \mathop{\operatorname{argmax}}\limits_{\widehat{\boldsymbol{X}}_i \in \mathbb{A}_{p}\left(\boldsymbol{X}_i, \delta\right)} \mathcal{L}\left(\boldsymbol{W}^{(t)} ; \widehat{\boldsymbol{X}}_i, y_i\right),i\in [N], \\
        &\boldsymbol{W}^{(t+1)} =\boldsymbol{W}^{(t)}-\eta \nabla_{\boldsymbol{W}}\widehat{\mathcal{L}}_{\textit{adv}}^{(t)}\left(\boldsymbol{W}^{(t)}\right) , \nonumber
    \end{cases}
    \end{equation}
    where $\eta > 0$ is the learning rate.
\end{definition}


Next, we make the following assumptions about hyperparameters we introduced in our structured-data setting.

\begin{assumption}
\label{ass:hyper}
    (Choice of Hyperparameters) We assume that:
    \begin{equation}
    \begin{aligned}
        &\alpha = \Theta(d^{c_\alpha}), \quad \sigma_p = \Theta(d^{-c_p}), \quad m=\Theta(N)=\operatorname{poly}(d), \\
        &\sigma_0 = \frac{\operatorname{polylog}(d)}{\sqrt{d}},\quad \delta = \alpha \left(1 - \frac{1}{\sqrt{d}\operatorname{polylog}(d)}\right), \\
        &\lambda \in \left[\frac{1}{\operatorname{poly}(d)}, 1\right), \quad \eta = \left(0, \frac{1}{\operatorname{poly}(d)}\right] , \nonumber
    \end{aligned}
    \end{equation}
    where $c_\alpha, c_p \in (0,1)$ are constants satisfying $c_\alpha + c_p > \frac{1}{2}$.
\end{assumption}

\textbf{Discussion of Hyperparameter Choices.} Actually, the choices of these hyperparameters are not unique. We make concrete choices above for the sake of calculations in our proofs, but only the relationships between them are important. Namely, since the norm of signal patch is $\alpha$ and the norm of noise patch w.h.p. is $\Theta(\sigma_p\sqrt{d})$, our choices ensure that meaningful signal is stronger than noise. Without this assumption, in other word, if the signal-to-noise ratio is very low, there even exists no clean generalization, which has been theoretically shown under the similar patch-structured data setting \citep{cao2021risk,chen2021benign,frei2022benign}. The width of network learner is $\Tilde{O}(N D)$ to achieve mildly over-parameterization we mentioned in Theorem \ref{thm:rep_upper}. The separation in Assumption \ref{ass:sep} also holds due to $\delta < \alpha$.

\subsection{Main Results}

First, we introduce the concept called \textbf{feature learning} to characterize what the model learns from training dynamics.

\begin{definition}
\label{def:feature_learning}

    (Feature Learning) Specifically, given a certain training data-point $(\boldsymbol{X}, y) \sim \mathcal{D}$ and the network learner $f_{\boldsymbol{W}}$, we focus on the following two types of feature learning.
\begin{itemize}
    \item 
    \textbf{True Feature Learning.} We project the weight $\boldsymbol{W}$ on the meaningful signal vector to measure the correlation between the model and the true feature as
    \begin{equation}
    \label{ref:true_feature}
        \mathcal{U} := \sum_{r=1}^{m}\left\langle \boldsymbol{w}_{r}, \boldsymbol{w}^{*} \right\rangle^{q} . \nonumber
    \end{equation}
    \item 
    \textbf{Spurious Feature Learning.} We project the weight $\boldsymbol{W}$ on the random noise to measure the correlation between the model and the spurious feature as
    \begin{equation}
    \label{ref:spurious_feature}
        \mathcal{V} := \sum_{r=1}^{m}\sum_{j \in [P]\setminus \operatorname{signal}(\boldsymbol{X})} \left\langle \boldsymbol{w}_{r}, y\boldsymbol{X}[j] \right\rangle^{q} . \nonumber 
    \end{equation}
\end{itemize}
\end{definition}

Thus, it holds that the model correctly classify the clean data in two cases. One is that the model learns the true feature and ignores the spurious features, i.e. $\mathcal{U} = \Omega(1) \gg |\mathcal{V}|$. Another is that the model doesn't learn the true feature but memorizes the spurious features, i.e. $\mathcal{U} = o(1)$ and $\mathcal{V} = \Omega(1) \gg 0$. We can analyze the perturbed data similarly.

Now, we give our main result about feature learning process.
\begin{theorem}
\label{thm:main}
    Under Assumption \ref{ass:hyper}, we run the adversarial training algorithm in Definition \ref{def:at} to update the weight of the network learner for $T = \Omega(\operatorname{poly}(d))$ iterations. Then, with high probability, it holds that the network leanrer
\begin{enumerate}
    \item 
    partially learns the true feature, i.e. $\mathcal{U}^{(T)} = \Theta({\alpha}^{-q})$;
    \item 
    exactly memorizes the spurious feature, i.e. for each $i \in [N], \mathcal{V}_{i}^{(T)} = \Theta(1)$,
\end{enumerate}
where $\mathcal{U}^{(t)}$ and $\mathcal{V}_{i}^{(t)}$ is defined for $i- $th instance $(\boldsymbol{X}_i, y_i)$ and $t-$th iteration as the same in Definition \ref{def:feature_learning}. Consequently, the clean test error and robust training error are both smaller than $o(1)$, but the robust test error is at least $\frac{1}{2} - o(1)$, which means that $f_{\boldsymbol{W}^{(T)}}$ is a CGRO classifier.
\end{theorem}

\begin{remark}
    Theorem \ref{thm:main} states that, during adversarial training, the neural network partially learns the true feature of objective classes and exactly memorizes the spurious features depending on specific training data, which causes that the network learner is able to correctly classify clean data by partial meaningful signal (\textit{clean generalization}), but fails to classify the unseen perturbed data since it leverages only data-wise random noise to memorize training adversarial examples (\textit{robust overfitting}). We also conduct numerical simulation experiments to confirm our results in Section \ref{sec:experiments}.
\end{remark}

\begin{figure*}[ht]
\vskip 0.2in
\centering
\centerline{\subfloat[]{\includegraphics[width=0.65\columnwidth]{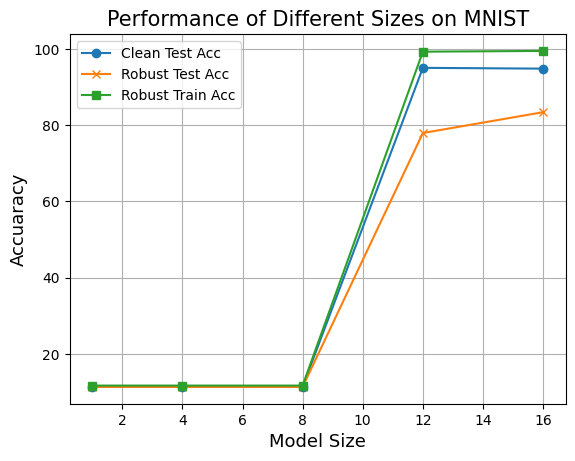}}\subfloat[]{\includegraphics[width=0.65\columnwidth]{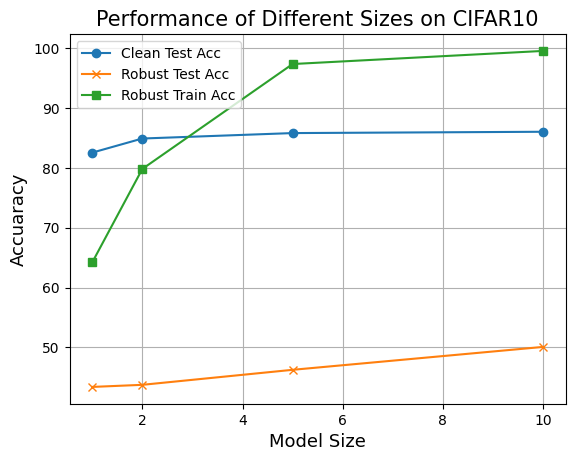}}\subfloat[]{\includegraphics[width=0.65\columnwidth]{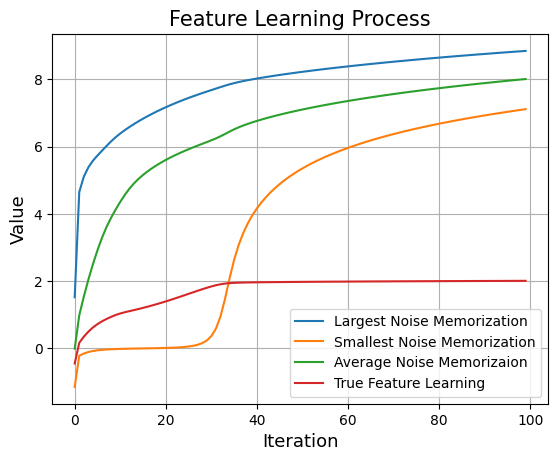}}}
\caption{(a)(b): The effect of network capacity on the performance of the network. We trained the networks of varying capacity on MNIST (a) and CIFAR10 (b); (c): Feature learning process of the two-layer convolutional network on the structured data.}
\label{figfig}
\vskip -0.2in
\end{figure*}

\subsection{Analysis of Learning Process}

Next, we provide a proof sketch of Theorem \ref{thm:main}. To obtain a detailed analysis of learning process, we consider the following objects that can be viewed as weight-wise version of $\mathcal{U}^{(t)}$ and $\mathcal{V}_i^{(t)}$. We define $u^{(t)}$ and $v_{i,j}^{(t)}$ as
\begin{equation}
\begin{cases}
    u^{(t)} := \mathop{\max}\limits_{r\in [m]}\left\langle \boldsymbol{w}_{r}^{(t)}, \boldsymbol{w}^{*}\right\rangle &(\textit{Signal Component}), \\
    v_{i,j}^{(t)} := \mathop{\max}\limits_{r\in [m]}\left\langle \boldsymbol{w}_{r}^{(t)}, y_i\boldsymbol{X}_i[j]\right\rangle & (\textit{Noise Component}), \nonumber
\end{cases}
\end{equation}
for each $r \in [m]$, $i \in [N]$ and $j \in [P]\setminus \operatorname{signal}(\boldsymbol{X}_i)$.





We then propose a three-stage analysis technique to decouple the complicated feature learning process as follows.

{\noindent \textbf{Phase I:}} At the beginning, the signal component of lottery tickets winner $u^{(t)}$ increases quadratically (Lemma \ref{lem:lower_signal}).


\begin{lemma}
\label{lem:lower_signal}

(Lower Bound of Signal Component Growth) For each $r \in [m]$ and any $t \ge 0$, the signal component grows as
\begin{equation}
    u^{(t+1)} \ge u^{(t)} + \Theta(\eta \alpha^{q}) \left(u^{(t)}\right)^{q-1} \psi\left(\alpha^{q} \mathcal{U}^{(t)}\right) , \nonumber
\end{equation}
where we use $\psi(\cdot)$ to denote the negative sigmoid function $\psi(z) = \frac{1}{1 + e^{z}}$ as well as Lemma \ref{lem:upper_signal},\ref{lem:lower_noise}.
\end{lemma}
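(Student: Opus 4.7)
The plan is to compute the gradient descent update for $\boldsymbol{w}_r$ by differentiating (\ref{ref:objective}), project the result onto the signal direction $\boldsymbol{w}^*$, and use the patch data structure to isolate the signal patch as the sole driver of growth.

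Using $\tfrac{d}{dz}\log(1+e^{-z}) = -\psi(z)$ together with $\nabla_{\boldsymbol{w}_r} f_{\boldsymbol{W}}(\boldsymbol{X}) = 3\sum_{j=1}^{P}\langle \boldsymbol{w}_r, \boldsymbol{X}[j]\rangle^{2} \boldsymbol{X}[j]$, the update (\ref{ref:training}) reads
$$
\boldsymbol{w}_r^{(t+1)} - \boldsymbol{w}_r^{(t)} = \frac{3\eta}{N}\sum_{i=1}^{N}\sum_{j=1}^{P}\Big[ (1-\lambda)\psi\big(y_i f_{\boldsymbol{W}^{(t)}}(\boldsymbol{X}_i)\big) y_i \langle \boldsymbol{w}_r^{(t)}, \boldsymbol{X}_i[j]\rangle^{2} \boldsymbol{X}_i[j] + \lambda\,(\text{adv term})\Big].
$$
Taking the inner product with $\boldsymbol{w}^*$, every noise patch contributes exactly zero because the noise distribution is supported on the orthogonal complement of $\boldsymbol{w}^*$, so $\langle \boldsymbol{X}_i[j],\boldsymbol{w}^*\rangle = 0$ almost surely for $j \neq \operatorname{signal}(\boldsymbol{X}_i)$. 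On the signal patch, $\boldsymbol{X}_i[\operatorname{signal}(\boldsymbol{X}_i)] = \alpha y_i\boldsymbol{w}^*$ produces the factor $3\alpha^{3} y_i (u_r^{(t)})^{2}$, and the outer $y_i$ absorbs the remaining sign via $y_i^{2}=1$. Combined with the decomposition $y_i f_{\boldsymbol{W}^{(t)}}(\boldsymbol{X}_i) = \alpha^{3}\mathcal{U}^{(t)} + \mathcal{V}_i^{(t)}$ derived in Section \ref{framework}, the clean contribution takes closed form.

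For the GTA adversarial example, since $g(\boldsymbol{X}) = \langle \boldsymbol{w}^*, \boldsymbol{X}[\operatorname{signal}(\boldsymbol{X})]\rangle$ under Parameterization \ref{ref:param}, the attack perturbs only the signal patch, giving $\boldsymbol{X}_i^{\mathrm{adv}}[\operatorname{signal}(\boldsymbol{X}_i)] = (1-\gamma)\alpha y_i\boldsymbol{w}^*$ while every noise patch is left unchanged. The adversarial contribution therefore has the same sign as the clean one, scaled by $(1-\gamma)^{3}$, and is non-negative. Discarding this non-negative piece yields the one-sided bound
$$
u_r^{(t+1)} \ge u_r^{(t)} + \frac{3(1-\lambda)\eta\alpha^{3}(u_r^{(t)})^{2}}{N}\sum_{i=1}^{N}\psi\big(\alpha^{3}\mathcal{U}^{(t)} + \mathcal{V}_i^{(t)}\big).
$$

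To match the clean form $\psi\big(\alpha^{3}\sum_k (u_k^{(t)})^{3}\big)$ stated in the lemma, I would invoke the elementary fact $\psi(z+c)/\psi(z) = \Theta(1)$ whenever $|c| = O(1)$, applied pointwise with $c = \mathcal{V}_i^{(t)}$, and absorb the resulting constant together with the $(1-\lambda)$ prefactor into the $\Theta(\eta\alpha^{3})$ in the lemma's statement. The main obstacle is the a priori bound $|\mathcal{V}_i^{(t)}| = O(1)$ uniformly in $i \in [N]$ and $t \le T$: this cannot be closed inside Lemma \ref{lem:lower_signal} alone, and must instead be propagated through a coupled induction involving the upper-signal estimate and the noise-component dynamics (Lemmas \ref{lem:upper_signal} and \ref{lem:lower_noise}).
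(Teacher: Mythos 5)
Your proposal is correct and follows essentially the same route as the paper's proof: differentiate the objective, project the update onto $\boldsymbol{w}^*$ using the orthogonality of the noise patches, drop the non-negative adversarial contribution, and absorb $\psi(\alpha^{3}\mathcal{U}^{(t)}+\mathcal{V}_i^{(t)})$ into $\Theta(1)\psi(\alpha^{3}\mathcal{U}^{(t)})$ via the uniform bound on $\mathcal{V}_i^{(t)}$. Your closing remark about the coupled induction is exactly how the paper handles it, via Hypothesis \ref{hyp} assumed for $t<T$ and verified in Lemma \ref{lem:hyp_proof}.
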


Lemma \ref{lem:lower_signal} manifests that the signal component increases quadratically at initialization. Therefore, we know that, after $T_{0} = \Theta\left(\eta^{-1} \alpha^{-q}\sigma_{0}^{-1}\right)$ iterations, the signal component $u^{(T_{0})}$ attains the order $\Tilde{\Omega}(\alpha^{-1})$, which implies the model learns partial true feature at this point. 


{\noindent \textbf{Phase II:}} Once the signal component $u^{(t)}$ attains the order $\tilde{\Omega}(\alpha^{-1})$, the growth of signal component nearly stop updating since that the increment of signal component is now mostly dominated by the noise component (Lemma \ref{lem:upper_signal}).



\begin{lemma}
\label{lem:upper_signal}

(Upper Bound of Signal Component Growth) For $T_{0} = \Theta\left(\eta^{-1} \alpha^{-q}\sigma_{0}^{-1}\right)$ and any $t \in [T_{0}, T]$, the signal component is upper bounded as
\begin{equation}
\begin{aligned}
    u^{(t)} \le & \Tilde{O}\left(\frac{\eta (\alpha - \delta)^{q}}{N}\right) \sum_{s = T_{0}}^{t - 1} \sum_{i=1}^{N}  \psi\left((\alpha - \delta)^{q} \mathcal{U}^{(s)} + \mathcal{V}_{i}^{(s)}\right) \\
    & + \Tilde{O}(\alpha^{-1}). \nonumber
\end{aligned}
\end{equation}
\end{lemma}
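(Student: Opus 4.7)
The plan is to derive an exact gradient update formula for $u_r^{(t+1)} - u_r^{(t)}$ by projecting the GD step onto the signal direction $\boldsymbol{w}^*$, and then split the analysis into a ``warm-up'' phase $t \le T_0$ that contributes the additive $\Tilde{O}(\alpha^{-1})$ and a ``saturated'' phase $t \ge T_0$ in which only the adversarial component of the update contributes non-trivially. Because noise patches are drawn from $\mathcal{N}(0,(\mathbf{I}_d - \boldsymbol{w}^*\boldsymbol{w}^{*\top})\sigma^2)$ and are thus orthogonal to $\boldsymbol{w}^*$ almost surely, only the signal patch contributes to $\langle \nabla_{\boldsymbol{w}_r} f_{\boldsymbol{W}}(\boldsymbol{X}_i), \boldsymbol{w}^* \rangle$, giving the factor $3\alpha^3 y_i (u_r^{(t)})^2$. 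Since the target classifier $g$ used in (\ref{ref:attack}) is linear in the signal patch, the GTA attack simply rescales that patch by $(1-\gamma)$, producing the analogous factor $3(1-\gamma)^3 \alpha^3 y_i (u_r^{(t)})^2$ for the adversarial example. Applying the chain rule to the logistic loss yields the clean update
\begin{equation*}
u_r^{(t+1)} = u_r^{(t)} + \frac{3 \eta \alpha^3 (u_r^{(t)})^2}{N} \sum_{i=1}^N \Bigl[(1-\lambda)\, \psi(\alpha^3 \mathcal{U}^{(t)} + \mathcal{V}_i^{(t)}) + \lambda (1-\gamma)^3 \psi\bigl((1-\gamma)^3 \alpha^3 \mathcal{U}^{(t)} + \mathcal{V}_i^{(t)}\bigr)\Bigr].
\end{equation*}

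For $t \le T_0$, we have $\alpha^3 \mathcal{U}^{(t)} = o(1)$ and, with a companion control on $\mathcal{V}_i^{(t)}$, both $\psi$-values remain $\Theta(1)$, so the update reduces to $u_r^{(t+1)} \le u_r^{(t)} + C \eta \alpha^3 (u_r^{(t)})^2$. This tensor-power iteration has continuous-time analogue $\dot u = C\eta\alpha^3 u^2$ with solution $u(t)=u(0)/(1-C\eta\alpha^3 u(0) t)$. Starting from $u_r^{(0)} = \Tilde{O}(\sigma_0)$ and running for $T_0 = \Theta(1/(\eta\alpha^3\sigma_0))$ iterations yields the bound $\max_r u_r^{(T_0)} \le \Tilde{O}(\alpha^{-1})$, which accounts for the first summand in the lemma.

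For $t \ge T_0$, the key observation is that the clean $\psi$-term becomes super-polynomially small and can be absorbed into the $\Tilde{O}(\alpha^{-1})$ remainder. Since the RHS of the update is nonnegative, $\mathcal{U}^{(s)}$ is monotone non-decreasing; combined with Lemma \ref{lem:lower_signal} propagated to $s = T_0$, this gives $\alpha^3 \mathcal{U}^{(s)} \ge \operatorname{polylog}(d)$ for all $s \ge T_0$. Hence $\psi(\alpha^3 \mathcal{U}^{(s)} + \mathcal{V}_i^{(s)}) \le e^{-\operatorname{polylog}(d)} = d^{-\omega(1)}$, whose cumulative contribution over $T \cdot N = \operatorname{poly}(d)$ summands (carrying a prefactor $\eta\alpha^3(u_r^{(s)})^2/N$) is $o(\alpha^{-1})$. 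What remains is only the adversarial piece; telescoping the inequality from $T_0$ to $t-1$ and using the crude bound $(u_r^{(s)})^2 \le \Tilde{O}(1)$ together with $\lambda < 1$ absorbs those factors into the $\Tilde{O}(\cdot)$, producing exactly the claimed right-hand side.

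The main obstacle is a circularity: the desired upper bound on $u_r^{(t)}$ enters its own proof in two places, namely via the $(u_r^{(s)})^2$ coefficient and via the requirement that $\mathcal{U}^{(s)}$ stay above the saturation threshold that kills the clean $\psi$-term. I would resolve this by a simultaneous strong induction over $s \le t$ that couples the present upper bound with the lower bounds supplied by Lemma \ref{lem:lower_signal} and its companion noise-component bound, taking as base case $s = T_0$ where $\psi$ transitions smoothly from $\Theta(1)$ to $o(1)$. A secondary technical subtlety is that the ODE approximation in the warm-up phase must be uniform across the $m = \operatorname{polylog}(d)$ coordinates, which is handled by a standard Gaussian tail union bound over the initialization $\boldsymbol{w}_r^{(0)} \sim \mathcal{N}(0,\sigma_0^2\mathbf{I}_d)$.
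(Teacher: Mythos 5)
Your proposal is correct and follows essentially the same route as the paper's proof: project the gradient step onto $\boldsymbol{w}^*$, use the tensor-power/warm-up analysis to get $\max_r u_r^{(T_0)} \le \Tilde{O}(\alpha^{-1})$, exploit the saturation of the clean sigmoid term once $\alpha^3\mathcal{U}^{(s)} = \Tilde{\Omega}(1)$ so that its cumulative contribution over $\operatorname{poly}(d)$ iterations is negligible, and telescope the remaining adversarial term; the circularity you identify is exactly what the paper handles with its induction Hypothesis \ref{hyp}, verified a posteriori in Lemma \ref{lem:hyp_proof}.
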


Lemma \ref{lem:upper_signal} shows that, after partial true feature learning, the increment of signal component is mostly dominated by the noise component $\mathcal{V}_{i}^{(t)}$, which implies that the growth of signal component will converge when $\mathcal{V}_{i}^{(t)} = \Omega(1)$.

{\noindent \textbf{Phase III:}} After that, by the quadratic increment of noise component (Lemma \ref{lem:lower_noise}), the total noise $\mathcal{V}_{i}^{(t)}$ eventually attains the order $\Omega(1)$, which implies the model memorizes the spurious feature (data-wise noise) in final.



\begin{lemma}
\label{lem:lower_noise}

(Lower Bound of Noise Component Growth) For each $i \in [N]$, $r \in [m]$ and $j \in [P]\setminus \operatorname{signal}(\boldsymbol{X}_i)$ and any $t \ge 1$, the noise component grows as
\begin{equation}
\begin{aligned}
    v_{i,j}^{(t)} \ge &v_{i,j}^{(0)} + \Theta\left(\frac{\eta\sigma_p^{2}d}{N}\right) \sum_{s=0}^{t-1} \psi(\mathcal{V}_{i}^{(s)}) \left(v_{i,j}^{(s)}\right)^{q-1} \\
    &- \Tilde{O}(P\sigma_p^{2}\alpha^{-1}\sqrt{d}) . \nonumber
\end{aligned}
\end{equation}
\end{lemma}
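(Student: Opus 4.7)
The plan is to derive the one-step update of $v_{i,j,r}$ in closed form and then telescope over $s=0,\ldots,t-1$. Differentiating the objective (\ref{ref:objective}) via the chain rule on the logistic loss and the cubic activation yields
\begin{equation*}
-\nabla_{\boldsymbol{w}_r}\widehat{\mathcal{L}}_{\mathrm{adv}}=\frac{3}{N}\sum_{i'=1}^N y_{i'}\sum_{j'=1}^P\!\left[(1{-}\lambda)\psi_{i'}^{(t)}\langle\boldsymbol{w}_r^{(t)},\boldsymbol{X}_{i'}[j']\rangle^{2}\boldsymbol{X}_{i'}[j']+\lambda\psi_{i',\mathrm{adv}}^{(t)}\langle\boldsymbol{w}_r^{(t)},\boldsymbol{X}_{i'}^{\mathrm{adv}}[j']\rangle^{2}\boldsymbol{X}_{i'}^{\mathrm{adv}}[j']\right],
\end{equation*}
where $\psi_{i'}^{(t)}:=\psi(y_{i'}f_{\boldsymbol{W}}(\boldsymbol{X}_{i'}))$ and $\psi_{i',\mathrm{adv}}^{(t)}:=\psi(y_{i'}f_{\boldsymbol{W}}(\boldsymbol{X}_{i'}^{\mathrm{adv}}))$. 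Taking $\eta$ times the inner product of this vector with $y_i\boldsymbol{X}_i[j]$ expresses $v_{i,j,r}^{(t+1)}-v_{i,j,r}^{(t)}$ as a double sum over $(i',j')$ in which every summand carries an extra factor $\langle\boldsymbol{X}_{i'}[j'],\boldsymbol{X}_i[j]\rangle$ (or the analogous adversarial inner product). I would then split this sum into the diagonal contribution at $(i',j')=(i,j)$ and an off-diagonal remainder.

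For the diagonal piece, the key observation is that the GTA rule (\ref{ref:attack}) uses the linear target $g(\boldsymbol{X})=\langle\boldsymbol{w}^{*},\boldsymbol{X}[\mathrm{signal}(\boldsymbol{X})]\rangle$, whose gradient is supported only on the signal patch; hence $\boldsymbol{X}_i^{\mathrm{adv}}[j]=\boldsymbol{X}_i[j]$ for every noise patch $j$, so the clean and adversarial self-contributions combine cleanly. Combined with the Gaussian concentration $\|\boldsymbol{X}_i[j]\|_2^{2}=\Theta(\sigma^{2}d)$ and with the invariant $\alpha^{3}\mathcal{U}^{(s)}=O(1)$ throughout training (small at initialization by the choice of $\sigma_{0}$, and saturated thereafter by Lemma~\ref{lem:upper_signal}), both $\psi_i^{(s)}$ and $\psi_{i,\mathrm{adv}}^{(s)}$ are comparable to $\psi(\mathcal{V}_i^{(s)})$ up to a constant. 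Consequently the diagonal term contributes exactly $\Theta(\eta\sigma^{2}d/N)\,\psi(\mathcal{V}_i^{(s)})\,(v_{i,j,r}^{(s)})^{2}$ per iteration, which telescopes into the main positive sum of the lemma.

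The hard part is to show that the off-diagonal contribution accumulates to at most the time-independent error $\Tilde{O}(P\sigma^{2}\alpha^{-1}\sqrt{d})$. Signal-patch cross terms vanish identically: whenever $j'=\mathrm{signal}(\boldsymbol{X}_{i'})$ the direction is $\pm\alpha\boldsymbol{w}^{*}$, while $\boldsymbol{X}_i[j]$ lies in the orthogonal complement of $\boldsymbol{w}^{*}$ by construction, and the GTA perturbation only rescales the $\boldsymbol{w}^{*}$ component. For distinct noise pairs $(i',j')\neq(i,j)$, Hanson--Wright-type concentration gives $|\langle\boldsymbol{X}_{i'}[j'],\boldsymbol{X}_i[j]\rangle|=\Tilde{O}(\sigma^{2}\sqrt{d})$ with high probability over the sample. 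Unrolling the gradient recursion one can write $\boldsymbol{w}_r^{(t)}-\boldsymbol{w}_r^{(0)}=\sum_{i',j'}c_{i',j',r}^{(t)}\boldsymbol{X}_{i'}[j']+(\text{signal part})$, so the cumulative off-diagonal contribution is at most $\Tilde{O}(\sigma^{2}\sqrt{d})\sum_{(i',j')}|c_{i',j',r}^{(t)}|$. The main obstacle is to maintain by simultaneous induction (coupled with the signal saturation of Lemma~\ref{lem:upper_signal} and with the self-dominance established in the diagonal step) the invariant $\sum_{i',j'}|c_{i',j',r}^{(t)}|=\Tilde{O}(P\alpha^{-1})$; the $\alpha^{-1}$ factor traces back to the fact that $\psi$-weighted updates are frozen once either $\alpha^{3}\mathcal{U}$ or $\mathcal{V}_i$ reaches the $\Theta(1)$ scale. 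Plugging this invariant into the off-diagonal estimate and telescoping yields exactly the claimed $-\Tilde{O}(P\sigma^{2}\alpha^{-1}\sqrt{d})$ error, completing the lemma.
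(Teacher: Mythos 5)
Your proposal follows essentially the same route as the paper's proof: project the gradient step onto $y_i\boldsymbol{X}_i[j]$, isolate the diagonal self-correlation term (using $\boldsymbol{X}_{i'}^{\mathrm{adv}}[j']=\boldsymbol{X}_{i'}[j']$ on noise patches, $\|\boldsymbol{X}_i[j]\|_2^2=\Theta(\sigma^2 d)$, and $\Tilde{\psi}_i^{(s)}=\Theta(1)\psi(\mathcal{V}_i^{(s)})$ from the induction hypothesis), kill signal cross-terms by orthogonality to $\boldsymbol{w}^*$, and bound noise cross-terms by $\Tilde{O}(\sigma^2\sqrt{d})$. The only organizational difference is that where you posit the coefficient invariant $\sum_{i',j'}|c^{(t)}_{i',j',r}|=\Tilde{O}(P\alpha^{-1})$, the paper runs a direct induction on a two-sided per-step bound (its Lemma C-level estimate) and controls the accumulated cross-term error by telescoping $\frac{1-\lambda}{N}\sum_i\psi(y_if_{\boldsymbol{W}^{(s)}}(\boldsymbol{X}_i))$ against the increments of $\max_r u_r^{(s)}$ (capped at $\Tilde{O}(\alpha^{-1})$) and separately bounding the cumulative adversarial derivative sum --- precisely the mechanism you gesture at with ``updates are frozen once the $\Theta(1)$ scale is reached.''
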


The practical implication of Lemma \ref{lem:lower_noise} is two-fold. First, by the quadratic increment of noise component, we derive that, after $T_{1} = \Theta\left(N\eta^{-1}\sigma_{0}^{-1}\sigma_p^{-3}d^{-\frac{3}{2}}\right)$ iterations, the total noise memorization $\mathcal{V}_{i}^{(T)}$ attains the order $\Omega(1)$, which suggests that the model is able to robustly classify adversarial examples by memorizing the data-wise noise. Second, combined with Lemma \ref{lem:upper_signal}, the signal component $u^{(t)}$ will maintain the order $\Theta(\alpha^{-1})$, which immediately implies the main conclusion of Theorem \ref{thm:main}. And the full detailed proof of Theorem \ref{thm:main} can be see in Appendix \ref{appendix:CGRO_thm}.

\section{Experiments}
\label{sec:experiments}

\begin{table*}[t]
  \setlength{\abovecaptionskip}{0.2cm}  
  \setlength{\belowcaptionskip}{0.2cm} 
  \caption{Performance of Models with Different Sizes}
  \label{table}
  \centering
    \begin{tabular}{c|ccccc|cccc}
    \toprule
    \textbf{Dataset} & \multicolumn{5}{c|}{\textbf{MNIST}}   & \multicolumn{4}{c}{\textbf{CIFAR10}} \\
    \midrule
    \textbf{Model Size Factor} & 1     & 2     & 8     & 12    & 16    & 1     & 2     & 5     & 10 \\
    \midrule
    \textbf{Clean Test Acc} & 11.35 & 11.35 & 11.35 & 95.06 & 94.85 & 82.56 & 84.92 & 85.83 & 86.05 \\
    \textbf{Robust Test Acc} & 11.35 & 11.35 & 11.35 & 77.96 & 83.43 & 43.39 & 43.74 & 46.25 & 50.08 \\
    \textbf{Robust Train Acc} & 11.70  & 11.70  & 11.70  & 99.3  & 99.5  & 64.19 & 79.82 & 97.37 & 99.57 \\
    \bottomrule
    \end{tabular}%
\end{table*}%

In this section, we empirically verify our theoretical results in Section \ref{representation} and Section \ref{dynamics} by experiments in real-world image-recognition datasets and synthetic structured data.

\subsection{Effect of Different Model Sizes}

\textbf{Experiment Settings.} For the MNIST dataset, we consider a simple convolutional network, LeNet-5 \citep{lecun1998gradient}, and study how its performance changes as we increases the size of network (i.e. we expand the number of convolutional filters and the size of the fully connected layer to the size number multiple of the initial, where the size numbers are $1, 4, 8, 12, 16$). The original convolutional network has a convolutional layer with $1$ filters, followed by another
convolutional layer with $2$ filters, and a fully connected hidden layer with 32 units. Convolutional layers are followed by $2\times 2$ max-pooling layers. For the CIFAR10 dataset, we apply WideResNet-34 \citep{zagoruyko2016wide} with different widen factors $1,2,5,10$. We use the standared projected gradient descent (PGD) \citep{madry2017towards} to train the network by adversarial training. We choose the classical $\ell_{\infty}$-perturbation with radius $0.3$ for MNIST and $8/255$ for CIFAR10. All models are trained for 100 epoches on MNIST and 200 epoches on CIFAR10.

\textbf{Experiment Results.}  We report our results about the performance (including the clean test accuracy, robust test accuracy and robust training accuracy) of models with different sizes in Table \ref{table} and Figure \ref{figfig} (a)(b). It shows that when the model size becomes larger, first the robust training loss decreases but the robust generalization gap remains large, and then when the model gets even larger, the robust generalization gap gradually decreases, and we also find that, in the small size case (see LeNet with the size number $1,4,8$), adversarial training converges to a trivia solution that always predicts a fixed class, while it can learn an accurate clean classifier through the standard training, which corresponds to the theoretical results in Theorem \ref{thm:rep_upper} and Theorem \ref{thm:rep_lower}.

\subsection{Synthetic Structured Data}

\textbf{Experiment Settings.} Here we generate synthetic
data exactly following Definition \ref{def:patch_data} and apply the two-layer convolutional network in Definition \ref{def:CNN}. We train the network by the adversarial training algorithm we mentioned in Definition \ref{def:at}. We choose the hyperparameters that we need as: $d = 100, P = 2, \alpha = 10, \sigma_p = 1, \sigma_0 = 0.01, q =3, N = 20, m = 10, p = 2, \delta = 10, \lambda = 0.9, \eta = 0.1, T = 100$, which is a feasible one under Assumption \ref{ass:hyper}.  We characterize the true feature learning and noise memorization via calculating $\mathcal{U}^{(t)}$ and the smallest/largest/average of $\{\mathcal{V}_i^{(t)}\}_{i\in [N]}$. We calculate the robust train and test accuracy of the model by using the standard PGD attack.
\begin{table}[h]
\caption{Performance on Synthetic Data}
\label{table_2}
  \centering
    \begin{tabular}{c|c|c}
          & \textbf{Train} & \textbf{test} \\
    \midrule
    \textbf{Clean Acc} & 100.0   & 98.5 \\
    \textbf{Robust Acc} & 100.0   & 17.5 \\
    \bottomrule
    \end{tabular}%
\end{table}%

\textbf{Experiment Results.} We plot the dynamics of true feature learning and noise memorization in Figure \ref{figfig}
(c). It is clear that a three-stage phase transitions happen during adversarial training , which is consistent with our theoretical analysis of learning process (Lemma \ref{lem:lower_signal}, Lemma \ref{lem:upper_signal} and Lemma \ref{lem:lower_noise}), and finally the model partially learns true feature but exactly memorizes all training data (Theorem \ref{thm:main}). The performance of the model is presented in Table \ref{table_2}, which shows that the network converges to a CGRO classifier eventually.

\section{Conclusion}
\label{conclusion}

In this paper, we study the CGRO phenomenon in adversarial training and present theoretical explanations: from the perspective of representation complexity, we prove that the CGRO classifier is efficient to achieve by leveraging robust memorization regarding the training data, while robust generalization requires excessively large model capacity in worst case, which may lead adversarial training to the CGRO regime; from the perspective of training dynamics, we propose a three-stage analysis technique to analyze the feature learning process of adversarial training under our structured-data framework, and it shows that two-layer neural network trained by adversarial training provably learns the partial true feature but memorizes the random noise from training data, which thereby causes the CGRO regime. On the empirical side, we confirm our theoretical findings above by real-world vision datasets and synthetic data simulation. 




\section*{Acknowledgements}

Binghui Li is partially supported by the Elite Ph.D. Program in Applied Mathematics for PhD Candidates in Peking University. We thank anonymous reviewers for their valuable suggestions.

\section*{Impact Statement}

This paper presents work whose goal is to advance the field of 
Machine Learning. There are many potential societal consequences 
of our work, none which we feel must be specifically highlighted here.

\bibliography{example_paper}
\bibliographystyle{icml2025}

\newpage
\appendix
\onecolumn
\begin{appendix}

\section{Additional Experiments Regarding Robust Memorization}

\begin{figure*}[t]
\centering
    \subfloat[\text{MNIST: Grad}]{\includegraphics[scale = 0.25]{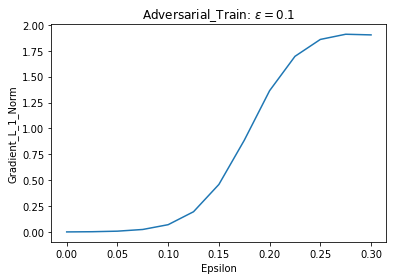}}
    \subfloat[\text{MNIST: Change}]{\includegraphics[scale = 0.25]{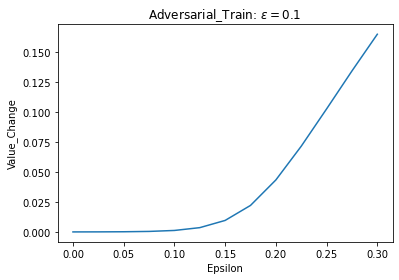}}
    \subfloat[\text{CIFAR10: Grad}]{\includegraphics[scale = 0.25]{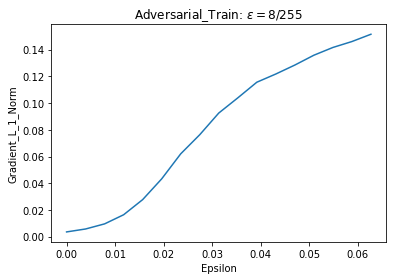}}
    \subfloat[\text{CIFAR10: Change}]{\includegraphics[scale = 0.25]{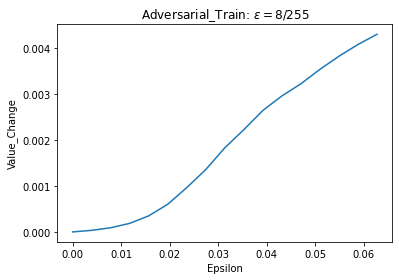}}
    \\
     \subfloat[\text{MNIST: Train}]{\includegraphics[scale = 0.25]{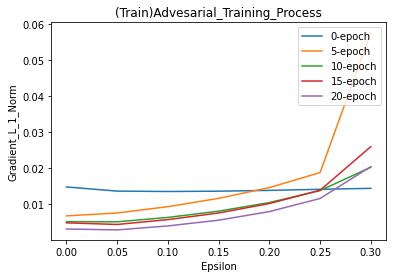}}
    \subfloat[\text{MNIST: Test}]{\includegraphics[scale = 0.25]{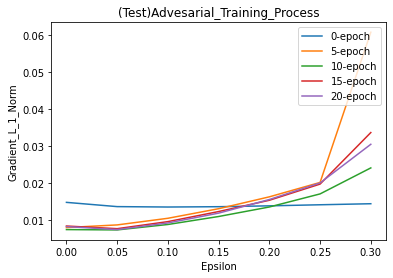}}
    \subfloat[\text{CIFAR10: Train}]{\includegraphics[scale = 0.25]{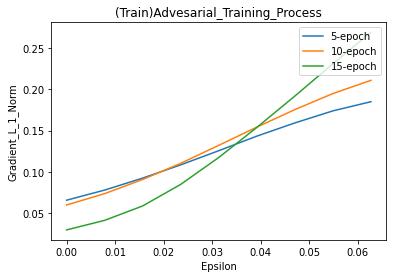}}
    \subfloat[\text{CIFAR10: Test}]{\includegraphics[scale = 0.25]{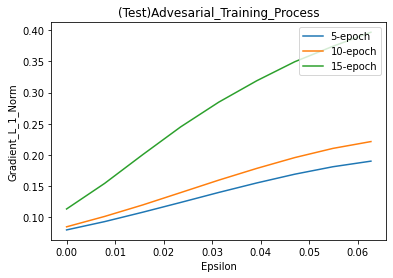}}
\caption{
Experiment Results ($\ell_{\infty}$ Perturbation Radius $\epsilon_{0}=0.1$ on MNIST, $=8/255$ on CIFAR10).}
\label{fig:robust_memorization}
\end{figure*}

In this section, we demonstrate that adversarial training converges to similarities of the construction $f_{\mathcal{S}}$ of Lemma \ref{lem:f_S} on real image datasets, which results in CGRO. In fact, we need to verify models trained by adversarial training tend to memorize data by approximating local robust indicators on training data. 

Concretely, for given loss $\mathcal{L}(\cdot,\cdot)$, instance $(\boldsymbol{X},y)$ and model $f$, we use two measurements, maximum gradient norm within the neighborhood of training data, $$\max_{\|\boldsymbol{\xi}\|_{\infty}\leq \delta}\|\nabla_{\boldsymbol{X}}\mathcal{L}(f(\boldsymbol{X}+\boldsymbol{\xi}),y)\|_{1},$$ and maximum loss function value change $$\max_{\|\boldsymbol{\xi}\|_{\infty}\leq\delta}[\mathcal{L}(f(\boldsymbol{X}+\boldsymbol{\xi}),y)-\mathcal{L}(f(\boldsymbol{X}),y)]$$. 

The former measures the $\delta-$local flatness on $(\boldsymbol{X},y)$, and the latter measures $\delta-$local adversarial robustness on $(\boldsymbol{X},y)$, which both describe the key information of loss landscape over input.

{\noindent \textbf{Experiment Settings.}} In numerical experiments, we mainly focus on two common real-image datasets, MNIST and CIFAR10. During adversarial training, we use cyclical learning rates and mixed precision technique \citep{wong2020fast}. For the MNIST dataset, we use a LeNet architecture and train total 20 epochs. For the CIFAR10 dataset, we use a Resnet architecture and train total 15 epochs.

{\noindent \textbf{Numerical Results.}} First, we apply the adversarial training method to train models by a fixed perturbation radius $\epsilon_{0}$, and then we compute empirical average of maximum gradient norm and maximum loss change on training data within different perturbation radius $\epsilon$. We can see numerical results in Figure \ref{fig:robust_memorization} (a$\sim$d), and it shows that loss landscape has flatness within the training radius, but is very sharp outside, which practically demonstrates our conjecture on real image datasets.

{\noindent \textbf{Learning Process.}} We also focus on the dynamics of loss landscape over input during the adversarial learning process. Thus, we compute empirical average of maximum gradient norm within different perturbation radius $\epsilon$ and in different training epochs. The numerical results are plotted in Figure \ref{fig:robust_memorization} (e$\sim$h). Both on MNIST and CIFAR10, with epochs increasing, it is observed that the training curve descents within training perturbation radius, which implies models learn the local robust indicators to robustly memorize training data. However, the test curve of CIFAR10 ascents within training radius instead, which is consistent with our theoretical analysis in Section \ref{dynamics}.

{\noindent \textbf{Robust Generalization Bound.}} Moreover, we prove a robust generalization bound based on \textit{global flatness} of loss landscape (see in Appendix \ref{generalization}). We show that, while adversarial training achieves local flatness by robust memorization, the model lacks global flatness, which causes robust overfitting.



\newpage

\section{Robust Generalization Bound Based on Global Flatness}
\label{generalization}

In this section, we prove a novel robust generalization bound that mainly depends on global flatness of loss landscape. We consider $\ell_{p}-$adversarial robustness with perturbation radius $\delta$ and we use $\mathcal{L}_{\text{clean}}$, $\mathcal{L}_{\text{adv}}(f)$ and $\widehat{\mathcal{L}}_{\text{adv}}(f)$ to denote the clean test risk, the adversarial test risk and the adversarial empirical risk w.r.t. the model $f$, respectively. We also assume $\frac{1}{p}+\frac{1}{q} = 1$ for the next results.

\begin{theorem}
\label{robust_generalization_bound}

(Robust Generalization Bound) Let $\mathcal{D}$ be the underlying distribution with a smooth density function, and $N-$sample training dataset $\mathcal{S} = \{(\boldsymbol{X}_1,y_1),(\boldsymbol{X}_2,y_2),\dots,(\boldsymbol{X}_N,y_N)\}$ is i.i.d. drawn from $\mathcal{D}$. Then, with high probability, it holds that,
\begin{equation}
\begin{aligned}
    \mathcal{L}_{\text{adv}}(f) \le \widehat{\mathcal{L}}_{\text{adv}}(f) +  N^{-\frac{1}{D+2}}O\left(
    \underbrace{\mathbb{E}_{(\boldsymbol{X},y) \sim \mathcal{D}}\left[\max_{\|\boldsymbol{\xi}\|_{p}\leq \delta}\|\nabla_{\boldsymbol{X}}\mathcal{L}(f(\boldsymbol{X}+\boldsymbol{\xi}),y)\|_{q}\right]}_{\text{global flatness}} \right) . \nonumber
\end{aligned}
\end{equation}
\end{theorem}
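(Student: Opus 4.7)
The plan is to partition the input domain at a scale $r$ so that within each cell the adversarial loss
\$
g(X,y) := \max_{\|\xi\|_p \le \delta} \mathcal{L}(f(X+\xi), y)
\$
is close to its value at a representative training point, while across cells the empirical cell frequencies concentrate on their true probabilities. Balancing these two sources of error will produce the $N^{-1/(D+2)}$ rate. Note that $\mathcal{L}_{\text{adv}}(f) = \mathbb{E}_\mathcal{D}[g]$ and $\widehat{\mathcal{L}}_{\text{adv}}(f) = \frac{1}{N}\sum_i g(X_i, y_i)$, and write $F$ for the global-flatness quantity in the statement.

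The first ingredient is a local Lipschitz bound. Letting $\xi^{*}$ denote the adversarial maximizer at $X'$ and using $g(X,y) \ge \mathcal{L}(f(X+\xi^{*}),y)$ (since $\xi^{*}$ is a legitimate perturbation of $X$ too), a mean-value expansion followed by H\"older's inequality gives, for $\|X'-X\|_p$ small,
\$
|g(X',y)-g(X,y)| \le \|X'-X\|_p \cdot \max_{\|\xi\|_p \le \delta+\|X'-X\|_p} \|\nabla_X \mathcal{L}(f(X+\xi),y)\|_q.
\$
Hence the pointwise Lipschitz constant of $g(\cdot,y)$ converges to $L(X,y) := \max_{\|\xi\|_p \le \delta}\|\nabla_X\mathcal{L}(f(X+\xi),y)\|_q$, whose expectation under $\mathcal{D}$ is exactly $F$.

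Next I would partition the essential support (inside $[0,1]^D$, by smoothness of the density) into $M = O(r^{-D})$ cells $\{B_k\}$ of diameter $O(r)$, stratified by the discrete label. Writing $p_k := \mathbb{P}_\mathcal{D}(B_k)$, $\hat p_k := \frac{1}{N}\#\{i : (X_i,y_i)\in B_k\}$, and picking a representative $(X_{i_k}, y_{i_k}) \in B_k$ for each non-empty cell, I decompose
\$
\mathcal{L}_{\text{adv}}(f)-\widehat{\mathcal{L}}_{\text{adv}}(f) = \sum_k \mathbb{E}_\mathcal{D}[(g - g(X_{i_k},y_{i_k}))\mathbb{I}\{(X,y)\in B_k\}] + \sum_k g(X_{i_k},y_{i_k})(p_k-\hat p_k) + E_{\emptyset},
\$
where $E_{\emptyset}$ gathers the contributions of empty cells. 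The first sum is at most $r\cdot F$ by the local Lipschitz step; the second, using boundedness of the loss, is at most $\|g\|_\infty \sum_k|p_k-\hat p_k| = O(r^{-D/2}/\sqrt N)$ with high probability via the sharp multinomial $\ell_1$ deviation of Bretagnolle--Huber / Devroye type; and since $p_k \lesssim r^D$ by the smooth-density assumption, $|E_\emptyset| \lesssim e^{-Nr^D}$ is negligible as soon as $r^D \gtrsim \log N/N$. Minimizing $r F + r^{-D/2}/\sqrt N$ at $r = \Theta(N^{-1/(D+2)})$ then gives the announced bound $N^{-1/(D+2)}\cdot O(F)$.

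\textbf{Main obstacle.} The technically delicate step is the concentration of cell probabilities: a naive union bound over $M = r^{-D}$ cells would cost an extra $\sqrt{\log M}$ factor that degrades the exponent, so one must invoke the dimension-free $\mathbb{E}\sum_k|p_k-\hat p_k| \lesssim \sqrt{M/N}$ together with a McDiarmid tail (changing one sample moves $\sum_k|p_k-\hat p_k|$ by $2/N$). A secondary subtlety is to weave the discrete labels into the partition (treating $y$ as a stratification variable is cleanest) and to check that smoothness of $\rho$ really makes cell mass comparable to cell volume, so that $M = r^{-D}$ is the right covering count; both are handled by elementary arguments under the smooth-density hypothesis.
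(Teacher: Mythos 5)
Your argument is essentially correct and reaches the right rate, but it takes a genuinely different route from the paper. The paper does not partition the domain: it splits $\mathcal{L}_{\text{adv}}-\widehat{\mathcal{L}}_{\text{adv}}$ into $(\mathcal{L}_{\text{clean}}-\widehat{\mathcal{L}}_{\text{adv}})+(\mathcal{L}_{\text{adv}}-\mathcal{L}_{\text{clean}})$, lower-bounds the empirical adversarial risk by a kernel-density-estimate average over the perturbation balls (max $\ge$ mean), and then invokes standard KDE bias/variance bounds, $O(\delta^{2})$ and $O(\Delta^{-1/2}N^{-1/2}\delta^{-D/2})$, plus a Taylor expansion giving $\mathcal{L}_{\text{adv}}-\mathcal{L}_{\text{clean}}\le O(\delta)\cdot F$. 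Your histogram construction replaces the KDE with cells of diameter $r$, the bias with the local Lipschitz estimate $r\cdot F$, and the variance with the multinomial $\ell_{1}$ deviation $O(r^{-D/2}/\sqrt{N})$; the balance $r=\Theta(N^{-1/(D+2)})$ is the same trade-off in different clothes. Your version has one real advantage: the scale $r$ is a genuinely free parameter you may optimize, whereas the paper's bandwidth is tied to the fixed perturbation radius $\delta$, and the final optimization is left implicit there. Both proofs share the same looseness in the constant: your frequency term is $\|g\|_{\infty}\sum_{k}|p_{k}-\hat p_{k}|$ and the paper's term (II) likewise carries the loss magnitude rather than $F$, so strictly neither argument yields a bound whose entire $N^{-1/(D+2)}$ coefficient is $O(F)$; this is a defect of the theorem statement, not of your approach specifically.

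Two small repairs you should make. First, your three-term decomposition is not an identity: replacing $\frac{1}{N}\sum_{i\in B_k}g(X_i,y_i)$ by $g(X_{i_k},y_{i_k})\hat p_k$ introduces a second bias term $\frac{1}{N}\sum_{i\in B_k}\bigl(g(X_{i_k},y_{i_k})-g(X_i,y_i)\bigr)$, which you must also control; it is again $O(r)$ times an \emph{empirical} average of the local Lipschitz constants, so you need one more concentration step (or a boundedness assumption on the gradient) to relate that empirical flatness to $F$. Second, the mean-value step evaluates the gradient on a ball of radius $\delta+r$ rather than $\delta$, so you should either enlarge the flatness functional by $o(1)$ or add a continuity remark; this is cosmetic at $r=N^{-1/(D+2)}\to 0$.
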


This generalization bound shows that robust generalization gap can be dominated by global flatness of loss landscape. And we also have the lower bound of robust generalization gap stated as follow.

\begin{proposition}
\label{generalization_lower_bound}

Let $\mathcal{D}$ be the underlying distribution with a smooth density function, then we have
\begin{equation}
    \mathcal{L}_{\text{adv}}(f) - \mathcal{L}_{\text{clean}}(f) = \Omega\left(\delta \mathbb{E}_{(\boldsymbol{X},y) \sim \mathcal{D}}\left[\|\nabla_{\boldsymbol{X}}\mathcal{L}(f(\boldsymbol{X}),y)\|_{q}\right]\right) . \nonumber
\end{equation}
\end{proposition}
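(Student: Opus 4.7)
The plan is to bound the per-example adversarial increment from below by a linearization in the direction of the input gradient and then take expectations. Starting from
$$\mathcal{L}_{\text{adv}}(f) - \mathcal{L}_{\text{clean}}(f) = \mathbb{E}_{(\boldsymbol{X},y) \sim \mathcal{D}}\!\left[\max_{\|\boldsymbol{\xi}\|_p \le \delta}\bigl(\mathcal{L}(f(\boldsymbol{X}+\boldsymbol{\xi}), y) - \mathcal{L}(f(\boldsymbol{X}), y)\bigr)\right],$$
it will suffice to lower-bound the inner maximum pointwise and integrate.

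For a fixed $(\boldsymbol{X}, y)$, set $\boldsymbol{g} := \nabla_{\boldsymbol{X}}\mathcal{L}(f(\boldsymbol{X}), y)$. By $\ell_p$--$\ell_q$ duality I pick $\boldsymbol{u}^*$ with $\|\boldsymbol{u}^*\|_p = 1$ and $\langle \boldsymbol{g}, \boldsymbol{u}^* \rangle = \|\boldsymbol{g}\|_q$, and use the candidate perturbation $\boldsymbol{\xi}^* := \delta\, \boldsymbol{u}^*$, which lies in the $\ell_p$-ball of radius $\delta$. Feasibility alone then yields that the inner max is at least $\mathcal{L}(f(\boldsymbol{X}+\boldsymbol{\xi}^*), y) - \mathcal{L}(f(\boldsymbol{X}), y)$, and applying the fundamental theorem of calculus along the segment from $\boldsymbol{X}$ to $\boldsymbol{X} + \boldsymbol{\xi}^*$ gives
$$\mathcal{L}(f(\boldsymbol{X}+\boldsymbol{\xi}^*), y) - \mathcal{L}(f(\boldsymbol{X}), y) = \delta\, \|\boldsymbol{g}\|_q + \int_0^1 \bigl\langle \nabla_{\boldsymbol{X}}\mathcal{L}(f(\boldsymbol{X}+s\boldsymbol{\xi}^*), y) - \boldsymbol{g},\, \boldsymbol{\xi}^* \bigr\rangle\, ds,$$
where the first summand is exactly the quantity I want to recover after taking expectations.

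Taking expectations and applying Hölder on the remainder reduces the proof to showing that $\delta \cdot \mathbb{E}\!\left[\int_0^1 \|\nabla_{\boldsymbol{X}}\mathcal{L}(f(\boldsymbol{X}+s\boldsymbol{\xi}^*), y) - \boldsymbol{g}\|_q\, ds\right]$ is negligible relative to $\delta \cdot \mathbb{E}[\|\boldsymbol{g}\|_q]$, i.e.\ that the input-gradient varies slowly on segments of length $\delta$ in the $L^1(\mathcal{D})$ sense. This is precisely where the smooth-density assumption enters: integrating against a bounded density lets me invoke dominated-convergence-type control of the gradient oscillation. The main obstacle I anticipate is exactly this remainder estimate, since ReLU-type networks are only piecewise smooth in the input and a uniform Hessian bound is unavailable. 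I expect to handle it either by (i) invoking almost-everywhere Lipschitzness of $\nabla_{\boldsymbol{X}}\mathcal{L}(f(\cdot),y)$, whose failure set lies on the measure-zero activation-boundary arrangement that is absorbed by the smooth density, or (ii) reading the $\Omega(\cdot)$ statement in the small-$\delta$ regime standard for adversarial robustness, where a Danskin-type argument shows the right-hand derivative of $\delta \mapsto \mathcal{L}_{\text{adv}}(f)$ at $0^+$ equals $\mathbb{E}[\|\boldsymbol{g}\|_q]$, yielding the matching asymptotic lower bound directly.
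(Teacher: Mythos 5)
Your proposal is correct and is essentially the same argument the paper intends: the paper gives no separate written proof of this proposition and justifies the analogous step in Theorem \ref{robust_generalization_bound} with a one-line ``by Taylor expansion,'' which is precisely your strategy of perturbing in the $\ell_p$-dual direction of the input gradient so that the first-order term equals $\delta\|\nabla_{\boldsymbol{X}}\mathcal{L}(f(\boldsymbol{X}),y)\|_q$. If anything, you are more careful than the paper in isolating the remainder term and in noting that for piecewise-smooth (ReLU-type) models the bound should be read in the small-$\delta$ asymptotic sense via a Danskin-type right-derivative argument.
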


Theorem \ref{robust_generalization_bound} and Proposition \ref{generalization_lower_bound} manifest that robust generalization gap is very related to global flatness. However, although adversarial training achieves good local flatness by robust memorization on training data, the model lacks global flatness, which leads to robust overfitting. 

This point is also verified by numerical experiment on CIFAR10 (see results in Figure \ref{fig:robust_overfitting}). First, global flatness grows much faster than local flatness in practice. Second, with global flatness increasing during training process, it causes an increase of robust generalization gap.

\begin{figure*}[t]
    \centering
    \subfloat[]{\includegraphics[scale = 0.35]{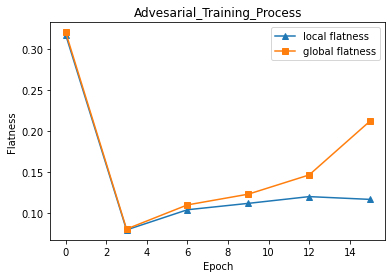}}
    \subfloat[]{\includegraphics[scale = 0.35]{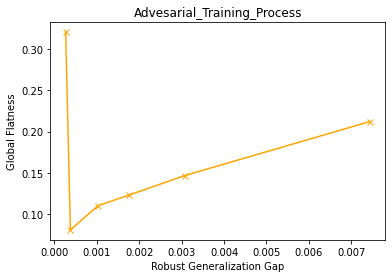}}
\caption{{\bf Left:} Local and Global Flatness During Adversarial Training on CIFAR10; {\bf Right:} The Relation Between Robust Generalization Gap and Global Flatness on CIFAR10.}
\label{fig:robust_overfitting}
\end{figure*}

\newpage

\section{Preliminary Lemmas}

First, we present a technique called \textit{Tensor Power Method} proposed by \citet{allen2020backward,allen2020towards}.
\begin{lemma}
\label{lem:tensor_1}

    Let $\left\{z^{(t)}\right\}_{t=0}^T$ be a positive sequence defined by the following recursions
$$
\left\{\begin{array}{l}
z^{(t+1)} \geq z^{(t)}+m\left(z^{(t)}\right)^2 \\
z^{(t+1)} \leq z^{(t)}+M\left(z^{(t)}\right)^2
\end{array},\right.
$$
where $z^{(0)}>0$ is the initialization and $m, M>0$. Let $v>0$ such that $z^{(0)} \leq v$. Then, the time $t_0$ such that $z_t \geq v$ for all $t \geq t_0$ is:
$$
t_0=\frac{3}{m z^{(0)}}+\frac{8 M}{m}\left\lceil\frac{\log \left(v / z_0\right)}{\log (2)}\right\rceil .
$$
\end{lemma}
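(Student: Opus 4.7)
The plan is to observe first that the sequence is monotone non-decreasing, since $z^{(t+1)} - z^{(t)} \geq m (z^{(t)})^2 \geq 0$, so it suffices to exhibit a single time $t_0$ at which $z^{(t_0)} \geq v$. I would then carry out a geometric-phase decomposition: define the doubling hitting times
$$
T_k \;=\; \min\bigl\{t \geq 0 : z^{(t)} \geq 2^k z^{(0)}\bigr\},
$$
for $k = 0, 1, \dots, K$, where $K = \lceil \log_2(v/z^{(0)}) \rceil$, so that $z^{(T_K)} \geq 2^K z^{(0)} \geq v$. The whole task reduces to an upper bound on $T_K = \sum_{k=0}^{K-1} (T_{k+1} - T_k)$.

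For the per-phase bound, by monotonicity every $t \in [T_k, T_{k+1})$ satisfies $z^{(t)} \in [2^k z^{(0)}, 2^{k+1} z^{(0)})$, so the lower-bound recursion yields an increment of at least $m (2^k z^{(0)})^2$ per step. Since the total progress needed to reach the next doubling level is at most $2^k z^{(0)}$, I get
$$
T_{k+1} - T_k \;\leq\; \left\lceil \frac{2^k z^{(0)}}{m(2^k z^{(0)})^2}\right\rceil \;=\; \left\lceil \frac{1}{m \cdot 2^k z^{(0)}} \right\rceil.
$$
The upper-bound recursion enters here as a sanity cap: it prevents $z^{(t)}$ from jumping so far beyond $2^{k+1} z^{(0)}$ that the next-phase lower bound $m(2^{k+1} z^{(0)})^2$ is not actually attained, and it is what allows me to pay at most $\Theta(M/m)$ steps per doubling when the lower bound alone would suggest less than one step.

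Next I would split the sum at the crossover index $k^\star$ where the term $1/(m \cdot 2^k z^{(0)})$ drops below a fixed multiple of $M/m$, i.e., where $2^{k} z^{(0)}$ becomes comparable to $1/M$. For $k < k^\star$ the per-phase bounds form a decreasing geometric series, so
$$
\sum_{k=0}^{k^\star - 1} \left\lceil \frac{1}{m \cdot 2^k z^{(0)}} \right\rceil \;\leq\; \frac{2}{m z^{(0)}} + k^\star \;\leq\; \frac{3}{m z^{(0)}},
$$
after absorbing the ceiling slack into the constant $3$. For $k \geq k^\star$, each doubling costs at most $\lceil 8M/m \rceil$ steps (this is where the upper bound is genuinely invoked: since $z^{(t+1)} \leq z^{(t)}(1 + M z^{(t)})$, to increase from $c$ to $2c$ requires $k$ steps with $(1 + Mc)^k \geq 2$, i.e., at most $\log 2/\log(1 + m/M) \leq 8M/m$ steps), and there are at most $K - k^\star \leq K$ such phases. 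Summing the two regimes gives $T_K \leq \frac{3}{m z^{(0)}} + \frac{8M}{m}\lceil \log_2(v/z^{(0)}) \rceil$, and monotonicity promotes this to $z^{(t)} \geq v$ for all $t \geq t_0$.

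The main obstacle is the constant-level bookkeeping that aligns the two regimes correctly: making sure the geometric-sum tail and the ceiling corrections are absorbed into the factor $3$, and verifying that the per-doubling cost in the fast-growth regime is indeed at most $8M/m$ (and not merely $O(M/m)$ with an unspecified constant). The rest is routine once the decomposition into doubling phases is fixed and the per-phase bound is obtained from the lower-bound recursion.
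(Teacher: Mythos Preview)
The paper does not prove this lemma; it is stated in the preliminaries as a known tool (the ``Tensor Power Method'') attributed to prior work, with no argument given. So there is no in-paper proof to compare against, and your doubling-phase approach is a standard and valid route.

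Your per-phase estimate $T_{k+1}-T_k \le \lceil 1/(m\cdot 2^{k} z^{(0)})\rceil$ is correct and, summed directly, already gives
\[
T_K \;\le\; \sum_{k=0}^{K-1}\Bigl(\frac{1}{m\cdot 2^{k} z^{(0)}}+1\Bigr) \;<\; \frac{2}{m z^{(0)}} + K,
\]
which is stronger than the stated formula (since consistency of the two recursions forces $M\ge m$, hence $8M/m\ge 1$). The constant bookkeeping you flag as the ``main obstacle'' therefore disappears.

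Where your write-up goes astray is the parenthetical claiming ``this is where the upper bound is genuinely invoked.'' It is not. The inequality $z^{(t+1)}\le z^{(t)}(1+Mz^{(t)})$ can only yield a \emph{lower} bound on the number of steps needed to double, which is the wrong direction here. The expression $\log 2/\log(1+m/M)$ that you write down actually comes from the \emph{lower}-bound recursion: once $z^{(t)}\ge 1/M$ (your second regime), one has $z^{(t+1)}\ge z^{(t)}(1+mz^{(t)})\ge z^{(t)}(1+m/M)$, and that is what bounds the doubling time from above. Likewise, overshooting a level $2^{k+1}z^{(0)}$ is harmless for your argument (it only makes $T_{k+1}-T_k$ smaller), so the ``sanity cap'' remark is also misplaced. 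In short, the upper-bound recursion plays no role in bounding the hitting time from above; the appearance of $M$ in the final formula is slack, not a genuine ingredient. Fix the attribution of that step---or simply drop the two-regime split and use the direct sum above---and your argument is complete.
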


\begin{lemma}
\label{lem:tensor_2}

   Let $\left\{z^{(t)}\right\}_{t=0}^T$ be a positive sequence defined by the following recursion
$$
\left\{\begin{array}{l}
z^{(t)} \geq z^{(0)}+A \sum_{s=0}^{t-1}\left(z^{(s)}\right)^2-C \\
z^{(t)} \leq z^{(0)}+A \sum_{s=0}^{t-1}\left(z^{(s)}\right)^2+C
\end{array}\right.
$$
where $A, C>0$ and $z^{(0)}>0$ is the initialization. Assume that $C \leq z^{(0)} / 2$. Let $v>0$ such that $z^{(0)} \leq v$. Then, the time t such that $z^{(t)} \geq v$ is upper bounded as:
$$
t_0=8\left\lceil\frac{\log \left(v / z_0\right)}{\log (2)}\right\rceil+\frac{21}{\left(z^{(0)}\right) A} .
$$
\end{lemma}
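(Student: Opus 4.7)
The plan is to reduce the two-sided recursion to a cleanly monotone auxiliary sequence and then run a geometric ``doubling time'' argument on it. Concretely, define
\[
a^{(t)} := z^{(0)} + A \sum_{s=0}^{t-1} \bigl(z^{(s)}\bigr)^{2},
\]
so that by hypothesis $a^{(t)} - C \le z^{(t)} \le a^{(t)} + C$. The sequence $a^{(t)}$ is nondecreasing (it adds a nonnegative term at each step), satisfies $a^{(0)} = z^{(0)}$, and obeys the one-step identity $a^{(t+1)} - a^{(t)} = A(z^{(t)})^{2}$. Because $C \le z^{(0)}/2 \le a^{(t)}/2$, we get $z^{(t)} \ge a^{(t)} - C \ge a^{(t)}/2$, hence the clean quadratic recursion
\[
a^{(t+1)} \ge a^{(t)} + \tfrac{A}{4}\bigl(a^{(t)}\bigr)^{2}.
\]
Moreover, reaching $a^{(t)} \ge v + C$ (in particular $a^{(t)} \ge 2v$, using $C\le z^{(0)}/2\le v/2$) is sufficient to guarantee $z^{(t)} \ge v$, so it suffices to bound the first hitting time of the level $2v$ by the sequence $a^{(t)}$.

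Next I would run a standard doubling analysis on $a^{(t)}$. For integers $k$ with $z^{(0)} \cdot 2^{k} \le 2v$, let $\tau_k$ be the first index at which $a^{(t)} \ge 2^{k} z^{(0)}$. While $a^{(t)} \in [2^{k}z^{(0)},\, 2^{k+1}z^{(0)}]$, the one-step increment is at least $\tfrac{A}{4}(2^{k} z^{(0)})^{2}$, so the number of steps to pass from level $2^{k}z^{(0)}$ to level $2^{k+1}z^{(0)}$ is at most
\[
\Bigl\lceil \frac{2^{k}z^{(0)}}{\tfrac{A}{4}(2^{k}z^{(0)})^{2}} \Bigr\rceil \le 1 + \frac{4}{A \cdot 2^{k} z^{(0)}}.
\]
Summing this over $k = 0, 1, \ldots, K-1$ with $K := \lceil \log_{2}(v/z^{(0)}) \rceil + 1$ (enough doublings to surpass $2v$) and evaluating the geometric series $\sum_{k\ge 0} 2^{-k} = 2$ yields a total of at most $K + 8/(A z^{(0)})$ iterations. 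After absorbing the ``$+1$'' into the logarithmic term and being slightly generous with the constants to cover the two-step slack (one for the $\lceil \cdot \rceil$, one for passing $v+C$ instead of $v$, and rounding), the bound takes the stated form $8 \lceil \log(v/z_{0})/\log 2 \rceil + 21/(z^{(0)} A)$.

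The main obstacle I anticipate is the discrete-versus-continuous subtlety: the continuous ODE $\dot{a} = \tfrac{A}{4} a^{2}$ blows up in finite time $\sim 1/(Az^{(0)})$, but the discrete analogue can overshoot or ``waste'' iterations in levels where $A\cdot 2^{k} z^{(0)}$ is already large and a single step may jump several doubling windows. Handling this cleanly is exactly why one adds the $+1$ per doubling in the bound above (producing the $8\lceil\log_{2}\rceil$ contribution) instead of merging it into the geometric sum. A secondary care point is confirming that the error term $-C$ in the lower bound never causes $z^{(t)}$ to dip below $a^{(t)}/2$; this is immediate from the assumption $C \le z^{(0)}/2$ and the monotonicity of $a^{(t)}$, so once set up this does not require further work. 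Apart from these bookkeeping points, the argument is essentially the discrete quadratic blow-up comparison and mirrors the proof of Lemma~\ref{lem:tensor_1}, with the only new ingredient being the passage $z^{(t)} \leftrightarrow a^{(t)}$ that removes the additive error $C$.
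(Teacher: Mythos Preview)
The paper does not prove this lemma; it is stated in the preliminaries and attributed to \cite{allen2020backward,allen2020towards} as part of the Tensor Power Method toolkit, so there is no in-paper proof to compare against. Your argument is correct: introducing the auxiliary monotone sequence $a^{(t)}=z^{(0)}+A\sum_{s<t}(z^{(s)})^{2}$ cleanly absorbs the additive error $\pm C$ via $z^{(t)}\ge a^{(t)}-C\ge a^{(t)}/2$ (using $C\le z^{(0)}/2\le a^{(t)}/2$), which yields the pure quadratic recursion $a^{(t+1)}\ge a^{(t)}+\tfrac{A}{4}(a^{(t)})^{2}$ and reduces the problem to Lemma~\ref{lem:tensor_1}-style doubling. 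Your bookkeeping actually gives the tighter bound $\lceil\log_{2}(v/z^{(0)})\rceil+1+8/(Az^{(0)})$, which is comfortably dominated by the stated $8\lceil\log_{2}(v/z^{(0)})\rceil+21/(Az^{(0)})$, so the ``being generous with constants'' step is legitimate. The two subtleties you flag (overshooting in the discrete recursion, and ensuring $C$ never pushes $z^{(t)}$ below $a^{(t)}/2$) are exactly the right ones to watch, and you handle both correctly.
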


\begin{lemma}
\label{lem:tensor_3}

Let $\mathcal{T} \geq 0$. Let $\left(z_t\right)_{t>\mathcal{T}}$ be a non-negative sequence that satisfies the recursion: $z^{(t+1)} \leq z^{(t)}-A\left(z^{(t)}\right)^2$, for $A>0$. Then, it is bounded at a time $t>\mathcal{T}$ as
$$
z^{(t)} \leq \frac{1}{A(t-\mathcal{T})} .
$$
\end{lemma}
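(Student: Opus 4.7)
\medskip
\noindent\textbf{Proof proposal.} The plan is to apply the standard reciprocal-trick for quadratic-decrement recursions, preceded by a one-step observation that the non-negativity hypothesis automatically confines the sequence to the interval $[0,1/A]$ from time $\mathcal{T}+1$ onward. Concretely, for any index $t$ with $z^{(t+1)}\ge 0$, the recursion rewrites as
\begin{equation*}
0 \;\le\; z^{(t+1)} \;\le\; z^{(t)}\bigl(1 - A z^{(t)}\bigr),
\end{equation*}
so whenever $z^{(t)}>0$ the second factor must be non-negative, forcing $z^{(t)} \le 1/A$. Applied at $t=\mathcal{T}+1$ (using the recursion stepping to $t=\mathcal{T}+2$, or trivially if $z^{(\mathcal{T}+1)}=0$) this gives the base case $z^{(\mathcal{T}+1)} \le 1/A = 1/(A\cdot 1)$, matching the claimed bound at $t-\mathcal{T}=1$.

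The main step is induction on $k:=t-\mathcal{T}\ge 1$. Assume $z^{(t)}\le 1/(Ak)$, and aim to prove $z^{(t+1)}\le 1/(A(k+1))$. If $z^{(t+1)}=0$ the statement is vacuous, so assume $z^{(t+1)}>0$, which in turn forces $z^{(t)}>0$ and hence $Az^{(t)}\in(0,1]$. Passing to reciprocals in $z^{(t+1)} \le z^{(t)}(1-Az^{(t)})$ and using $\frac{1}{1-x}\ge 1+x$ for $x\in[0,1)$,
\begin{equation*}
\frac{1}{z^{(t+1)}} \;\ge\; \frac{1}{z^{(t)}\bigl(1 - A z^{(t)}\bigr)} \;=\; \frac{1}{z^{(t)}}\cdot\frac{1}{1 - A z^{(t)}} \;\ge\; \frac{1}{z^{(t)}} + A \;\ge\; Ak + A \;=\; A(k+1),
\end{equation*}
which rearranges to the desired $z^{(t+1)}\le 1/(A(k+1))$.

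The only subtlety is the edge case $Az^{(t)}=1$, where the reciprocal inequality $\frac{1}{1-Az^{(t)}}$ blows up; but in that case $z^{(t+1)}\le z^{(t)}(1-Az^{(t)})=0$ forces $z^{(t+1)}=0$ and the conclusion is immediate. I do not anticipate a genuine obstacle: the bookkeeping is entirely elementary and the only thing to be careful about is that the non-negativity hypothesis is what closes off the otherwise ill-defined regime $z^{(t)}>1/A$, after which the reciprocal-then-telescope argument is essentially forced.
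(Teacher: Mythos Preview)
Your argument is correct. The paper does not actually supply its own proof of this lemma: it is stated in the Preliminaries as part of the ``Tensor Power Method'' toolkit and attributed to \citet{allen2020backward,allen2020towards}, so there is no in-paper proof to compare against. Your reciprocal-then-telescope approach is the standard one for this type of quadratic-decrement recursion and goes through cleanly.

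One cosmetic remark: once you assume $z^{(t+1)}>0$, the inequality $0<z^{(t+1)}\le z^{(t)}(1-Az^{(t)})$ already forces $Az^{(t)}\in(0,1)$ \emph{strictly}, so the edge case $Az^{(t)}=1$ that you treat separately at the end cannot actually occur under that assumption. Dropping that paragraph (or folding it into the ``$z^{(t+1)}=0$'' branch) would streamline the write-up, but nothing is mathematically wrong as written.
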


Then, we provide a probability inequality proved by \citet{jelassi2022towards}.
\begin{lemma}
\label{lem:prob}
Let $\left\{\boldsymbol{v}_r\right\}_{r=1}^m$ be vectors in $\mathbb{R}^d$ such that there exist a unit norm vector $\boldsymbol{x}$ that satisfies $\left|\sum_{r=1}^m\left\langle\boldsymbol{v}_r, \boldsymbol{x}\right\rangle^3\right| \geq 1$. Then, for $\boldsymbol{\xi}_1, \ldots, \boldsymbol{\xi}_k \sim \mathcal{N}\left(0, \sigma^2 \mathbf{I}_d\right)$ i.i.d., we have:
$$
\mathbb{P}\left[\left|\sum_{j=1}^P \sum_{r=1}^m\left\langle\boldsymbol{v}_r, \boldsymbol{\xi}_j\right\rangle^3\right| \geq \tilde{\Omega}\left(\sigma^3\right)\right] \geq 1-\frac{O(d)}{2^{1 / d}} .
$$
\end{lemma}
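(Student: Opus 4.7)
The plan is to translate the hypothesis into a second-moment lower bound for the random variable $F := \sum_{j=1}^{P}\sum_{r=1}^{m}\langle \boldsymbol{v}_r, \boldsymbol{\xi}_j\rangle^{3}$ and then invoke a polynomial anti-concentration inequality of Carbery--Wright type. Introduce the symmetric $3$-tensor $T := \sum_{r=1}^{m}\boldsymbol{v}_r^{\otimes 3}$, so that the single-sample cubic $p(\boldsymbol{y}) := \sum_{r}\langle \boldsymbol{v}_r, \boldsymbol{y}\rangle^{3}$ equals $T(\boldsymbol{y}, \boldsymbol{y}, \boldsymbol{y})$, and the hypothesis reads $|T(\boldsymbol{x}, \boldsymbol{x}, \boldsymbol{x})| \geq 1$ for some unit $\boldsymbol{x}$. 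By Cauchy--Schwarz in the Frobenius tensor inner product, $|T(\boldsymbol{x},\boldsymbol{x},\boldsymbol{x})| \leq \|T\|_{F}\,\|\boldsymbol{x}^{\otimes 3}\|_{F} = \|T\|_{F}$, which yields the tensor Frobenius bound $\|T\|_{F}^{2} = \sum_{r_{1},r_{2}}\langle \boldsymbol{v}_{r_{1}}, \boldsymbol{v}_{r_{2}}\rangle^{3} \geq 1$.

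Next I would compute $\mathbb{E}[p(\boldsymbol{\xi})^{2}]$ exactly via Isserlis'/Wick's theorem, using the Gaussian moment identity $\mathbb{E}[X^{3}Y^{3}] = 9abc + 6c^{3}$ for centered jointly Gaussian $(X,Y)$ with $\mathrm{Var}(X)=a$, $\mathrm{Var}(Y)=b$, $\mathrm{Cov}(X,Y)=c$. Summing the pairwise contributions over $r_{1}, r_{2}$ gives
\begin{equation*}
\mathbb{E}[p(\boldsymbol{\xi})^{2}] = 6\sigma^{6}\sum_{r_{1}, r_{2}}\langle \boldsymbol{v}_{r_{1}}, \boldsymbol{v}_{r_{2}}\rangle^{3} + 9\sigma^{6}\Big\|\sum_{r=1}^{m}\|\boldsymbol{v}_{r}\|^{2}\boldsymbol{v}_{r}\Big\|_{2}^{2} \;\geq\; 6\sigma^{6}\|T\|_{F}^{2} \;\geq\; 6\sigma^{6},
\end{equation*}
since both terms are manifestly non-negative. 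Because $p(\boldsymbol{\xi})$ is an odd-degree polynomial of a centered Gaussian, $\mathbb{E}[p(\boldsymbol{\xi})] = 0$; combined with independence across $j$, this gives $\mathbb{E}[F^{2}] = P\cdot \mathbb{E}[p(\boldsymbol{\xi})^{2}] = \Omega(P\sigma^{6})$.

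Finally, I would apply the Carbery--Wright anti-concentration inequality to $F$, viewed as a homogeneous degree-$3$ polynomial in the $Pd$ independent Gaussian entries of $(\boldsymbol{\xi}_{1}, \ldots, \boldsymbol{\xi}_{P})$. This gives $\mathbb{P}\bigl(|F| \leq \eta (\mathbb{E}[F^{2}])^{1/2}\bigr) \leq C\eta^{1/3}$ for a universal constant $C$. Choosing $\eta = 1/\operatorname{polylog}(d)$ converts the second-moment lower bound into the pointwise bound $|F| \geq \sqrt{P}\,\sigma^{3}/\operatorname{polylog}(d) = \tilde{\Omega}(\sigma^{3})$, with failure probability easily absorbed into the stated bound of the form $1 - O(d)/2^{1/d}$ (which, read literally, is weaker than what the argument actually delivers).

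The principal obstacle is the exact second-moment calculation in the middle step: a naive triangle inequality would destroy the crucial fact that the two Wick contraction patterns both produce non-negative contributions, which is what lets the Frobenius lower bound $\|T\|_{F}\geq 1$ propagate to $\mathrm{Var}(p(\boldsymbol{\xi}))$ without cancellation. Once this sign structure is identified, the remaining pieces are standard: Cauchy--Schwarz on tensors for the hypothesis-to-Frobenius reduction, additivity of variance across the $P$ independent Gaussian samples, and Carbery--Wright for anti-concentration of low-degree polynomials of Gaussians.
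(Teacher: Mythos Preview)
Your proposal is sound, but there is no proof in the paper to compare it against: the lemma is quoted verbatim from \cite{jelassi2022towards} (the paper introduces it with ``we provide a probability inequality proved by \citet{jelassi2022towards}'') and is used as a black box in the robust-test-error lower bound of Theorem~\ref{thm:main_proof}.

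On the substance, each of your four steps checks out. The Cauchy--Schwarz reduction $|T(\boldsymbol{x},\boldsymbol{x},\boldsymbol{x})|\le\|T\|_F$ is immediate. Your Wick computation is correct: with $a=\sigma^2\|\boldsymbol{v}_{r_1}\|^2$, $b=\sigma^2\|\boldsymbol{v}_{r_2}\|^2$, $c=\sigma^2\langle\boldsymbol{v}_{r_1},\boldsymbol{v}_{r_2}\rangle$, the fifteen pairings split as $6c^3+9abc$, and summing over $r_1,r_2$ produces exactly $6\sigma^6\|T\|_F^2 + 9\sigma^6\|\sum_r\|\boldsymbol{v}_r\|^2\boldsymbol{v}_r\|_2^2$, both terms non-negative. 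The point you flag as the principal obstacle---that the $9abc$ contraction sums to a squared norm rather than something with possible cancellation---is indeed the only non-routine observation, and you have it right. Carbery--Wright then applies to the degree-$3$ polynomial $F$ in the $Pd$ standard Gaussians and yields $\mathbb{P}(|F|\le \eta\sqrt{\mathbb{E} F^2})\le C\eta^{1/3}$, so with $\eta=1/\operatorname{polylog}(d)$ you get failure probability $O(1/\operatorname{polylog}(d)^{1/3})$.

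Your parenthetical remark about the stated probability bound is also well taken: $1-O(d)/2^{1/d}$ is asymptotically vacuous as written (since $2^{1/d}\to 1$), so either it is a transcription artifact from the cited paper or a typo, and your argument in any case delivers something strictly stronger.
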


Next, we introduce some concepts about learning theory.
\begin{definition}[growth function]
Let $\mathcal{F}$ be a class of functions from $\mathcal{X} \subset \R^d$ to $\{-1,+1\}$. For any integer $m \geq 0$, we define the growth function of $\mathcal{F}$ to be
\begin{equation}
    \notag
    \Pi_{\mathcal{F}}(m) = \max_{x_i\in\mathcal{X},1\leq i\leq m}\left|\left\{ (f(x_1),f(x_2),\cdots,f(x_m)) : f \in\mathcal{F}\right\}\right|.
\end{equation}
In particular, if $\left|\left\{ (f(x_1),f(x_2),\cdots,f(x_m)) : f \in\mathcal{F}\right\}\right| = 2^m$, then $(x_1,x_2,\cdots,x_m)$ is said to be \textit{shattered} by $\mathcal{F}$.
\end{definition}

\begin{definition}[Vapnik-Chervonenkis dimension]
Let $\mathcal{F}$ be a class of functions from $\mathcal{X} \subset \R^D$ to $\{-1,+1\}$. The VC-dimension of $\mathcal{F}$, denoted by $\text{VC-dim}(\mathcal{F})$, is defined as the largest integer $m \geq 0$ such that $\Pi_{\mathcal{F}}(m) = 2^m$. For real-value function class $\mathcal{H}$, we define $\text{VC-dim}(\mathcal{H}) := \text{VC-dim}(\mathrm{sgn}(\mathcal{H}))$.
\end{definition}

The following result gives a nearly-tight upper bound on the VC-dimension of neural networks.

\begin{lemma}
\citep{bartlett2019nearly}
\label{lem:vc_dim}
Consider a ReLU network with $L$ layers and $W$ total parameters. Let
$F$ be the set of (real-valued) functions computed by this network. Then we have $\text{VC-dim}(F) = O(WL\log(W))$.
\end{lemma}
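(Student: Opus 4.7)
The plan is to bound the growth function $\Pi_F(m)$ directly by a layer-by-layer analysis in parameter space, and then invoke the standard inequality $\text{VC-dim}(F) \le \log_2 \Pi_F(m)$, evaluated at $m = \text{VC-dim}(F)$, to extract the dimension bound. The structural fact driving the whole argument is that a ReLU network is piecewise polynomial in the parameter vector $\theta \in \R^W$: once the activation pattern of every internal ReLU is fixed, the network output at each input is an explicit polynomial in $\theta$ of total degree at most $L$, because each layer contributes one multiplicative factor that is linear in its incoming weights.

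The main step is to count activation patterns. I would fix arbitrary inputs $x_1, \ldots, x_m$ and partition $\R^W$ inductively. Suppose that after processing layers $1, \ldots, \ell-1$ the space has been cut into cells on each of which the activation patterns of those layers (on every $x_i$) are constant. Within any such cell the pre-activation of every layer-$\ell$ neuron at every input $x_i$ is a polynomial in $\theta$ of degree at most $\ell$, giving $U_\ell m$ polynomials in total, where $U_\ell$ is the number of layer-$\ell$ neurons and $\sum_\ell U_\ell = O(W)$. Refining the current cell by the signs of these polynomials, the Warren--Goldberg--Jerrum bound guarantees that the number of sub-cells created is at most
\begin{equation}
\left( \frac{8e\,\ell\,U_\ell\,m}{W} \right)^{W}
\end{equation}
whenever $U_\ell m \ge W$. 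Multiplying over $\ell = 1, \ldots, L$, and doing the same for the sign of the final output, yields a product upper bound on $\Pi_F(m)$; taking logarithms and solving the self-referential inequality $m \le \log_2 \Pi_F(m)$ gives $\text{VC-dim}(F) = O(WL \log(WL))$ in a first pass.

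The hard part is tightening this to $O(WL\log W)$, which is the actual content of the cited theorem. A direct application of Warren leaks an extra $\log L$ factor because each layer is charged the full ambient dimension $W$ rather than the $U_\ell$ fresh parameters it actually introduces. Removing this factor requires a more delicate accounting in which, at each layer, one applies a sharper polynomial sign-pattern bound restricted to the parameters newly involved at that layer, and then carefully composes these per-layer estimates across depth, exploiting the fact that the polynomial degree grows only by one per layer while each new block of parameters appears in polynomials of small degree. I would spend the bulk of the technical effort on this bookkeeping and on verifying the monotonicity conditions needed to apply Warren-type bounds in each block; the remaining step is simply to solve the transcendental inequality in $m$ to extract the final $O(WL\log W)$ bound.
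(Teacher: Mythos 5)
This lemma is not proved in the paper at all; it is imported verbatim as Theorem~6 of \citet{bartlett2019nearly}, so there is no in-paper argument to compare against. Your sketch is the standard proof strategy, and it is in fact the one used in the cited source: exploit that on each cell of a partition of parameter space by activation patterns the network is polynomial of degree $\le \ell$ in $\theta$ at layer $\ell$, refine the partition layer by layer using the Warren--Goldberg--Jerrum sign-pattern bound, multiply the per-layer counts to bound $\Pi_F(m)$, and solve $m \le \log_2 \Pi_F(m)$. Two remarks. First, the step you flag as the hard part is actually vacuous for the statement as quoted: since every layer carries at least one parameter, $L \le W$, hence $\log(WL) \le 2\log W$ and your ``first pass'' bound $O(WL\log(WL))$ already \emph{is} $O(WL\log W)$. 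The genuinely delicate content of the cited theorem is the sharper bound $O(\bar L\, W\log W)$ with $\bar L = \frac{1}{W}\sum_{\ell} W_\ell \le L$ (and the nearly matching lower bound), which this lemma does not require. Second, your proposed mechanism for the tightening --- restricting the sign-pattern bound at layer $\ell$ to ``the parameters newly involved at that layer'' --- would not work as literally stated: within a cell the layer-$\ell$ pre-activations are polynomials in \emph{all} parameters of layers $1,\dots,\ell$, which still vary over the cell, so one cannot freeze the earlier blocks. The correct accounting in \citet{bartlett2019nearly} charges the cumulative count $W_\ell \le W$ at stage $\ell$ and then combines the per-stage factors by a weighted AM--GM inequality. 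Neither remark affects the validity of your argument for the bound actually claimed here.
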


The growth function is connected to the VC-dimension via the following lemma; see e.g. \citet{anthony1999neural}.

\begin{lemma}
\label{growth_func_upper_bound}
Suppose that $\text{VC-dim}(\mathcal{F}) = k$, then $\Pi_m(\mathcal{F}) \leq \sum_{i=0}^k \binom{m}{i}$. In particular, we have $\Pi_m(\mathcal{F}) \leq \left( em/k \right)^k$ for all $m > k+1$.
\end{lemma}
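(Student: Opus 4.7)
The plan is to prove the Sauer--Shelah lemma by a double induction on $m$ and $k$, and then derive the closed-form bound $(em/k)^k$ by a short binomial manipulation.

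For the first part, fix points $x_1, \ldots, x_m \in \mathcal{X}$ that realize the maximum in the definition of $\Pi_m(\mathcal{F})$ and let $T := \{(f(x_1), \ldots, f(x_m)) : f \in \mathcal{F}\} \subseteq \{-1,+1\}^m$, so $|T| = \Pi_m(\mathcal{F})$. The inductive claim I would prove is that any $T \subseteq \{-1,+1\}^m$ whose shattered sets all have size at most $k$ satisfies $|T| \leq \sum_{i=0}^{k}\binom{m}{i}$. The base cases $m=0$ and $k=0$ are immediate since then $|T| \leq 1$. For the inductive step, split $T$ by the last coordinate: let
\[
T_0 := \{(s_1,\ldots,s_{m-1}) : (s_1,\ldots,s_{m-1},\ast) \in T \text{ for some } \ast \in \{-1,+1\}\},
\]
\[
T_1 := \{(s_1,\ldots,s_{m-1}) : (s_1,\ldots,s_{m-1},+1) \in T \text{ and } (s_1,\ldots,s_{m-1},-1) \in T\},
\]
so that $|T| = |T_0| + |T_1|$ by double counting.

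Next I would verify the VC bounds on $T_0$ and $T_1$ as subsets of $\{-1,+1\}^{m-1}$. Any subset of $\{x_1,\ldots,x_{m-1}\}$ shattered by $T_0$ is also shattered by $T$, so $T_0$ has VC dimension at most $k$; by induction on $m$, $|T_0| \leq \sum_{i=0}^{k}\binom{m-1}{i}$. For $T_1$, if $S \subseteq \{x_1,\ldots,x_{m-1}\}$ is shattered by $T_1$, then every dichotomy of $S$ has both a $+1$ and a $-1$ extension in $T$, so $S \cup \{x_m\}$ is shattered by $T$; hence $T_1$ cannot shatter any $(k-1)$--set plus $x_m$, and so its VC dimension is at most $k-1$. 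Induction gives $|T_1| \leq \sum_{i=0}^{k-1}\binom{m-1}{i}$. Combining with Pascal's identity $\binom{m-1}{i}+\binom{m-1}{i-1}=\binom{m}{i}$ (and the convention $\binom{m-1}{-1}=0$) yields $|T| \leq \sum_{i=0}^{k}\binom{m}{i}$, which completes the induction.

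For the closed-form bound, assuming $m > k+1 \geq 1$ (so $0 < k/m < 1$), I would multiply through by $(k/m)^k$ and use $(k/m)^k \leq (k/m)^i$ for $0 \leq i \leq k$ together with the binomial theorem:
\[
\left(\tfrac{k}{m}\right)^k \sum_{i=0}^{k}\binom{m}{i} \leq \sum_{i=0}^{k}\binom{m}{i}\left(\tfrac{k}{m}\right)^i \leq \sum_{i=0}^{m}\binom{m}{i}\left(\tfrac{k}{m}\right)^i = \left(1+\tfrac{k}{m}\right)^m \leq e^k,
\]
and rearranging gives $\sum_{i=0}^{k}\binom{m}{i} \leq (em/k)^k$, as required.

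The proof is essentially routine combinatorics; the only substantive step is the verification that $T_1$ has VC dimension strictly smaller than $T$, which is the content of the shattering lift ``$S$ shattered by $T_1$ $\Rightarrow$ $S \cup \{x_m\}$ shattered by $T$.'' Everything else is Pascal's identity, induction bookkeeping, and the standard binomial inequality, so I do not anticipate a significant obstacle.
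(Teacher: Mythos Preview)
Your argument is correct: this is the standard proof of the Sauer--Shelah lemma via double induction on $(m,k)$, followed by the usual binomial-theorem trick for the closed-form bound. The one awkward phrase is ``$T_1$ cannot shatter any $(k-1)$--set plus $x_m$,'' since $T_1$ lives in $\{-1,+1\}^{m-1}$ and does not see $x_m$ at all; what you actually need (and have already established in the preceding sentence) is that if $T_1$ shattered some $S$ of size $k$, then $T$ would shatter $S\cup\{x_m\}$ of size $k+1$, contradicting $\text{VC-dim}(\mathcal{F})=k$. With that phrasing cleaned up, the proof goes through.

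As for comparison with the paper: there is nothing to compare. The paper does not prove this lemma; it merely cites it as \cite[Theorem~7.6]{anthony1999neural}. Your write-up supplies exactly the textbook argument that reference contains, so it is entirely appropriate here.
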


\newpage

\section{Feature Learning}
In this section, we provide a full introduction to feature learning, which is widely applied in theoretical works \citep{allen2020backward,allen2020towards,allen2022feature,shen2022data,jelassi2022towards,jelassi2022vision,chen2022towards} explore what and how neural networks learn in different tasks. In this work, we first leverage feature learning theory to explain CGRO phenomenon in adversarial training. Specifically, for an arbitrary clean training data-point $(\boldsymbol{X}, y) \sim \mathcal{D}$ and a given model $f_{\boldsymbol{W}}$, we focus on
\begin{itemize}
    \item 
    \textbf{True Feature Learning.} We project the weight $\boldsymbol{W}$ on the meaningful signal vector to measure the correlation between the model and the true feature as
    \begin{equation}
        \mathcal{U} := \sum_{r=1}^{m}\left\langle \boldsymbol{w}_{r}, \boldsymbol{w}^{*} \right\rangle^{q} . \nonumber
    \end{equation}
    \item 
    \textbf{Spurious Feature Learning.} We project the weight $\boldsymbol{W}$ on the random noise to measure the correlation between the model and the spurious feature as
    \begin{equation}
        \mathcal{V} := y \sum_{r=1}^{m}\sum_{j \in [P]\setminus \operatorname{signal}(\boldsymbol{X})} \left\langle \boldsymbol{w}_{r}, \boldsymbol{X}[j] \right\rangle^{q} . \nonumber 
    \end{equation}
\end{itemize}
We then calculate the model's classification correctness on certain clean data point as
\begin{equation}
\begin{aligned}
    y f_{\boldsymbol{W}}(\boldsymbol{X}) 
    &= y \sum_{r=1}^{m} \left\langle\boldsymbol{w}_r, \boldsymbol{X}[\operatorname{signal}(\boldsymbol{X})]\right\rangle^{q} + y \sum_{r=1}^{m}\sum_{j \in [P]\setminus \operatorname{signal}(\boldsymbol{X})} \left\langle \boldsymbol{w}_{r}, \boldsymbol{X}[j] \right\rangle^{q}
    \\
    &= \underbrace{y \sum_{r=1}^{m} \left\langle\boldsymbol{w}_r, \alpha y \boldsymbol{w}^{*} \right\rangle^{q}}_{{\alpha}^{q}\mathcal{U}} + \underbrace{y \sum_{r=1}^{m}\sum_{j \in [P]\setminus \operatorname{signal}(\boldsymbol{X})} \left\langle \boldsymbol{w}_{r}, \boldsymbol{X}[j] \right\rangle^{q}}_{\mathcal{V}} . \nonumber
    \nonumber
\end{aligned}
\end{equation}
Thus, the model correctly classify the data if and only if ${\alpha}^{q}\mathcal{U}+\mathcal{V} \ge 0$, which holds in at least two cases. Indeed, one is that the model learns the true feature and ignores the spurious features, where $\mathcal{U} = \Omega(1) \gg |\mathcal{V}|$. Another is that the model doesn't learn the true feature but memorizes the spurious features, where $|\mathcal{U}| = o(1)$ and $|\mathcal{V}| = \Omega(1) \gg 0$. 

Therefore, this analysis tells us that the model will generalize well for unseen data if the model learns true feature learning. But the model will overfit training data if the model only memorizes spurious features since the data-specific random noises are independent for distinct instances, which means that, with high probability, it holds that $\mathcal{V} = o(1)$ for unseen data $(\boldsymbol{X},y)$.

We also calculate the model's classification correctness on perturbed data point, where we use  attack proposed in definition \ref{def:at} to generate adversarial example as
\begin{equation}
\begin{aligned}
    \boldsymbol{X}^{\text{adv}}[j] = \left\{\begin{array}{l}
         \alpha(1-\gamma) y \boldsymbol{w}^{*}, j = \operatorname{signal}(\boldsymbol{X})  \\
         \boldsymbol{X}[j], j \in [p] \setminus \operatorname{signal}(\boldsymbol{X})
    \end{array}\right . \nonumber
\end{aligned}
\end{equation}
We then derive the correctness as
\begin{equation}
\begin{aligned}
     y f_{\boldsymbol{W}}(\boldsymbol{X}^{\text{adv}}) 
    &= y \sum_{r=1}^{m} \left\langle\boldsymbol{w}_r, \boldsymbol{X}^{\text{adv}}[\operatorname{signal}(\boldsymbol{X})]\right\rangle^{q} + y \sum_{r=1}^{m}\sum_{j \in [P]\setminus \operatorname{signal}(\boldsymbol{X})} \left\langle \boldsymbol{w}_{r}, \boldsymbol{X}^{\text{adv}}[j] \right\rangle^{q}
    \\
    &= \underbrace{y \sum_{r=1}^{m} \left\langle\boldsymbol{w}_r, \alpha(1-\gamma) y \boldsymbol{w}^{*} \right\rangle^{q}}_{{\alpha}^{q}(1-\gamma)^{q}\mathcal{U}} + \underbrace{y \sum_{r=1}^{m}\sum_{j \in [P]\setminus \operatorname{signal}(\boldsymbol{X})} \left\langle \boldsymbol{w}_{r}, \boldsymbol{X}[j] \right\rangle^{q}}_{\mathcal{V}} . \nonumber
    \nonumber
\end{aligned}
\end{equation}

Thus, the model correctly classify the perturbed data if and only if $\alpha^{q}(1-\gamma)^{q}\mathcal{U} + \mathcal{V} \ge 0$, which implies that We can analyze the perturbed data similarly.

\newpage

\section{Proof for Section \ref{dynamics}}
\label{appendix:CGRO_thm}
In this section, we present the full proof for Section \ref{dynamics}. To simplify our proof, we only focus on the case when $q=3$, and it can be easily extended to the general case when $q\geq 2$. And we need the re-defined notation as the following.

For $r \in [m]$, $i \in [N]$ and $j \in [P]\setminus \operatorname{signal}(\boldsymbol{X}_i)$, we define $u_{r}^{(t)}$ and $v_{i,j,r}^{(t)}$ as

\textbf{Signal Component.} $u_{r}^{(t)} := \left\langle \boldsymbol{w}_{r}^{(t)}, \boldsymbol{w}^{*}\right\rangle$, thus $\mathcal{U}^{(t)} = \sum_{r \in [m]} \left(u_{r}^{(t)}\right)^{3}$.

\textbf{Noise Component.} $v_{i,j,r}^{(t)} := y_i \left\langle \boldsymbol{w}_{r}^{(t)}, \boldsymbol{X}_i[j]\right\rangle$, thus $\mathcal{V}_{i}^{(t)} = \sum_{r \in [m]}\sum_{j \in [P]\setminus \operatorname{signal}(\boldsymbol{X}_i)} \left(v_{i,j,r}^{(t)}\right)^{3} $.

First, we give detailed proofs of Lemma \ref{lem:lower_signal}, Lemma \ref{lem:upper_signal} and Lemma \ref{lem:lower_noise}. Then, we prove Theorem \ref{thm:main} base on the above lemmas. 

We prove our main results using an induction. More specifically, we make the following assumptions for each iteration $t < T$.

\begin{hypothesis}
\label{hyp}

Throughout the learning process using the adversarial training update for $t < T$,
we maintain that:
\begin{itemize}
    \item 
    (Uniform Bound for Signal Component) For each $r \in [m]$, we assume $u_{r}^{(t)} \le \Tilde{O}(\alpha^{-1})$.
    \item 
    (Uniform Bound for Noise Component) For each $r \in [m]$, $i \in [N]$ and $j \in [P]\setminus \operatorname{signal}(\boldsymbol{X}_i)$, we assume $|v_{i,j,r}^{(t)}| \le \Tilde{O}(1)$.
\end{itemize}
\end{hypothesis}
In what follows, we assume these induction hypotheses for $t < T$ to prove our main results. We then prove these
hypotheses for iteration $t = T$ in Lemma \ref{lem:hyp_proof}.

Now, we first give proof details about Lemma \ref{lem:lower_signal}.

\begin{theorem}
\label{thm:lower_signal}

(Restatement of Lemma \ref{lem:lower_signal}) For each $r \in [m]$ and any $t \ge 0$, the signal component grows as
\begin{equation}
    u_{r}^{(t+1)} \ge u_{r}^{(t)} + \Theta(\eta \alpha^{3}) \left(u_{r}^{(t)}\right)^{2} \psi\left(\alpha^{3} \sum_{k \in [m]} \left(u_{k}^{(t)}\right)^{3}\right) , \nonumber
\end{equation}
where we use $\psi(\cdot)$ to denote the negative sigmoid function $\psi(z) = \frac{1}{1 + e^{z}}$ as well as Lemma \ref{lem:upper_signal},\ref{lem:lower_noise}.
\end{theorem}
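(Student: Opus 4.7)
The plan is to differentiate the adversarial training objective with respect to each filter $\boldsymbol{w}_r$, project the gradient-descent update onto the signal direction $\boldsymbol{w}^*$, and exploit the orthogonal decomposition of the noise distribution to reduce the resulting recursion to the form stated in the lemma.

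Using $\ell(z) = \log(1+e^{-z})$ with $\ell'(z) = -\psi(z)$, the gradient of $\widehat{\mathcal{L}}_{\text{adv}}$ with respect to $\boldsymbol{w}_r$ is a sum of patch-wise terms of the shape $3 y_i \langle \boldsymbol{w}_r, \boldsymbol{X}[j]\rangle^2 \boldsymbol{X}[j]$, each weighted by $\psi$ evaluated at the corresponding margin. The key structural observation is that the noise distribution $\mathcal{N}(0,(\mathbf{I}_d - \boldsymbol{w}^*\boldsymbol{w}^{*\top})\sigma^2)$ is supported on $\{\boldsymbol{w}^*\}^{\perp}$, so that $\langle \boldsymbol{X}_i[j], \boldsymbol{w}^* \rangle = 0$ identically on every noise patch $j \neq \operatorname{signal}(\boldsymbol{X}_i)$. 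Taking the inner product of the gradient-descent step with $\boldsymbol{w}^*$ therefore collapses the double sum over $(i,j)$ to its signal-patch contributions only. Plugging in $\boldsymbol{X}_i[\operatorname{signal}(\boldsymbol{X}_i)] = \alpha y_i \boldsymbol{w}^*$ and $\boldsymbol{X}_i^{\text{adv}}[\operatorname{signal}(\boldsymbol{X}_i)] = \alpha(1-\gamma) y_i \boldsymbol{w}^*$ and using $y_i^2 = 1$ produces the exact recursion
\begin{equation}
u_r^{(t+1)} = u_r^{(t)} + \frac{3\eta \alpha^3 (u_r^{(t)})^2}{N}\sum_{i=1}^N\Bigl[(1-\lambda)\psi\bigl(\alpha^3\mathcal{U}^{(t)} + \mathcal{V}_i^{(t)}\bigr) + \lambda(1-\gamma)^3\psi\bigl(\alpha^3(1-\gamma)^3\mathcal{U}^{(t)} + \mathcal{V}_i^{(t)}\bigr)\Bigr]. \nonumber
\end{equation}

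Both summands on the right-hand side are nonnegative, so I drop the adversarial term and keep only the clean one, which is weighted by $(1-\lambda) = \Theta(1)$. What remains is to replace $\psi(\alpha^3\mathcal{U}^{(t)} + \mathcal{V}_i^{(t)})$ by $\psi(\alpha^3\mathcal{U}^{(t)})$ up to a multiplicative factor that the $\Theta(\cdot)$ in the statement is allowed to absorb. I invoke the induction Hypothesis~\ref{hyp}, which guarantees $|v_{i,j,r}^{(t)}| \le \Tilde{O}(1)$ and hence $|\mathcal{V}_i^{(t)}| \le mP \cdot \Tilde{O}(1) = \Tilde{O}(1)$ since $m, P = \operatorname{polylog}(d)$. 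Combined with the elementary shift inequality $\psi(x+c) \ge e^{-\max(c,0)}\psi(x)$, which follows by writing $\psi(x+c) = (1+e^{c}e^{x})^{-1}$ for $c\ge 0$ and using monotonicity of $\psi$ for $c\le 0$, this yields $\psi(\alpha^3\mathcal{U}^{(t)}+\mathcal{V}_i^{(t)}) \ge \Omega(1)\cdot\psi(\alpha^3\mathcal{U}^{(t)})$ up to the polylogarithmic slack that $\Theta$ is allowed to hide, and the lemma follows.

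The main technical subtlety lies in this last step: the argument of $\psi$ in the statement of the lemma is $\alpha^3\mathcal{U}^{(t)}$ alone, whereas the gradient naturally produces the shifted argument $\alpha^3\mathcal{U}^{(t)}+\mathcal{V}_i^{(t)}$. Controlling this shift cleanly relies both on the inductive control of $|v_{i,j,r}^{(t)}|$ and on the log-concave structure of $\psi$. What makes the whole reduction work is the exact orthogonality $\boldsymbol{X}_i[j] \perp \boldsymbol{w}^*$ on noise patches: it guarantees that the $v_{i,j,r}$ variables enter the $u_r$-recursion only through this bounded additive scalar perturbation inside $\psi$ and nowhere else, so that a single-variable tensor-power-method analysis applied to the resulting recursion $u_r^{(t+1)} \ge u_r^{(t)} + \Theta(\eta\alpha^3)(u_r^{(t)})^2 \cdot \psi(\alpha^3\sum_k (u_k^{(t)})^3)$ suffices to close the loop.
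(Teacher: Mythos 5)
Your proposal follows essentially the same route as the paper's proof: compute the gradient, project the update onto $\boldsymbol{w}^*$ using the exact orthogonality of the noise patches, drop the nonnegative adversarial summand, and absorb the shift $\mathcal{V}_i^{(t)}$ inside $\psi$ via the induction hypothesis (the paper states this last step as $\psi(y_i f_{\boldsymbol{W}^{(t)}}(\boldsymbol{X}_i)) = \Theta(1)\,\psi(\alpha^3\sum_k (u_k^{(t)})^3)$ without the explicit shift inequality you supply). The argument is correct and matches the paper's, with your version being slightly more explicit about the final comparison of the two $\psi$ arguments.
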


\begin{proof}

First, we calculate the gradient of adversarial loss with respect to $\boldsymbol{w}_{r} (r\in [m])$ as
\begin{equation}
\begin{aligned}
     \nabla_{\boldsymbol{w}_{r}}\widehat{\mathcal{L}}_{\text{adv}}(\boldsymbol{W}^{(t)}) &= -\frac{3}{N} \sum_{i=1}^N \sum_{j=1}^P \left(\frac{(1-\lambda) y_i\left\langle\boldsymbol{w}_r^{(t)}, \boldsymbol{X}_i[j]\right\rangle^2}{1+\exp \left(y_i f_{\boldsymbol{W}^{(t)}}\left(\boldsymbol{X}_i\right)\right)} \boldsymbol{X}_i[j] +   \frac{\lambda y_i\left\langle\boldsymbol{w}_r^{(t)}, \boldsymbol{X}_i^{\text{adv}}[j]\right\rangle^2}{1+\exp \left(y_i f_{\boldsymbol{W}^{(t)}}\left(\boldsymbol{X}_i^{\text{adv}}\right)\right)} \boldsymbol{X}_i^{\text{adv}}[j] \right) 
     \\
     &= -\frac{3}{N}\left(\left(u_{r}^{(t)}\right)^{2}\left(\sum_{i=1}^{N}(1-\lambda)\alpha^{3}\psi(y_i f_{\boldsymbol{W}^{(t)}}(\boldsymbol{X}_i)) + \lambda\alpha^{3}(1-\gamma)^{3}\psi(y_i f_{\boldsymbol{W}^{(t)}}(\boldsymbol{X}^{\text{adv}}_i))\right)\boldsymbol{w}^{*} \right. 
     \\
     &+ \left. \sum_{i=1}^{N}\sum_{j \ne \operatorname{signal}(\boldsymbol{X}_i)} \left(v_{i,j,r}^{(t)}\right)^{2}\left((1-\lambda)\psi(y_i f_{\boldsymbol{W}^{(t)}}(\boldsymbol{X}_i))+\lambda\psi(y_i f_{\boldsymbol{W}^{(t)}}(\boldsymbol{X}^{\text{adv}}_i))\right)\boldsymbol{X}_i[j] \right) . 
     \nonumber
\end{aligned}
\end{equation}

Then, we project the gradient descent algorithm equation $\boldsymbol{W}^{(t+1)}=\boldsymbol{W}^{(t)}-\eta \nabla_{\boldsymbol{W}}\widehat{\mathcal{L}}_{\text{adv}}\left(\boldsymbol{W}^{(t)}\right)$ on the signal vector $\boldsymbol{w}^{*}$. We derive the following result due to $\boldsymbol{X}_i[j] \perp \boldsymbol{w}^*$ for $j \in [P]\setminus \operatorname{signal}(\boldsymbol{X}_i)$.
\begin{equation}
\begin{aligned}
    u_{r}^{(t+1)} &= u_{r}^{(t)} + \frac{3\eta}{N}\left(u_{r}^{(t)}\right)^{2}\sum_{i=1}^{N}\left((1-\lambda)\alpha^{3}\psi(y_i f_{\boldsymbol{W}^{(t)}}(\boldsymbol{X}_i)) + \lambda\alpha^{3}(1-\gamma)^{3}\psi(y_i f_{\boldsymbol{W}^{(t)}}(\boldsymbol{X}^{\text{adv}}_i))\right) 
    \\
    & \ge u_{r}^{(t)} + \frac{3\eta\alpha^{3}(1-\lambda)}{N}\left(u_{r}^{(t)}\right)^{2}\sum_{i=1}^{N} \psi(y_i f_{\boldsymbol{W}^{(t)}}(\boldsymbol{X}_i)) 
    \\
    & \ge u_{r}^{(t)} + \Theta(\eta \alpha^{3}) \left(u_{r}^{(t)}\right)^{2} \psi\left(\alpha^{3} \sum_{k \in [m]} \left(u_{k}^{(t)}\right)^{3}\right) ,
    \nonumber
\end{aligned}
\end{equation}
where we derive last inequality by using $\psi(y_i f_{\boldsymbol{W}^{(t)}}(\boldsymbol{X}_i)) = \Theta(1) \psi\left(\alpha^{3} \sum_{k \in [m]} \left(u_{k}^{(t)}\right)^{3}\right)$, which is obtained due to Hypothesis \ref{hyp}.

\end{proof}

Consequently, we have the following result that shows the order of maximum signal component.
\begin{lemma}
\label{lem:max_signal}
    During adversarial training, with high probability, it holds that, after $T_0=\tilde{\Theta}\left(\frac{1}{\eta \alpha^3 \sigma_0}\right)$ iterations, for all $t \in\left[T_0, T\right]$, we have $\max_{r\in [m]}u_{r}^{(t)} \geq \tilde{\Omega}(\alpha^{-1})$.
\end{lemma}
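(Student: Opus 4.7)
The plan is to exhibit a single ``winning'' neuron $r^*$ whose initial signal correlation $u_{r^*}^{(0)}$ is already $\tilde{\Omega}(\sigma_0)$, then ride the quadratic lower bound of Lemma \ref{lem:lower_signal} with the tensor power method (Lemma \ref{lem:tensor_1}) until $u_{r^*}^{(t)}$ crosses the target scale $\tilde{\Omega}(\alpha^{-1})$; monotonicity of $u_{r^*}^{(t)}$ then extends the lower bound to every $t \in [T_0, T]$.

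For the initialization step, since $\boldsymbol{w}_r^{(0)} \sim \mathcal{N}(0, \sigma_0^2 \mathbf{I}_d)$ and $\|\boldsymbol{w}^*\|_2 = 1$, each $u_r^{(0)} = \langle \boldsymbol{w}_r^{(0)}, \boldsymbol{w}^*\rangle$ is a univariate $\mathcal{N}(0,\sigma_0^2)$ random variable and the $m = \operatorname{polylog}(d)$ of them are independent. A standard maximum-of-Gaussians estimate gives, with high probability, an index $r^*$ with $u_{r^*}^{(0)} \ge c\sigma_0$ for an absolute constant $c>0$, i.e.\ $u_{r^*}^{(0)} = \tilde{\Omega}(\sigma_0)$. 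I then specialize Lemma \ref{lem:lower_signal} to $r = r^*$. Under the running induction Hypothesis \ref{hyp}, every $u_k^{(t)}$ is at most $\tilde O(\alpha^{-1})$, so $\alpha^3 \sum_{k=1}^m (u_k^{(t)})^3 \le m \cdot \tilde O(1) = \tilde O(1)$, which forces $\psi\bigl(\alpha^3 \sum_k (u_k^{(t)})^3\bigr) \ge 1/\operatorname{polylog}(d)$. Consequently Lemma \ref{lem:lower_signal} simplifies to
\[ u_{r^*}^{(t+1)} \ge u_{r^*}^{(t)} + A\bigl(u_{r^*}^{(t)}\bigr)^2, \qquad A := \tilde{\Theta}(\eta\alpha^3), \]
and the correction term is non-negative because $\psi > 0$ always, so $(u_{r^*}^{(t)})_{t\ge 0}$ is non-decreasing.

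Applying Lemma \ref{lem:tensor_1} to $z^{(t)} := u_{r^*}^{(t)}$ with $z^{(0)} = \tilde{\Omega}(\sigma_0)$, coefficient $A = \tilde{\Theta}(\eta\alpha^3)$ on both sides (the upper-bound coefficient $M$ is of the same order, using the analogous upper bound that drops out of the same gradient computation), and target $v = \tilde{\Theta}(\alpha^{-1})$, the escape time is
\[ t_0 \;=\; \frac{3}{A z^{(0)}} + O\!\left(\log\!\bigl(v / z^{(0)}\bigr)\right) \;=\; \tilde{\Theta}\!\left(\frac{1}{\eta\alpha^3 \sigma_0}\right), \]
which matches the claimed $T_0$. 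Hence $u_{r^*}^{(T_0)} \ge \tilde{\Omega}(\alpha^{-1})$, and monotonicity extends this to $\max_{r\in[m]} u_r^{(t)} \ge u_{r^*}^{(t)} \ge u_{r^*}^{(T_0)} = \tilde{\Omega}(\alpha^{-1})$ for every $t \in [T_0, T]$.

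The main technical subtlety is that the purely quadratic regime used in the tensor-power step requires Hypothesis \ref{hyp} to hold uniformly over $t \le T_0$. This is proved separately in Lemma \ref{lem:hyp_proof} by a joint induction in $t$ together with Lemma \ref{lem:upper_signal} and Lemma \ref{lem:lower_noise}, so one must verify that the induction closes without circularity. The check is clean here because the present lemma only consumes the upper-bound side of Hypothesis \ref{hyp} on every $u_k^{(t)}$, while delivering a lower bound on $\max_r u_r^{(t)}$; the two sides do not interfere, and at each inductive step the hypotheses are invoked only at times strictly smaller than the one being certified.
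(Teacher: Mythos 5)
Your proposal follows essentially the same route as the paper's proof: both extract a neuron with a positive (order $\sigma_0$) initial signal component from the Gaussian initialization, use Hypothesis \ref{hyp} to reduce Lemma \ref{lem:lower_signal} to a two-sided quadratic recursion with coefficient $\tilde{\Theta}(\eta\alpha^3)$, and invoke the tensor power method (Lemma \ref{lem:tensor_1}) with target $v=\tilde{\Theta}(\alpha^{-1})$ to obtain $T_0=\tilde{\Theta}(1/(\eta\alpha^3\sigma_0))$, extending to all $t\in[T_0,T]$ by monotonicity. Your initialization step is in fact slightly more careful than the paper's (which only argues some $u_{r'}^{(0)}\ge 0$, whereas the rate genuinely requires $u_{r^*}^{(0)}=\tilde{\Omega}(\sigma_0)$ as you state), but the argument is the same.
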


\begin{proof}

From the proof of Theorem \ref{thm:lower_signal}, we know that
\begin{equation}
     u_{r}^{(t+1)} - u_{r}^{(t)} = \frac{3\eta}{N}\left(u_{r}^{(t)}\right)^{2}\sum_{i=1}^{N}\left((1-\lambda)\alpha^{3}\psi(y_i f_{\boldsymbol{W}^{(t)}}(\boldsymbol{X}_i)) + \lambda\alpha^{3}(1-\gamma)^{3}\psi(y_i f_{\boldsymbol{W}^{(t)}}(\boldsymbol{X}^{\text{adv}}_i))\right) . \nonumber
\end{equation}
By applying Hypothesis \ref{hyp}, we can simplify the above equation to the following inequalities.
$$
\left\{\begin{array}{l}
u_r^{(t+1)} \leq u_r^{(t)}+A\left(u_r^{(t)}\right)^2 \\
u_r^{(t+1)} \geq u_r^{(t)}+B\left(u_r^{(t)}\right)^2
\end{array}\right.
$$
where $A$ and $B$ are respectively defined as:
$$
\begin{aligned}
& A:=\tilde{\Theta}(\eta)\left((1-\lambda) \alpha^3+\lambda \alpha^3(1-\gamma)^3\right) \\
& B:=\tilde{\Theta}(\eta)(1-\lambda) \alpha^3 .
\end{aligned}
$$

At initialization, since we choose the weights $\boldsymbol{w}_r^{(0)} \sim \mathcal{N}\left(0, \sigma_0^2 \mathbf{I}_d\right)$, we know the initial signal components $u_{r}^{(0)}$ are i.i.d. zero-mean Gaussian random variables, which implies that the probability that at least one of the $u_{r}^{(0)}$ is non-negative is $1 - \left(\frac{1}{2}\right)^{m} = 1 - o(1)$. 

Thus, with high probability, there exists an initial signal component $u_{r'}^{(0)} \ge 0$. By using Tensor Power Method (Lemma \ref{lem:tensor_1}) and setting $v = \Tilde{\Theta}(\alpha^{-1})$, we have the threshold iteration $T_{0}$ as 
\begin{equation}
T_0=\frac{\tilde{\Theta}(1)}{\eta \alpha^3 \sigma_0}+\frac{\tilde{\Theta}(1)\left((1-\lambda) \alpha^3+\lambda \beta^3\right)}{(1-\lambda) \alpha^3}\left\lceil\frac{-\log \left(\tilde{\Theta}\left(\sigma_0 \alpha\right)\right)}{\log (2)}\right\rceil . \nonumber
\end{equation}

\end{proof}

Next, we prove Lemma \ref{lem:upper_signal} to give an upper bound of signal components' growth.

\begin{theorem}
\label{thm:upper_signal}

(Restatement of Lemma \ref{lem:upper_signal}) For $T_{0} = \Theta\left(\frac{1}{\eta \alpha^{3} \sigma_{0}}\right)$ and any $t \in [T_{0}, T]$, the signal component is upper bounded as
\begin{equation}
    \max_{r \in [m]}u_{r}^{(t)} \le \Tilde{O}(\alpha^{-1}) + \Tilde{O}\left(\frac{\eta \alpha^{3} (1 - \gamma)^{3}}{N}\right) \sum_{s = T_{0}}^{t - 1} \sum_{i=1}^{N} \psi\left(\alpha^{3} (1 - \gamma)^{3} \sum_{k \in [m]} \left(u_{k}^{(s)}\right)^{3} + \mathcal{V}_{i}^{(s)}\right) . \nonumber
\end{equation}
\end{theorem}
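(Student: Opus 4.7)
The plan is to re-use the gradient projection onto $\boldsymbol{w}^{*}$ from the proof of Lemma~\ref{lem:lower_signal}, treat that identity as an exact equality, bound the base case via the Tensor Power Method, and telescope to $t$ while absorbing the clean contribution into the adversarial one.

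\textbf{Step 1 (exact update).} Projecting the gradient descent step onto $\boldsymbol{w}^{*}$ uses only the orthogonality of every noisy patch to $\boldsymbol{w}^{*}$, so the computation inside the proof of Lemma~\ref{lem:lower_signal} is in fact an equality:
\begin{equation*}
u_r^{(t+1)} - u_r^{(t)} = \frac{3\eta}{N}(u_r^{(t)})^2 \sum_{i=1}^{N} \Bigl[(1-\lambda)\alpha^3 \psi\bigl(y_i f_{\boldsymbol{W}^{(t)}}(\boldsymbol{X}_i)\bigr) + \lambda \alpha^3 (1-\gamma)^3 \psi\bigl(y_i f_{\boldsymbol{W}^{(t)}}(\boldsymbol{X}_i^{\text{adv}})\bigr)\Bigr].
\end{equation*}
The right-hand side is non-negative, so every $u_r^{(t)}$ is monotonically non-decreasing; moreover any component that starts negative is pushed toward zero but never crosses it, hence $|u_r^{(t)}| \le |u_r^{(0)}| \le \Tilde{O}(\sigma_0)$ whenever $u_r^{(0)} < 0$. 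Combined with Lemma~\ref{lem:max_signal} this gives $\mathcal{U}^{(s)} \ge (\max_r u_r^{(s)})^3 - m\,\Tilde{O}(\sigma_0^3) \ge \Tilde{\Omega}(\alpha^{-3}) > 0$ for every $s \ge T_0$, which I will use crucially in Step~3.

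\textbf{Step 2 (base value).} To bound $u_r^{(T_0)}$, I would invoke the twin upper inequality $u_r^{(t+1)} \le u_r^{(t)} + \Tilde{\Theta}(\eta \alpha^3)(u_r^{(t)})^2$ that already appears inside the proof of Lemma~\ref{lem:max_signal}, together with the upper branch of the Tensor Power Method (Lemma~\ref{lem:tensor_1}). This yields $u_r^{(T_0)} \le \Tilde{O}(\alpha^{-1})$ uniformly in $r \in [m]$, producing the additive term $\Tilde{O}(\alpha^{-1})$ in the statement.

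\textbf{Step 3 (telescope and merge the two $\psi$-terms).} For $t \ge T_0$ I would telescope the exact update from $s=T_0$ to $s=t-1$, use Hypothesis~\ref{hyp} to replace $(u_r^{(s)})^2$ by $\Tilde{O}(\alpha^{-2})$, and then collapse the two $\psi$-terms into a single one by monotonicity of $\psi$: since $\mathcal{U}^{(s)} \ge 0$ and $0 < 1-\gamma < 1$, the clean logit $\alpha^3 \mathcal{U}^{(s)} + \mathcal{V}_i^{(s)}$ dominates the adversarial logit $\alpha^3 (1-\gamma)^3 \mathcal{U}^{(s)} + \mathcal{V}_i^{(s)}$, so $\psi$ of the former is pointwise at most $\psi$ of the latter. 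Each summand is then bounded by a quantity of the form $\Tilde{O}\!\left(\frac{\eta \alpha^3 (1-\gamma)^3}{N}\right) \psi\bigl(\alpha^3 (1-\gamma)^3 \mathcal{U}^{(s)} + \mathcal{V}_i^{(s)}\bigr)$, which is exactly the right-hand side expression (the combinatorial factor $(1-\lambda)/\lambda \le \operatorname{poly}(d)$ is absorbed into $\Tilde{O}(\cdot)$ thanks to the assumption $\lambda \ge 1/\operatorname{poly}(d)$).

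\textbf{Main obstacle.} The delicate point is the absorption of the clean contribution into the adversarial one. A naive $\psi \le 1$ bound on the clean term would leave a coefficient $(1-\lambda)\alpha^3$ without any $(1-\gamma)^3$ suppression, and accumulated across $T = \operatorname{poly}(d)$ iterations this would overwhelm the claimed $\Tilde{O}(\alpha^{-1})$ base term. The monotonicity-of-$\psi$ trick rescues the bound, but it depends on the global positivity of $\mathcal{U}^{(s)}$ after $T_0$. Establishing that positivity is where one must combine three ingredients: (i) every $u_r^{(t)}$ is non-decreasing along the adversarial-training trajectory, (ii) negative initial components decay in magnitude so $|u_r^{(t)}|^3 \le \Tilde{O}(\sigma_0^3) \ll \alpha^{-3}$, and (iii) Lemma~\ref{lem:max_signal} guarantees at least one component reaches $\Tilde{\Omega}(\alpha^{-1})$ by time $T_0$, whose cube dominates all other contributions to $\mathcal{U}^{(s)}$.
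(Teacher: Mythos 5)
There is a genuine gap in Step 3, precisely at the point you flag as the ``main obstacle.'' Monotonicity of $\psi$ only gives $\psi\bigl(\alpha^{3}\mathcal{U}^{(s)}+\mathcal{V}_i^{(s)}\bigr)\le\psi\bigl(\alpha^{3}(1-\gamma)^{3}\mathcal{U}^{(s)}+\mathcal{V}_i^{(s)}\bigr)$; it does nothing to the \emph{coefficient} of the clean term, which remains $(1-\lambda)\alpha^{3}$. To land on the stated bound you need that coefficient to become $\Tilde{O}\bigl(\alpha^{3}(1-\gamma)^{3}\bigr)$, i.e.\ you need an extra factor $(1-\gamma)^{3}=\Theta\bigl(d^{-3/2}\bigr)$ that monotonicity cannot supply (and the $(1-\lambda)/\lambda\le\operatorname{poly}(d)$ factor cannot be hidden in $\Tilde{O}(\cdot)$ either, since the paper's $\Tilde{O}$ absorbs only polylog factors). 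The resulting bound, $\max_r u_r^{(t)}\le\Tilde{O}(\alpha^{-1})+\Tilde{O}(\eta\alpha^{3}/N)\sum_{s,i}\psi(\cdot)$, is weaker than the statement by $d^{3/2}$, and this is fatal downstream: combined with Lemma \ref{lem:adv_grad} it yields $\Tilde{O}(\alpha^{3}\sigma_0^{-1})=d^{1.247}$ instead of $\Tilde{O}(\alpha^{3}(1-\gamma)^{3}\sigma_0^{-1})\le\Tilde{O}(\alpha^{-1})$, so Lemma \ref{lem:hyp_proof} would no longer close the induction.

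What the paper actually uses is not positivity of $\mathcal{U}^{(s)}$ but its \emph{quantitative} size: after $T_0$ one has $\alpha^{3}\sum_k(u_k^{(s)})^{3}\ge\Tilde{\Omega}(1)$, so the clean derivative $\psi(y_if(\boldsymbol{X}_i))$ is of order $1/(1+\exp(\Tilde{\Omega}(1)))$, super-polynomially small, and its accumulation over $T=\operatorname{poly}(d)$ steps is negligible. The paper extracts this via a self-referential step: it first shows $\frac{1-\lambda}{N}\sum_i\psi(y_if(\boldsymbol{X}_i))\le\Tilde{O}(\eta^{-1}\alpha^{-1})\bigl(\max_ru_r^{(t+1)}-\max_ru_r^{(t)}\bigr)$, then bounds that increment by a $\phi\bigl(\alpha^{3}\sum_r(u_r^{(t)})^{3}\bigr)$ term (exponentially small) plus the adversarial term carrying the $(1-\gamma)^{3}$ factor. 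Your proposal would need to replace the monotonicity comparison with this quantitative smallness of the clean derivative (or an equivalent mechanism) to recover the $(1-\gamma)^{3}$ suppression. The remaining steps (exact projected update, base case at $T_0$, telescoping) match the paper's route and are fine modulo minor imprecision about the Tensor Power Method's upper branch.
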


\begin{proof}

First, we analyze the upper bound of derivative generated by clean data. By following the proof of Theorem \ref{thm:lower_signal}, we know that, for each $r \in [m]$,
\begin{equation}
\begin{aligned}
    \max_{r\in [m]}u_{r}^{(t+1)} &\ge \max_{r\in [m]}u_{r}^{(t)} + \frac{3\eta\alpha^{3}(1-\lambda)}{N}\left(\max_{r\in [m]}u_{r}^{(t)}\right)^{2}\sum_{i=1}^{N} \psi(y_i f_{\boldsymbol{W}^{(t)}}(\boldsymbol{X}_i)) 
    \\
    &\ge \max_{r\in [m]}u_{r}^{(t)} + \Tilde{\Omega}(\eta\alpha)\frac{1-\lambda}{N}\sum_{i=1}^{N} \psi(y_i f_{\boldsymbol{W}^{(t)}}(\boldsymbol{X}_i)) , \nonumber
\end{aligned}
\end{equation}
where we obtain the first inequality by the definition of $\max_{r\in [m]}u_{r}^{(t)}, \max_{r\in [m]}u_{r}^{(t+1)}$, and we use $\max_{r\in [m]}u_{r}^{(t)} \geq \tilde{\Omega}(\alpha^{-1})$ derived by Lemma \ref{lem:max_signal} in the last inequality. Thus, we then have
\begin{equation}
    \frac{1-\lambda}{N}\sum_{i=1}^{N} \psi(y_i f_{\boldsymbol{W}^{(t)}}(\boldsymbol{X}_i)) \le \Tilde{O}(\eta^{-1}\alpha^{-1})\left(\max_{r\in [m]}u_{r}^{(t+1)} - \max_{r\in [m]}u_{r}^{(t)}\right) . \nonumber
\end{equation}
Now, we focus on $\max_{r\in [m]}u_{r}^{(t+1)} - \max_{r\in [m]}u_{r}^{(t)}$. By the non-decreasing property of $u_{r}^{(t)}$, we have 
\begin{equation}
\begin{aligned}
    &\max_{r\in [m]}u_{r}^{(t+1)} - \max_{r\in [m]}u_{r}^{(t)} \le \sum_{r \in [m]}\left(u_{r}^{(t+1)} - u_{r}^{(t)}\right)
    \\
    &\le (1-\lambda)\Theta(\eta\alpha)\psi\left(\alpha^{3}\sum_{r\in [m]}\left(u_{r}^{(t)}\right)^{3}\right)\sum_{r\in [m]}\left(\alpha u_{r}^{(t)}\right)^{2} + \lambda\Theta\left(\frac{\eta\alpha^{3}(1-\gamma)^{3}}{N}\right)\sum_{i=1}^{N} \psi(y_i f_{\boldsymbol{W}^{(t)}}(\boldsymbol{X}^{\text{adv}}_i)) 
    \\
    &\le (1-\lambda)\Tilde{O}(\eta\alpha^{2})\phi\left(\alpha^{3}\sum_{r\in [m]}\left(u_{r}^{(t)}\right)^{3}\right) + \lambda\Theta\left(\frac{\eta\alpha^{3}(1-\gamma)^{3}}{N}\right)\sum_{i=1}^{N} \psi(y_i f_{\boldsymbol{W}^{(t)}}(\boldsymbol{X}^{\text{adv}}_i)) ,
    \nonumber
\end{aligned}
\end{equation}
where we use $\phi(\cdot)$ to denote the logistics function defined as $\phi(z) = \log(1+\exp(-z))$ and we derive the last inequality by Hypothesis \ref{hyp}. Then, we know
\begin{equation}
\begin{aligned}
    \frac{1-\lambda}{N}\sum_{i=1}^{N} \psi(y_i f_{\boldsymbol{W}^{(t)}}(\boldsymbol{X}_i)) &\le (1-\lambda)\Tilde{O}(\alpha)\phi\left(\alpha^{3}\sum_{r\in [m]}\left(u_{r}^{(t)}\right)^{3}\right)
    \\
    &+ \lambda\Theta\left(\frac{\alpha^{2}(1-\gamma)^{3}}{N}\right)\sum_{i=1}^{N} \psi(y_i f_{\boldsymbol{W}^{(t)}}(\boldsymbol{X}^{\text{adv}}_i)) . \nonumber
\end{aligned}
\end{equation}

Then, we derive the following result by Hypothesis \ref{hyp} and the above inequality.
\begin{equation}
\begin{aligned}
    \max_{r\in [m]}u_{r}^{(t+1)} &\le \max_{r\in [m]}u_{r}^{(t)} + \frac{3\eta}{N}\left(\max_{r\in [m]}u_{r}^{(t)}\right)^{2}\sum_{i=1}^{N}\left((1-\lambda)\alpha^{3}\psi(y_i f_{\boldsymbol{W}^{(t)}}(\boldsymbol{X}_i)) \right.
    \\
    &+ \left. \lambda\alpha^{3}(1-\gamma)^{3}\psi(y_i f_{\boldsymbol{W}^{(t)}}(\boldsymbol{X}^{\text{adv}}_i))\right) 
    \\
    &\le \max_{r\in [m]}u_{r}^{(t)} + \Tilde{\Theta}(\eta\alpha)\frac{1-\lambda}{N}\sum_{i=1}^{N} \psi(y_i f_{\boldsymbol{W}^{(t)}}(\boldsymbol{X}_i)) + \Tilde{\Theta}\left(\frac{\eta\alpha(1-\gamma)^{3}}{N}\right)\sum_{i=1}^{N}\psi(y_i f_{\boldsymbol{W}^{(t)}}(\boldsymbol{X}^{\text{adv}}_i))
    \\
    &\le \max_{r\in [m]}u_{r}^{(t)} + (1-\lambda)\Tilde{O}(\eta\alpha^{2})\phi\left(\alpha^{3}\sum_{r\in [m]}\left(u_{r}^{(t)}\right)^{3}\right) + \Tilde{\Theta}\left(\frac{\eta\alpha^{3}(1-\gamma)^{3}}{N}\right)\sum_{i=1}^{N}\psi(y_i f_{\boldsymbol{W}^{(t)}}(\boldsymbol{X}^{\text{adv}}_i))
    \\
    &\le \max_{r\in [m]}u_{r}^{(t)} + \frac{(1-\lambda)\Tilde{O}(\eta\alpha^{2})}{1+\exp\left(\alpha^{3}\sum_{r\in [m]}\left(u_{r}^{(t)}\right)^{3}\right)} + \Tilde{\Theta}\left(\frac{\eta\alpha^{3}(1-\gamma)^{3}}{N}\right)\sum_{i=1}^{N}\psi(y_i f_{\boldsymbol{W}^{(t)}}(\boldsymbol{X}^{\text{adv}}_i))
    \\
    &\le \max_{r\in [m]}u_{r}^{(t)} + \frac{(1-\lambda)\Tilde{O}(\eta\alpha^{2})}{1+\exp\left(\Tilde{\Omega}(1)\right)} + \Tilde{\Theta}\left(\frac{\eta\alpha^{3}(1-\gamma)^{3}}{N}\right)\sum_{i=1}^{N}\psi(y_i f_{\boldsymbol{W}^{(t)}}(\boldsymbol{X}^{\text{adv}}_i))
    . \nonumber 
\end{aligned}
\end{equation}

By summing up iteration $s = T_{0}, \dots, t-1$, we have the following result as
\begin{equation}
\begin{aligned}
    \max_{r \in [m]}u_{r}^{(t)} &\le \max_{r \in [m]}u_{r}^{(T_{0})} + \sum_{s=T_{0}}^{t-1}\frac{(1-\lambda)\Tilde{O}(\eta\alpha^{2})}{1+\exp\left(\Tilde{\Omega}(1)\right)} + \left(\frac{\eta \alpha^{3} (1 - \gamma)^{3}}{N}\right) \sum_{s = T_{0}}^{t - 1}\sum_{i=1}^{N}\psi(y_i f_{\boldsymbol{W}^{(s)}}(\boldsymbol{X}^{\text{adv}}_i))
    \\
    &\le \Tilde{O}(\alpha^{-1}) + \Tilde{O}\left(\frac{\eta \alpha^{3} (1 - \gamma)^{3}}{N}\right) \sum_{s = T_{0}}^{t - 1} \sum_{i=1}^{N} \psi\left(\alpha^{3} (1 - \gamma)^{3} \sum_{k \in [m]} \left(u_{k}^{(s)}\right)^{3} + \mathcal{V}_{i}^{(s)}\right) . \nonumber
\end{aligned}
\end{equation}

Therefore, we derive the conclusion of Theorem \ref{thm:upper_signal}.
\end{proof}

Next, we prove the following theorem about the update of noise components.
\begin{lemma}
\label{lem:noise_lem}

For each $r \in [m]$, $i \in [N]$ and $j \in [P]\setminus \operatorname{signal}(\boldsymbol{X}_i)$, any iteration$t_{0},t$ such that $t_{0} < t \le T$, with high probability, it holds that
\begin{equation}
\begin{aligned}
    \left|v_{i,j,r}^{(t)} - v_{i,j,r}^{(t_{0})} - \Theta\left(\frac{\eta\sigma^{2}d}{N}\right)\sum_{s=t_{0}}^{t-1}\Tilde{\psi}_i^{(s)} \left(v_{i,j,r}^{(s)}\right)^{2}\right| &\le \Tilde{O}\left(\frac{\lambda \eta \alpha^{3} (1 - \gamma)^{3}}{N}\right) \sum_{s = t_{0}}^{t - 1}\sum_{i=1}^{N}\psi(y_i f_{\boldsymbol{W}^{(s)}}(\boldsymbol{X}^{\text{adv}}_i)) 
    \\
    &+ \Tilde{O}(P\sigma^{2}\alpha^{-1}\sqrt{d}) , \nonumber
\end{aligned}
\end{equation}
where we use the notation $\Tilde{\psi}_i^{(s)}$ to denote $(1-\lambda)\psi(y_i f_{\boldsymbol{W}^{(s)}}(\boldsymbol{X}_i))+\lambda\psi(y_i f_{\boldsymbol{W}^{(s)}}(\boldsymbol{X}^{\text{adv}}_i))$.
\end{lemma}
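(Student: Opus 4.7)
The plan is to project the gradient-descent update onto the noise patch $\boldsymbol{X}_i[j]$ (scaled by $y_i$) and telescope across iterations $s=t_0,\ldots,t-1$. Writing $v_{i,j,r}^{(t)}-v_{i,j,r}^{(t_0)} = -\eta y_i \sum_{s=t_0}^{t-1}\langle\nabla_{\boldsymbol{w}_r}\widehat{\mathcal{L}}_{\text{adv}}(\boldsymbol{W}^{(s)}),\boldsymbol{X}_i[j]\rangle$ and plugging in the explicit gradient formula derived in the proof of Theorem~\ref{thm:lower_signal}, each summand becomes a sum over all index pairs $(i',j')$ of gradient contributions. I would then decompose this double sum into (a) the self-contribution $(i',j')=(i,j)$, (b) the signal-patch contributions $j'=\operatorname{signal}(\boldsymbol{X}_{i'})$, and (c) the remaining cross-noise contributions, and show that piece (a) matches the main term of the statement while (b) and (c) together produce the claimed error.

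For the self-contribution, the crucial observation is that $j\neq\operatorname{signal}(\boldsymbol{X}_i)$, so the GTA attack leaves this patch unchanged: $\boldsymbol{X}_i^{\text{adv}}[j]=\boldsymbol{X}_i[j]$. The clean and adversarial terms therefore combine cleanly into $\tilde{\psi}_i^{(s)}$, and the remaining factor $\|\boldsymbol{X}_i[j]\|^2$ concentrates at $\sigma^2 d(1\pm o(1))$ by a standard Gaussian-norm bound on $\mathcal{N}(0,(I_d-\boldsymbol{w}^*\boldsymbol{w}^{*\top})\sigma^2)$, giving exactly $\Theta(\eta\sigma^2 d/N)\tilde{\psi}_i^{(s)}(v_{i,j,r}^{(s)})^2$. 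The signal-patch pieces in (b) vanish identically because both $\boldsymbol{X}_{i'}[\operatorname{signal}(\boldsymbol{X}_{i'})]$ and $\boldsymbol{X}_{i'}^{\text{adv}}[\operatorname{signal}(\boldsymbol{X}_{i'})]$ lie along $\boldsymbol{w}^*$, while $\boldsymbol{X}_i[j]$ lies in the orthogonal complement of $\boldsymbol{w}^*$ by construction, so the inner products are zero regardless of $r$ or $s$. Only the cross-noise pieces (c) therefore need quantitative control.

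For (c), I would apply Gaussian concentration to obtain $|\langle\boldsymbol{X}_{i'}[j'],\boldsymbol{X}_i[j]\rangle|\le\tilde{O}(\sigma^2\sqrt{d})$ uniformly over all pairs by a union bound, and combine this with the induction Hypothesis~\ref{hyp} bound $(v_{i',j',r}^{(s)})^2\le\tilde{O}(1)$. Summing over the at most $NP$ pairs and over $s$, and splitting $\tilde{\psi}_{i'}^{(s)}$ into its clean and adversarial halves, the adversarial half yields a term of the form $\tilde{O}(\lambda\eta P\sigma^2\sqrt{d}/N)\sum_{s,i}\psi(y_i f(\boldsymbol{X}_i^{\text{adv}}))$, which under the parameter regime is absorbed into the first stated error summand $\tilde{O}(\lambda\eta\alpha^{3}(1-\gamma)^{3}/N)\sum\psi(\cdot)$. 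For the clean half, I would invoke Theorem~\ref{thm:upper_signal} to replace $\sum_s(1-\lambda)/N\sum_i\psi(y_i f(\boldsymbol{X}_i))$ by the total growth of $\max_r u_r^{(s)}$ (at most $\tilde{O}(\alpha^{-1})$ under Hypothesis~\ref{hyp}) plus a further adversarial-loss residual; the growth budget produces the deterministic residual $\tilde{O}(P\sigma^2\alpha^{-1}\sqrt{d})$, while the adversarial residual feeds back into the first summand.

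The main obstacle I foresee is twofold. First, the Gaussian concentration on $\|\boldsymbol{X}_i[j]\|^2$ and on the cross inner products $\langle\boldsymbol{X}_{i'}[j'],\boldsymbol{X}_i[j]\rangle$ must hold \emph{uniformly} over all $NP^2$ patch pairs and over all iterations $s\le T=\operatorname{poly}(d)$; since the noise patches are fixed at data-generation time, a single union bound over $(i,j,i',j')$ together with the sub-Gaussian tail of inner products of independent Gaussian vectors suffices, but one must be careful to separate the patch randomness from the iteration dynamics. Second, matching the cross-noise aggregate to the precise form $\tilde{O}(\lambda\eta\alpha^{3}(1-\gamma)^{3}/N)\sum\psi + \tilde{O}(P\sigma^{2}\alpha^{-1}\sqrt{d})$ requires the careful chaining with Theorem~\ref{thm:upper_signal}: each application converts a clean-loss aggregate into a signal-growth contribution plus a lower-order adversarial-loss aggregate, and these residuals must be verified to be of the correct order after absorbing the constants from the parameter choices $\alpha=d^{0.249}\operatorname{polylog}(d)$, $\sigma=d^{-0.509}$, and $\gamma=1-1/(\sqrt{d}\operatorname{polylog}(d))$.
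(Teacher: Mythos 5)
Your proposal follows essentially the same route as the paper's proof: project the gradient update onto $\boldsymbol{X}_i[j]$, isolate the self-term via $\|\boldsymbol{X}_i[j]\|_2^2=\Theta(\sigma^2 d)$ (using that GTA leaves non-signal patches untouched, so the clean and adversarial contributions combine into $\Tilde{\psi}_i^{(s)}$), discard the signal-patch terms by orthogonality to $\boldsymbol{w}^*$, bound cross-noise inner products by $\Tilde{O}(\sigma^2\sqrt{d})$ with a union bound, and convert the accumulated clean-loss factor $\frac{1-\lambda}{N}\sum_i\psi(y_if_{\boldsymbol{W}^{(s)}}(\boldsymbol{X}_i))$ into the telescoping growth of $\max_{r}u_r^{(s)}$ exactly as in the proof of Lemma \ref{lem:upper_signal}, which yields the additive $\Tilde{O}(P\sigma^2\alpha^{-1}\sqrt{d})$. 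The only structural difference is bookkeeping: the paper runs an induction on $t$ whose stronger statement carries a geometric series $\sum_q(P^{-1}\sqrt{d})^{-q}=\Theta(1)$, whereas you telescope in one pass using Hypothesis \ref{hyp} to bound $(v_{i',j',r}^{(s)})^2\le\Tilde{O}(1)$; this is not a different idea.

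One quantitative step does not check out as written. You claim the adversarial half of the cross-noise aggregate, of size $\Tilde{O}\left(\frac{\lambda\eta P\sigma^{2}\sqrt{d}}{N}\right)\sum_{s,i}\psi(y_if_{\boldsymbol{W}^{(s)}}(\boldsymbol{X}_i^{\text{adv}}))$, is absorbed into the stated error term with coefficient $\frac{\lambda\eta\alpha^{3}(1-\gamma)^{3}}{N}$. Under Parameterization \ref{ref:param}, $P\sigma^{2}\sqrt{d}=\Tilde{\Theta}(d^{-0.518})$ while $\alpha^{3}(1-\gamma)^{3}=\Tilde{\Theta}(d^{-0.753})$, so your coefficient exceeds the target by a polynomial factor $d^{0.235}$ and the absorption fails. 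The paper handles this same piece differently: it bounds $\frac{1}{N}\sum_i\psi(\cdot)\le 1$ per iteration and folds the result into the additive error, which is why its inductive statement carries the extra factor $\left(1+\lambda\alpha\eta+\frac{\alpha}{\sigma^2 d}\right)$ before the final simplification. You would need an analogous crude per-iteration bound (or some other control of the time-summed adversarial derivative) rather than a coefficient comparison to close this term.
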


\begin{proof}

To obtain Lemma \ref{lem:noise_lem}, we prove the following stronger result by induction w.r.t. iteration $t$.
\begin{equation}
\begin{aligned}
    \left|v_{i,j,r}^{(t)} - v_{i,j,r}^{(t_{0})} - \Theta\left(\frac{\eta\sigma^{2}d}{N}\right)\right.&\left. \sum_{s=t_{0}}^{t-1}\Tilde{\psi}_i^{(s)} \left(v_{i,j,r}^{(s)}\right)^{2}\right| \le \Tilde{O}(P\sigma^{2}\alpha^{-1}\sqrt{d}) \left(1+\lambda \alpha \eta+\frac{\alpha}{\sigma^2 d}\right) \sum_{q=0}^{t-t_{0}-1}(P^{-1}\sqrt{d})^{-q}
    \\
    &+ \Tilde{O}\left(\frac{\lambda \eta \alpha^{3} (1 - \gamma)^{3}}{N}\right) \sum_{q=0}^{t-t_{0}-1}\sum_{s = t_{0}}^{t - q}\sum_{i=1}^{N}(P^{-1}\sqrt{d})^{-q}\psi(y_i f_{\boldsymbol{W}^{(s)}}(\boldsymbol{X}^{\text{adv}}_i)) 
\end{aligned}
\end{equation}

First, we project the training update on noise patch $\boldsymbol{X}_i[j]$ to verify the above inequality when $t = t_{0} + 1$ as
\begin{equation}
\begin{aligned}
    \left|v_{i,j,r}^{(t_{0}+1)} - v_{i,j,r}^{(t_{0})} - \Theta\left(\frac{\eta\sigma^{2}d}{N}\right)\Tilde{\psi}_i^{(t^{0})} \left(v_{i,j,r}^{(s)}\right)^{2}\right| &\le \Theta\left(\frac{\eta\sigma^{2}d}{N}\right) \sum_{a=1}^{N}\sum_{b \ne \operatorname{signal}(\boldsymbol{X}_a)}\Tilde{\psi}_a^{(t_{0})}\left(v_{a,b,r}^{(t_{0})}\right)^{2}
    \\
    & \le \Theta(\eta P \sigma^{2} \sqrt{d}) \frac{1-\lambda}{N}\sum_{i=1}^{N} \psi(y_i f_{\boldsymbol{W}^{(t_{0})}}(\boldsymbol{X}_i))
    \\
    &+ \Theta(\eta P \sigma^{2} \sqrt{d}) \frac{\lambda}{N}\sum_{i=1}^{N} \psi(y_i f_{\boldsymbol{W}^{(t_{0})}}(\boldsymbol{X}_i^{\text{adv}}))
    \\
    & \le \Tilde{O}(P\sigma^{2}\alpha^{-1}\sqrt{d}) \left(1+\lambda \alpha \eta+\frac{\alpha}{\sigma^2 d}\right) , \nonumber
\end{aligned}
\end{equation}
where we apply $\frac{1-\lambda}{N}\sum_{i=1}^{N} \psi(y_i f_{\boldsymbol{W}^{(t_{0})}}(\boldsymbol{X}_i)) \le \Tilde{O}(\eta^{-1}\alpha^{-1})\left(\max_{r\in [m]}u_{r}^{(t_{0}+1)} - \max_{r\in [m]}u_{r}^{(t_{0})}\right) \le \Tilde{O}(\eta^{-1}\alpha^{-2})$ and $\sum_{i=1}^{N} \psi(y_i f_{\boldsymbol{W}^{(t_{0})}}(\boldsymbol{X}_i^{\text{adv}})) \le \Tilde{O}(1)$ to derive the last inequality.

Next, we assume that the stronger result holds for iteration $t$, and then we prove the result for iteration $t+1$ as follow.
\begin{equation}
\begin{aligned}
    \left|v_{i,j,r}^{(t+1)} - v_{i,j,r}^{(t_{0})} \right.&\left. - \Theta\left(\frac{\eta\sigma^{2}d}{N}\right)\sum_{s=t_{0}}^{t-1}\Tilde{\psi}_i^{(s)} \left(v_{i,j,r}^{(s)}\right)^{2}\right| \le \Theta\left(\frac{\eta\sigma^{2}d}{N}\right) \sum_{s=t_{0}}^{t-1}\sum_{a=1}^{N}\sum_{b \ne \operatorname{signal}(\boldsymbol{X}_a)}\Tilde{\psi}_a^{(s)}\left(v_{a,b,r}^{(s)}\right)^{2}
    \\
    &+ \Theta(\eta P \sigma^{2} \sqrt{d}) \frac{1-\lambda}{N}\sum_{i=1}^{N} \psi(y_i f_{\boldsymbol{W}^{(t)}}(\boldsymbol{X}_i)) + \Theta(\eta P \sigma^{2} \sqrt{d}) \frac{\lambda}{N}\sum_{i=1}^{N} \psi(y_i f_{\boldsymbol{W}^{(t)}}(\boldsymbol{X}_i^{\text{adv}})) .\nonumber
\end{aligned}
\end{equation}
Then, we bound the first term in the right of the above inequality by our induction hypothesis for $t$, and we can derive
\begin{equation}
\begin{aligned}
     \left|v_{i,j,r}^{(t+1)} - v_{i,j,r}^{(t_{0})} \right.&\left. - \Theta\left(\frac{\eta\sigma^{2}d}{N}\right) \sum_{s=t_{0}}^{t-1}\Tilde{\psi}_i^{(s)} \left(v_{i,j,r}^{(s)}\right)^{2}\right| \le \Tilde{O}(P\sigma^{2}\alpha^{-1}\sqrt{d}) \left(1+\lambda \alpha \eta+\frac{\alpha}{\sigma^2 d}\right) \sum_{q=0}^{t-t_{0}-1}(P^{-1}\sqrt{d})^{-q}
    \\
    &+ \Tilde{O}\left(\frac{\lambda \eta \alpha^{3} (1 - \gamma)^{3}}{N}\right) \sum_{q=0}^{t-t_{0}-1}\sum_{s = t_{0}}^{t - q}\sum_{i=1}^{N}(P^{-1}\sqrt{d})^{-q}\psi(y_i f_{\boldsymbol{W}^{(s)}}(\boldsymbol{X}^{\text{adv}}_i)) 
    \\
    &+ \Tilde{O}(P\sigma^{2}\alpha^{-1}\sqrt{d}) \left(1+\lambda \alpha \eta+\frac{\alpha}{\sigma^2 d}\right) + \Theta(\eta P \sigma^{2} \sqrt{d}) \frac{\lambda}{N}\sum_{i=1}^{N} \psi(y_i f_{\boldsymbol{W}^{(t)}}(\boldsymbol{X}_i^{\text{adv}})) . \nonumber
\end{aligned}
\end{equation}
By summing up the terms, we proved the stronger result for $t+1$. 

Finally, we simplify the form of stronger result by using $\sum_{q=0}^{\infty}(P^{-1}\sqrt{d})^{-q} = (1-P/\sqrt{d})^{-1} = \Theta(1)$, which implies the conclusion of Lemma \ref{lem:noise_lem}.
\end{proof}

Now, we prove Lemma \ref{lem:lower_noise} based on Lemma \ref{lem:noise_lem} as follow.
\begin{theorem}
\label{thm:noise_lower}

(Restatement of Lemma \ref{lem:lower_noise}) For each $i \in [N]$, $r \in [m]$ and $j \in [P]\setminus \operatorname{signal}(\boldsymbol{X}_i)$ and any $t \ge 1$, the signal component grows as
\begin{equation}
    v_{i,j,r}^{(t)} \ge v_{i,j,r}^{(0)} + \Theta\left(\frac{\eta\sigma^{2}d}{N}\right) \sum_{s=0}^{t-1} \psi(\mathcal{V}_{i}^{(s)}) \left(v_{i,j,r}^{(s)}\right)^{2} - \Tilde{O}(P\sigma^{2}\alpha^{-1}\sqrt{d}) . \nonumber
\end{equation}
\end{theorem}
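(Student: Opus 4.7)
The plan is to derive Theorem \ref{thm:noise_lower} as a specialization of Lemma \ref{lem:noise_lem}. Instantiating the latter at $t_0 = 0$ and keeping only the lower direction of its two-sided bound yields
\[
v_{i,j,r}^{(t)} \ge v_{i,j,r}^{(0)} + \Theta\!\left(\frac{\eta\sigma^{2}d}{N}\right)\sum_{s=0}^{t-1}\Tilde{\psi}_i^{(s)}\bigl(v_{i,j,r}^{(s)}\bigr)^{2} - \Tilde{O}\!\left(\frac{\lambda\eta\alpha^{3}(1-\gamma)^{3}}{N}\right)\sum_{s=0}^{t-1}\sum_{k=1}^{N}\psi\bigl(y_k f_{\boldsymbol{W}^{(s)}}(\boldsymbol{X}_k^{\text{adv}})\bigr) - \Tilde{O}(P\sigma^{2}\alpha^{-1}\sqrt{d}),
\]
where $\Tilde{\psi}_i^{(s)} = (1-\lambda)\psi(y_i f_{\boldsymbol{W}^{(s)}}(\boldsymbol{X}_i)) + \lambda\psi(y_i f_{\boldsymbol{W}^{(s)}}(\boldsymbol{X}_i^{\text{adv}}))$. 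Matching this to Theorem \ref{thm:noise_lower} will require two post-processing steps: (a) replace $\Tilde{\psi}_i^{(s)}$ with $\psi(\mathcal{V}_i^{(s)})$ up to a multiplicative constant, and (b) absorb the first additive error term into the cleaner $\Tilde{O}(P\sigma^{2}\alpha^{-1}\sqrt{d})$ residual.

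For step (a), I would invoke the feature-learning identity $y_i f_{\boldsymbol{W}^{(s)}}(\boldsymbol{X}_i^{\text{adv}}) = \alpha^{3}(1-\gamma)^{3}\mathcal{U}^{(s)} + \mathcal{V}_i^{(s)}$. Parameterization \ref{ref:param} gives $(1-\gamma)^{3} = \Tilde{O}(d^{-3/2})$, while Hypothesis \ref{hyp} supplies $|\mathcal{U}^{(s)}| \le m\cdot\Tilde{O}(\alpha^{-3})$ and $|\mathcal{V}_i^{(s)}| = \Tilde{O}(1)$, so $\alpha^{3}(1-\gamma)^{3}|\mathcal{U}^{(s)}| = o(1)$. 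Since the sigmoid $\psi$ varies by at most a constant factor under an $o(1)$ shift on a bounded window, $\psi(y_i f_{\boldsymbol{W}^{(s)}}(\boldsymbol{X}_i^{\text{adv}})) = \Theta(\psi(\mathcal{V}_i^{(s)}))$ and therefore $\Tilde{\psi}_i^{(s)} \ge \lambda\psi(y_i f_{\boldsymbol{W}^{(s)}}(\boldsymbol{X}_i^{\text{adv}})) = \Omega(\psi(\mathcal{V}_i^{(s)}))$, producing the required $\Theta(\eta\sigma^{2}d/N)\sum_s \psi(\mathcal{V}_i^{(s)})(v_{i,j,r}^{(s)})^{2}$ main term (with $\lambda$ absorbed into the $\Theta$ constant). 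For step (b), the trivial $\psi \le 1$ bound would blow up with $T = \operatorname{poly}(d)$; instead I would rearrange Lemma \ref{lem:upper_signal} and use the inductive bound $\max_r u_r^{(t)} \le \Tilde{O}(\alpha^{-1})$ supplied by Hypothesis \ref{hyp} to extract the signal-budget inequality $(\eta\alpha^{3}(1-\gamma)^{3}/N)\sum_{s,k}\psi(\cdots) \le \Tilde{O}(\alpha^{-1})$, so that the unwanted error is at most $\Tilde{O}(\lambda\alpha^{-1})$. A final parameter check using $\sigma^{2} = d^{-1.018}$, $\alpha = d^{0.249}\operatorname{polylog}(d)$, $P = \operatorname{polylog}(d)$, and the admissible range of $\lambda$ verifies that this is subsumed within the stated $\Tilde{O}(P\sigma^{2}\alpha^{-1}\sqrt{d})$ residual.

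The main obstacle will be step (b): the two residuals $\Tilde{O}(\lambda\alpha^{-1})$ and $\Tilde{O}(P\sigma^{2}\alpha^{-1}\sqrt{d})$ are close enough in order that a loose estimate will not close the gap, and the key leverage is the recycling of Lemma \ref{lem:upper_signal}'s upper bound through Hypothesis \ref{hyp} rather than any direct pointwise control on $\psi$. Step (a), by contrast, is essentially automatic once the geometry-inspired attack's vanishing $(1-\gamma)^{3}$ factor and the inductive control on $\mathcal{U}^{(s)}$ and $\mathcal{V}_i^{(s)}$ are in hand.
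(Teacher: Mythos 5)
Your skeleton matches the paper's: both take the one-sided (lower) inequality of Lemma \ref{lem:noise_lem} at $t_0=0$, replace $\Tilde{\psi}_i^{(s)}$ by $\Theta(1)\psi(\mathcal{V}_i^{(s)})$ via Hypothesis \ref{hyp}, and then absorb the adversarial-derivative residual into $\Tilde{O}(P\sigma^2\alpha^{-1}\sqrt{d})$. Where you genuinely diverge is in how that residual is absorbed. The paper uses the crude bound $\psi\le 1$, giving $\Tilde{O}(\lambda\eta T\alpha^3(1-\gamma)^3)$, and asserts this is $\le\Tilde{O}(P\sigma^2\alpha^{-1}\sqrt{d})$; you correctly observe that this blows up for $T=\operatorname{poly}(d)$ (with the stated exponents one needs $\lambda T\lesssim d^{-0.014}$, which the paper never guarantees), and instead recycle Lemma \ref{lem:upper_signal} plus the inductive bound $\max_r u_r^{(t)}\le\Tilde{O}(\alpha^{-1})$ to get the $T$-independent budget $\tfrac{\eta\alpha^3(1-\gamma)^3}{N}\sum_{s,k}\psi(\cdots)\le\Tilde{O}(\alpha^{-1})$. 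That is a sharper and more robust route than the paper's for this sub-step.

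However, your final "parameter check" is asserted rather than carried out, and it does not close. Your residual is $\Tilde{O}(\lambda\alpha^{-1})=\Tilde{O}(\lambda d^{-0.249})$, while the target is $\Tilde{O}(P\sigma^2\alpha^{-1}\sqrt{d})=\Tilde{O}(d^{-0.767})$; subsumption requires $\lambda\le\Tilde{O}(\sigma^2\sqrt{d})=\Tilde{O}(d^{-0.518})$, which the admissible range $\lambda\in[\tfrac{1}{\operatorname{poly}(d)},1)$ does not provide — for $\lambda=\Theta(1)$ the gap is a factor of $d^{0.518}$. (To be fair, the paper's own absorption step suffers from an analogous unstated restriction on $\lambda T$, so this is a weakness you inherit rather than introduce, but your claim that the check "verifies" subsumption is not correct as stated.) A second, smaller issue is in step (a): you lower-bound $\Tilde{\psi}_i^{(s)}$ by its $\lambda$-term alone and absorb $\lambda$ into the $\Theta$ constant, which is invalid when $\lambda$ is polynomially small; since both $\psi(y_if_{\boldsymbol{W}^{(s)}}(\boldsymbol{X}_i))$ and $\psi(y_if_{\boldsymbol{W}^{(s)}}(\boldsymbol{X}_i^{\text{adv}}))$ are $\Theta(\psi(\mathcal{V}_i^{(s)}))$ under Hypothesis \ref{hyp}, you should use that the convex combination $(1-\lambda)\psi(\cdot)+\lambda\psi(\cdot)$ is itself $\Theta(\psi(\mathcal{V}_i^{(s)}))$ independently of $\lambda$, which is what the paper does.
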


\begin{proof}

By applying the one-side inequality of Lemma \ref{lem:noise_lem}, we have
\begin{equation}
\begin{aligned}
    v_{i,j,r}^{(t)} - v_{i,j,r}^{(t_{0})} - \Theta\left(\frac{\eta\sigma^{2}d}{N}\right)\sum_{s=t_{0}}^{t-1}\Tilde{\psi}_i^{(s)} \left(v_{i,j,r}^{(s)}\right)^{2} &\ge -\Tilde{O}\left(\frac{\lambda \eta \alpha^{3} (1 - \gamma)^{3}}{N}\right) \sum_{s = t_{0}}^{t - 1}\sum_{i=1}^{N}\psi(y_i f_{\boldsymbol{W}^{(s)}}(\boldsymbol{X}^{\text{adv}}_i)) 
    \\
    &- \Tilde{O}(P\sigma^{2}\alpha^{-1}\sqrt{d}) . \nonumber
\end{aligned}
\end{equation}
Thus, we obtain Theorem \ref{thm:noise_lower} by using $\Tilde{O}\left(\frac{\lambda \eta \alpha^{3} (1 - \gamma)^{3}}{N}\right) \sum_{s = t_{0}}^{t - 1}\sum_{i=1}^{N}\psi(y_i f_{\boldsymbol{W}^{(s)}}(\boldsymbol{X}^{\text{adv}}_i)) \le \Tilde{O}(\lambda \eta T \alpha^{3} (1 - \gamma)^{3}) \le \Tilde{O}(P\sigma^{2}\alpha^{-1}\sqrt{d})$ and $\Tilde{\psi}_i^{(s)} = \Theta(1)\psi(\mathcal{V}_{i}^{(s)})$ derived by Hypothesis \ref{hyp}.
\end{proof}

Consequently, we derive the upper bound of total noise components as follow.
\begin{lemma}
\label{lem:total_noise}

During adversarial training, with high probability, it holds that, after $T_{1} = \Theta\left(\frac{N}{\eta \sigma_{0}\sigma^{3}d^{\frac{3}{2}}}\right)$ iterations, for all $t \in [T_{1}, T]$ and each $i \in [N]$, we have $\mathcal{V}_{i}^{(t)} \ge \Tilde{O}(1)$.
\end{lemma}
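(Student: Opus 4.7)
The plan is to reduce the claim about the aggregate noise quantity $\mathcal{V}_{i}^{(t)}=\sum_{r,j}(v_{i,j,r}^{(t)})^{3}$ to a single-coordinate growth argument: for each pair $(i,j)$, I will pick a neuron index $r^{\star}=r^{\star}(i,j)$ whose initial inner product $v_{i,j,r^{\star}}^{(0)}$ is positive and of typical size $\tilde{\Theta}(\sigma_{0}\sigma\sqrt{d})$, then combine Theorem~\ref{thm:noise_lower} with the Tensor Power Method (Lemma~\ref{lem:tensor_2}) to show that this single component blows up to $\tilde{\Omega}(1)$ after $T_{1}$ iterations. Since Hypothesis~\ref{hyp} caps every other $v_{i,j,r}^{(t)}$ by $\tilde{O}(1)$, one coordinate reaching order one already suffices to push the sum of cubes $\mathcal{V}_{i}^{(t)}$ above $\tilde{\Omega}(1)$.

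\textbf{Initialization and Tensor Power Method.} For a fixed triple $(i,j,r)$, conditionally on $\boldsymbol{X}_{i}[j]$, whose norm concentrates at $\Theta(\sigma\sqrt{d})$, the quantity $v_{i,j,r}^{(0)}=y_{i}\langle \boldsymbol{w}_{r}^{(0)},\boldsymbol{X}_{i}[j]\rangle$ is a centered Gaussian of standard deviation $\Theta(\sigma_{0}\sigma\sqrt{d})$. Gaussian anti-concentration together with a union bound over the $m=\operatorname{polylog}(d)$ neurons yields, with probability $1-o(1)$, an index $r^{\star}(i,j)$ with $v_{i,j,r^{\star}}^{(0)}\geq \tilde{\Omega}(\sigma_{0}\sigma\sqrt{d})>0$; a further union bound over the $NP=\operatorname{poly}(d)$ many $(i,j)$ preserves high probability. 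Theorem~\ref{thm:noise_lower} together with the matching upper side of Lemma~\ref{lem:noise_lem} (both use $\psi(\mathcal{V}_{i}^{(s)})=\Theta(1)$, valid because Hypothesis~\ref{hyp} keeps $\mathcal{V}_{i}^{(s)}=\tilde{O}(1)$) then sandwiches the sequence $z^{(t)}:=v_{i,j,r^{\star}}^{(t)}$ in the form required by Lemma~\ref{lem:tensor_2},
\begin{equation}
z^{(0)}+A\sum_{s=0}^{t-1}(z^{(s)})^{2}-C \;\le\; z^{(t)} \;\le\; z^{(0)}+A\sum_{s=0}^{t-1}(z^{(s)})^{2}+C, \nonumber
\end{equation}
with $A=\Theta(\eta\sigma^{2}d/N)$ and $C=\tilde{O}(P\sigma^{2}\alpha^{-1}\sqrt{d})$. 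Under the parameter scaling $\sigma=d^{-0.509}$, $\alpha=d^{0.249}\operatorname{polylog}(d)$, $\sigma_{0}^{2}=\operatorname{polylog}(d)/d$ and $P=\operatorname{polylog}(d)$, a direct computation gives $C/z^{(0)}=o(1)$, verifying the precondition $C\leq z^{(0)}/2$. Targeting $v=\tilde{\Theta}(1)$ and using $z^{(0)}A=\Theta(\eta\sigma_{0}\sigma^{3}d^{3/2}/N)$, Lemma~\ref{lem:tensor_2} delivers the threshold iteration $T_{1}=\tilde{O}(1/(z^{(0)}A))=\Theta(N/(\eta\sigma_{0}\sigma^{3}d^{3/2}))$, after which $z^{(t)}\geq \tilde{\Omega}(1)$ for all $t\in[T_{1},T]$.

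\textbf{Aggregation to $\mathcal{V}_{i}^{(t)}$ and the main obstacle.} For every $t\in[T_{1},T]$, the single positive cube $(v_{i,j,r^{\star}}^{(t)})^{3}\geq \tilde{\Omega}(1)$, whereas Hypothesis~\ref{hyp} ensures that every other cube $(v_{i,j,r}^{(t)})^{3}$ is $\tilde{O}(1)$ in absolute value; in fact the mirror upper side of the sandwich above shows that coordinates initialized below the anti-concentration threshold remain $o(1)$ throughout the window, so the growing positive cube is not cancelled by the remaining (possibly negative) contributions. Summing over $r,j$ therefore gives $\mathcal{V}_{i}^{(t)}\geq \tilde{\Omega}(1)$, and a final union bound over the $N=\operatorname{poly}(d)$ training samples yields the conclusion uniformly in $i$. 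The main obstacle is controlling the additive error $C$ uniformly over the window $[0,T_{1}]$: this requires the cross-term $\frac{\lambda\eta\alpha^{3}(1-\gamma)^{3}}{N}\sum_{s}\sum_{i}\psi(y_{i}f_{\boldsymbol{W}^{(s)}}(\boldsymbol{X}_{i}^{\text{adv}}))$ inside Lemma~\ref{lem:noise_lem} to stay below $\tilde{O}(P\sigma^{2}\alpha^{-1}\sqrt{d})$, which ultimately hinges on the scaling relations $\sigma=d^{-0.509}$, $\alpha=d^{0.249}\operatorname{polylog}(d)$ and $\gamma=1-1/(\sqrt{d}\operatorname{polylog}(d))$ that are precisely engineered so that the signal-learning phase and the noise-memorization phase separate cleanly in time.
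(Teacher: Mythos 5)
Your proposal is correct and follows essentially the same route as the paper: sandwich each single noise component via Lemma \ref{lem:noise_lem}, locate a positively-initialized neuron with $v^{(0)}=\tilde{\Theta}(\sigma_{0}\sigma\sqrt{d})\gg C$, and invoke the tensor power method (Lemma \ref{lem:tensor_2}) to reach $\tilde{\Omega}(1)$ after $T_{1}=\Theta(N/(\eta\sigma_{0}\sigma^{3}d^{3/2}))$ iterations. Your explicit verification of the precondition $C\le z^{(0)}/2$ and your remark that the non-growing coordinates cannot cancel the dominant positive cube are details the paper leaves implicit, but the argument is the same.
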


\begin{proof}

By applying Lemma \ref{lem:noise_lem} as the same in the proof of Theorem \ref{thm:noise_lower}, we know that
\begin{equation}
    \left|v_{i,j,r}^{(t)} - v_{i,j,r}^{(t_{0})} - \Theta\left(\frac{\eta\sigma^{2}d}{N}\right)\sum_{s=t_{0}}^{t-1}\Tilde{\psi}_i^{(s)} \left(v_{i,j,r}^{(s)}\right)^{2}\right| \le \Tilde{O}(P\sigma^{2}\alpha^{-1}\sqrt{d}) , \nonumber
\end{equation}
which implies that, for any iteration $t\le T$, we have
$$
\left\{\begin{array}{l}
v_{i,j,r}^{(t)} \geq v_{i,j,r}^{(0)}+A \sum_{s=0}^{t-1}\left(v_{i,j,r}^{(s)}\right)^2-C \\
v_{i,j,r}^{(t)} \leq v_{i,j,r}^{(0)}+A \sum_{s=0}^{t-1}\left(v_{i,j,r}^{(s)}\right)^2+C
\end{array},\right.
$$
where $A, C>0$ are constants defined as
$$
A=\frac{\tilde{\Theta}\left(\eta \sigma^2 d\right)}{N}, \quad C=\Tilde{O}(P\sigma^{2}\alpha^{-1}\sqrt{d}).
$$

At initialization, since we choose the weights $\boldsymbol{w}_r^{(0)} \sim \mathcal{N}\left(0, \sigma_0^2 \mathbf{I}_d\right)$ and $\boldsymbol{X}_i[j] \sim \mathcal{N}\left(0, \sigma^2 \mathbf{I}_d\right)$, we know the initial noise components $v_{i,j,r}^{(0)}$ are i.i.d. zero-mean Gaussian random variables, which implies that, with high probability, there exists at least one index $r'$ such that $v_{i,j,r}^{(0)} \ge \Tilde{\Omega}(P\sigma^{2}\alpha^{-1}\sqrt{d})$. By using Tensor Power Method (Lemma \ref{lem:tensor_2}) and setting $v = \Tilde{\Theta}(1)$, we have the threshold iteration $T_{1}$ as 
\begin{equation}
    T_1=\frac{21 N}{\tilde{\Theta}\left(\eta \sigma^2 d\right) v_{i,j,r}^{(0)}}+\frac{8 N}{\tilde{\Theta}\left(\eta \sigma^2 d\right)\left(v_{i,j,r}^{(0)}\right)}\left\lceil\frac{\log \left(\frac{\tilde{O}(1)}{v_{i,j,r}^{(0)}}\right)}{\log (2)}\right\rceil . \nonumber
\end{equation}
Therefore, we get $T_{1} = \Theta\left(\frac{N}{\eta \sigma_{0}\sigma^{3}d^{\frac{3}{2}}}\right)$, and we use $\mathcal{V}_{i}^{(t)} = \sum_{r \in [m]}\sum_{j \in \in [P]\setminus \operatorname{signal}(\boldsymbol{X}_i)} \left(v_{i,j,r}^{(t)}\right)^{3}$ to derive $\mathcal{V}_{i}^{(t)} \ge \Tilde{\Omega}(1)$.
\end{proof}

Indeed, our aimed loss function $\widehat{\mathcal{L}}_{\text{adv}}\left(\boldsymbol{W}\right)$ is non-convex due to the non-linearity of our CNN model $f_{\boldsymbol{W}}$. To analyze the convergence of gradient algorithm, we need to prove the following condition that is used to show non-convexly global convergence \citep{karimi2016linear,li2019inductive}.
\begin{lemma}
\label{lem:PL}

(Lojasiewicz Inequality for Non-convex Optimization) During adversarial training, with high probability, it holds that, after $T_{1} = \Theta\left(\frac{N}{\eta \sigma_{0}\sigma^{3}d^{\frac{3}{2}}}\right)$ iterations, for all $t \in [T_{1}, T]$, we have
\begin{equation}
    \left\|\nabla_{\boldsymbol{W}}\widehat{\mathcal{L}}_{\text{adv}}\left(\boldsymbol{W}^{(t)}\right)\right\|_{2} \ge \Tilde{\Omega}(1) \widehat{\mathcal{L}}_{\text{adv}}\left(\boldsymbol{W}^{(t)}\right) . \nonumber
\end{equation}
\end{lemma}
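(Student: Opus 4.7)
\begin{proof_sketch}

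The plan is to establish the PL-type inequality by exhibiting one well-chosen direction along which the partial gradient is large. I would take $r^* := \arg\max_{r\in[m]} u_{r}^{(t)}$ and project $\nabla_{\boldsymbol{w}_{r^*}}\widehat{\mathcal{L}}_{\text{adv}}(\boldsymbol{W}^{(t)})$ onto the signal vector $\boldsymbol{w}^*$. The point is that once $t \ge T_{1}$, Lemma \ref{lem:max_signal}, Lemma \ref{lem:total_noise} together with Hypothesis \ref{hyp} pin every clean and adversarial training margin into a bounded interval $[\tilde{\Omega}(1), \tilde{O}(1)]$, so the sigmoid factor $\psi$ appearing in the gradient and the logistic $\phi$ appearing in the loss are comparable up to polylog factors; a one-coordinate gradient lower bound is then enough to dominate the entire loss.

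Executing the steps, I first reuse the gradient formula from the proof of Theorem \ref{thm:lower_signal} and take inner product with $\boldsymbol{w}^*$. Because $\boldsymbol{X}_i[j] \perp \boldsymbol{w}^*$ for every noise patch, and $\boldsymbol{X}_i[\operatorname{signal}(\boldsymbol{X}_i)] = \alpha y_i \boldsymbol{w}^*$, $\boldsymbol{X}_i^{\text{adv}}[\operatorname{signal}(\boldsymbol{X}_i)] = \alpha(1-\gamma)y_i\boldsymbol{w}^*$, only the signal patches survive and
\begin{equation}
|\langle\nabla_{\boldsymbol{w}_{r^*}}\widehat{\mathcal{L}}_{\text{adv}}, \boldsymbol{w}^*\rangle| = \frac{3\alpha^3 (u_{r^*}^{(t)})^2}{N}\sum_{i=1}^N \left[(1-\lambda)\psi(y_i f_{\boldsymbol{W}^{(t)}}(\boldsymbol{X}_i)) + \lambda(1-\gamma)^3 \psi(y_i f_{\boldsymbol{W}^{(t)}}(\boldsymbol{X}_i^{\text{adv}}))\right] . \nonumber
\end{equation}
Discarding the adversarial term (its coefficient carries the harmless factor $(1-\gamma)^3 = \tilde{O}(d^{-3/2})$) and inserting $(u_{r^*}^{(t)})^2 \ge \tilde{\Omega}(\alpha^{-2})$ from Lemma \ref{lem:max_signal} gives
\begin{equation}
\|\nabla_{\boldsymbol{W}} \widehat{\mathcal{L}}_{\text{adv}}\|_2 \ge |\langle\nabla_{\boldsymbol{w}_{r^*}}\widehat{\mathcal{L}}_{\text{adv}},\boldsymbol{w}^*\rangle| \ge \tilde{\Omega}(\alpha)\cdot(1-\lambda)\cdot\frac{1}{N}\sum_{i=1}^N \psi(y_i f_{\boldsymbol{W}^{(t)}}(\boldsymbol{X}_i)) . \nonumber
\end{equation}
For the loss side, combining Hypothesis \ref{hyp}, Lemma \ref{lem:max_signal} and Lemma \ref{lem:total_noise} yields $y_i f_{\boldsymbol{W}^{(t)}}(\boldsymbol{X}_i), y_i f_{\boldsymbol{W}^{(t)}}(\boldsymbol{X}_i^{\text{adv}}) \in [\tilde{\Omega}(1), \tilde{O}(1)]$. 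On such a bounded interval, the elementary identity $\phi(z)/\psi(z) = (1+e^z)\log(1+e^{-z}) = \tilde{\Theta}(1)$ together with $\psi(y_i f(\boldsymbol{X}_i)) = \tilde\Theta(\psi(y_i f(\boldsymbol{X}_i^{\text{adv}}))) = \tilde{\Theta}(1)$ delivers
\begin{equation}
\widehat{\mathcal{L}}_{\text{adv}}(\boldsymbol{W}^{(t)}) \le \tilde{O}(1)\cdot\frac{1}{N}\sum_{i=1}^N \psi(y_i f_{\boldsymbol{W}^{(t)}}(\boldsymbol{X}_i)) . \nonumber
\end{equation}
Dividing the two inequalities yields $\|\nabla_{\boldsymbol{W}}\widehat{\mathcal{L}}_{\text{adv}}\|_2 \ge \tilde{\Omega}(\alpha(1-\lambda))\cdot\widehat{\mathcal{L}}_{\text{adv}}(\boldsymbol{W}^{(t)}) \ge \tilde{\Omega}(1)\widehat{\mathcal{L}}_{\text{adv}}(\boldsymbol{W}^{(t)})$, using $\alpha \ge 1$ and that the assumed parameter range keeps $(1-\lambda)$ at worst $\tilde\Omega(\alpha^{-1})$.

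The chief technical obstacle is the two-sided margin control, and in particular the lower bound $\alpha^3\mathcal{U}^{(t)} \ge \tilde{\Omega}(1)$. Only $u_{r^*}^{(t)}$ is known to be large from Lemma \ref{lem:max_signal}; some of the other coordinates may have started negative at Gaussian initialization. The fix is that the increment in Lemma \ref{lem:lower_signal} is a nonnegative multiple of $(u_r^{(t)})^2$, so each $u_r^{(t)}$ is non-decreasing, giving $u_r^{(t)} \ge u_r^{(0)} \ge -\tilde{O}(\sigma_0) = -\tilde{O}(d^{-1/2})$; cubing and summing over the at-most $m = \operatorname{polylog}(d)$ negative coordinates contributes only $-\tilde{O}(\sigma_0^3) = -\tilde{O}(d^{-3/2})$, which is dwarfed by the positive $(u_{r^*}^{(t)})^3 \ge \tilde{\Omega}(\alpha^{-3}) = \tilde{\Omega}(d^{-0.747})$. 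The upper side is immediate from Hypothesis \ref{hyp}, and once the margin interval is fixed the $\phi$-$\psi$ comparison is routine.

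\end{proof_sketch}
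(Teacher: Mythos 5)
There is a genuine gap: your projection onto the single direction $\boldsymbol{w}^*$ only lower-bounds the gradient by the \emph{clean} part of the loss derivative, $\frac{1-\lambda}{N}\sum_i\psi(y_if_{\boldsymbol{W}^{(t)}}(\boldsymbol{X}_i))$, and the bridging step you use to cover the adversarial part of the loss does not hold. You claim $\psi(y_if(\boldsymbol{X}_i))=\tilde\Theta(\psi(y_if(\boldsymbol{X}_i^{\text{adv}})))$ because both margins lie in $[\tilde\Omega(1),\tilde O(1)]$, but the two margins differ by $\alpha^3\mathcal{U}^{(t)}\bigl(1-(1-\gamma)^3\bigr)\approx\alpha^3\mathcal{U}^{(t)}$, which the available bounds (Hypothesis \ref{hyp}) only control up to $\tilde O(1)=\operatorname{polylog}(d)$; since $\psi(z)/\psi(z')\approx e^{z'-z}$, the ratio can be as small as $e^{-\operatorname{polylog}(d)}$, which is super-polynomially far from $\tilde\Theta(1)$. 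Concretely, in the regime where the clean margin $\alpha^3\mathcal{U}^{(t)}+\mathcal{V}_i^{(t)}$ has grown to $\Theta(\log d)$ while the adversarial margin $\approx\mathcal{V}_i^{(t)}$ is still $\Theta(1)$ (which is exactly the CGRO regime the paper is describing), your gradient lower bound is $1/\operatorname{poly}(d)$ while $\widehat{\mathcal{L}}_{\text{adv}}$ still contains a $\lambda\cdot\Theta(1)$ adversarial contribution, so the claimed inequality fails. A secondary issue is that your final constant is $\tilde\Omega(\alpha(1-\lambda))$, and the paper's parameter range $\lambda\in[\frac{1}{\operatorname{poly}(d)},1)$ does not guarantee $1-\lambda\ge\tilde\Omega(\alpha^{-1})$ as you assume.

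The paper avoids both problems by projecting each $\nabla_{\boldsymbol{w}_r}\widehat{\mathcal{L}}_{\text{adv}}$ onto \emph{two} orthogonal directions: the signal $\boldsymbol{w}^*$ and the normalized aggregate noise $\sum_{i,j}\boldsymbol{X}_i[j]/\|\sum_{i,j}\boldsymbol{X}_i[j]\|_2$ (orthogonal since the noise patches live in $(\operatorname{span}\boldsymbol{w}^*)^\perp$). The signal projection dominates the $(1-\lambda)\phi\bigl(\alpha^3\sum_r(u_r^{(t)})^3\bigr)$ clean part of the loss, while the noise projection produces a term $\frac{\Theta(\sigma\sqrt d)}{N}\sum_{i,j}(v_{i,j,r}^{(t)})^2\lambda\psi(y_if(\boldsymbol{X}_i^{\text{adv}}))$ that dominates the $\frac{\lambda}{N}\sum_i\phi(\mathcal{V}_i^{(t)})$ adversarial part. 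Summing the squares of the two projections matches the two-term structure of $\widehat{\mathcal{L}}_{\text{adv}}$ exactly, which is what yields the $(1-\lambda)$- and $\lambda$-free constant $\tilde\Omega(1)$ in the statement. Your treatment of the possibly negative $u_r^{(0)}$ coordinates is fine, but to repair the argument you would need to add the noise-direction projection (or some other mechanism tying the gradient to $\psi(y_if(\boldsymbol{X}_i^{\text{adv}}))$ with a coefficient not suppressed by $(1-\gamma)^3$).
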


\begin{proof}

To prove Lojasiewicz Inequality, we first recall the gradient w.r.t. $\boldsymbol{w}_{r}$ as
\begin{equation}
\begin{aligned}
    \nabla_{\boldsymbol{w}_{r}}\widehat{\mathcal{L}}_{\text{adv}}(\boldsymbol{W}^{(t)}) &= -\frac{3}{N}\left(\left(u_{r}^{(t)}\right)^{2}\left(\sum_{i=1}^{N}(1-\lambda)\alpha^{3}\psi(y_i f_{\boldsymbol{W}^{(t)}}(\boldsymbol{X}_i)) + \lambda\alpha^{3}(1-\gamma)^{3}\psi(y_i f_{\boldsymbol{W}^{(t)}}(\boldsymbol{X}^{\text{adv}}_i))\right)\boldsymbol{w}^{*} \right. 
     \\
     &+ \left. \sum_{i=1}^{N}\sum_{j \ne \operatorname{signal}(\boldsymbol{X}_i)} \left(v_{i,j,r}^{(t)}\right)^{2}\left((1-\lambda)\psi(y_i f_{\boldsymbol{W}^{(t)}}(\boldsymbol{X}_i))+\lambda\psi(y_i f_{\boldsymbol{W}^{(t)}}(\boldsymbol{X}^{\text{adv}}_i))\right)\boldsymbol{X}_i[j] \right) . 
     \nonumber
\end{aligned}
\end{equation}
Then, we project the gradient on the signal direction and total noise, respectively.

For the signal component, we have
\begin{equation}
\begin{aligned}
    \left\|\nabla_{\boldsymbol{w}_{r}}\widehat{\mathcal{L}}_{\text{adv}}(\boldsymbol{W}^{(t)})\right\|_{2}^{2} &\ge \left\langle \nabla_{\boldsymbol{w}_{r}}\widehat{\mathcal{L}}_{\text{adv}}(\boldsymbol{W}^{(t)}), \boldsymbol{w}^{*}\right\rangle^{2}
    \\
    &\ge \Tilde{\Omega}(1)\left((1-\lambda)\alpha^{3}\left(u_{r}^{(t)}\right)^{2} \psi\left(\alpha^{3} \sum_{k \in [m]} \left(u_{k}^{(t)}\right)^{3}\right)\right)^{2} . \nonumber
\end{aligned}
\end{equation}

For the total noise component, we have
\begin{equation}
\begin{aligned}
    &\left\|\nabla_{\boldsymbol{w}_{r}}\widehat{\mathcal{L}}_{\text{adv}}(\boldsymbol{W}^{(t)})\right\|_{2}^{2} \ge\left\langle \nabla_{\boldsymbol{w}_{r}}\widehat{\mathcal{L}}_{\text{adv}}(\boldsymbol{W}^{(t)}), \frac{\sum_{i=1}^{N}\sum_{j \ne \operatorname{signal}(\boldsymbol{X}_i)}\boldsymbol{X}_i[j]}{\left\|\sum_{i=1}^{N}\sum_{j \ne \operatorname{signal}(\boldsymbol{X}_i)}\boldsymbol{X}_i[j]\right\|_{2}}\right\rangle^{2}
    \\
    &= \left\langle -\frac{3}{N}\sum_{i=1}^{N}\sum_{j \ne \operatorname{signal}(\boldsymbol{X}_i)} \left(v_{i,j,r}^{(t)}\right)^{2}\left((1-\lambda)\psi(y_i f_{\boldsymbol{W}^{(t)}}(\boldsymbol{X}_i)) \right. + \left.\lambda\psi(y_i f_{\boldsymbol{W}^{(t)}}(\boldsymbol{X}^{\text{adv}}_i))\right)\boldsymbol{X}_i[j], \right.\\& \left.
    \right. \left. \frac{\sum_{a=1}^{N}\sum_{b \ne \operatorname{signal}(\boldsymbol{X}_a)}\boldsymbol{X}_a[b]}{\left\|\sum_{a=1}^{N}\sum_{b \ne \operatorname{signal}(\boldsymbol{X}_a)}\boldsymbol{X}_a[b]\right\|_{2}} \right\rangle^{2} 
    \\
    &= \left(\left\langle-\frac{3}{N}\sum_{i=1}^{N}\sum_{j \ne \operatorname{signal}(\boldsymbol{X}_i)} \left(v_{i,j,r}^{(t)}\right)^{2}(1-\lambda)\psi(y_i f_{\boldsymbol{W}^{(t)}}(\boldsymbol{X}_i))\boldsymbol{X}_i[j]\right. , \frac{\sum_{a=1}^{N}\sum_{b \ne \operatorname{signal}(\boldsymbol{X}_a)}\boldsymbol{X}_a[b]}{\left\|\sum_{a=1}^{N}\sum_{b \ne \operatorname{signal}(\boldsymbol{X}_a)}\boldsymbol{X}_a[b]\right\|_{2}} \right\rangle 
    \\
    &+ \left\langle-\frac{3}{N}\sum_{i=1}^{N}\sum_{j \ne \operatorname{signal}(\boldsymbol{X}_i)} \left(v_{i,j,r}^{(t)}\right)^{2}\lambda\psi(y_i f_{\boldsymbol{W}^{(t)}}(\boldsymbol{X}_i^{\text{adv}}))\boldsymbol{X}_i[j], \left. \frac{\sum_{a=1}^{N}\sum_{b \ne \operatorname{signal}(\boldsymbol{X}_a)}\boldsymbol{X}_a[b]}{\left\|\sum_{a=1}^{N}\sum_{b \ne \operatorname{signal}(\boldsymbol{X}_a)}\boldsymbol{X}_a[b]\right\|_{2}} \right\rangle\right)^{2} 
    . \nonumber
\end{aligned}
\end{equation}
For the first term, with high probability, it holds that
\begin{equation}
\begin{aligned}
    &\left\langle-\frac{3}{N}\sum_{i=1}^{N}\sum_{j \ne \operatorname{signal}(\boldsymbol{X}_i)} \left(v_{i,j,r}^{(t)}\right)^{2}(1-\lambda)\psi(y_i f_{\boldsymbol{W}^{(t)}}(\boldsymbol{X}_i))\boldsymbol{X}_i[j] , \frac{\sum_{a=1}^{N}\sum_{b \ne \operatorname{signal}(\boldsymbol{X}_a)}\boldsymbol{X}_a[b]}{\left\|\sum_{a=1}^{N}\sum_{b \ne \operatorname{signal}(\boldsymbol{X}_a)}\boldsymbol{X}_a[b]\right\|_{2}} \right\rangle 
    \\
    &\ge -\frac{\Tilde{O}(\sigma)(1-\lambda)}{N}\sum_{i=1}^{N}\sum_{j \ne \operatorname{signal}(\boldsymbol{X}_i)} \left(v_{i,j,r}^{(t)}\right)^{2}(1-\lambda)\psi(y_i f_{\boldsymbol{W}^{(t)}}(\boldsymbol{X}_i)) , \nonumber
\end{aligned}
\end{equation}
where we use that $\left\langle\boldsymbol{X}_i[j] , \frac{\sum_{a=1}^{N}\sum_{b \ne \operatorname{signal}(\boldsymbol{X}_a)}\boldsymbol{X}_a[b]}{\left\|\sum_{a=1}^{N}\sum_{b \ne \operatorname{signal}(\boldsymbol{X}_a)}\boldsymbol{X}_a[b]\right\|_{2}} \right\rangle$ is a sub-Gaussian random variable of parameter $\sigma$, which implies w.h.p. $\left|\left\langle\boldsymbol{X}_i[j] , \frac{\sum_{a=1}^{N}\sum_{b \ne \operatorname{signal}(\boldsymbol{X}_a)}\boldsymbol{X}_a[b]}{\left\|\sum_{a=1}^{N}\sum_{b \ne \operatorname{signal}(\boldsymbol{X}_a)}\boldsymbol{X}_a[b]\right\|_{2}} \right\rangle\right| \le \Tilde{O}(\sigma)$.

For the second term, with high probability, it holds that
\begin{equation}
\begin{aligned}
    &\left\langle\frac{3}{N}\sum_{i=1}^{N}\sum_{j \ne \operatorname{signal}(\boldsymbol{X}_i)} \left(v_{i,j,r}^{(t)}\right)^{2}\lambda\psi(y_i f_{\boldsymbol{W}^{(t)}}(\boldsymbol{X}_i^{\text{adv}}))\boldsymbol{X}_i[j], \frac{\sum_{a=1}^{N}\sum_{b \ne \operatorname{signal}(\boldsymbol{X}_a)}\boldsymbol{X}_a[b]}{\left\|\sum_{a=1}^{N}\sum_{b \ne \operatorname{signal}(\boldsymbol{X}_a)}\boldsymbol{X}_a[b]\right\|_{2}} \right\rangle
    \\
    &= \frac{\Theta(1)}{N}\sum_{i=1}^{N}\sum_{j \ne \operatorname{signal}(\boldsymbol{X}_i)} \left(v_{i,j,r}^{(t)}\right)^{2}\lambda\psi(y_i f_{\boldsymbol{W}^{(t)}}(\boldsymbol{X}_i^{\text{adv}}))\frac{\left\|\boldsymbol{X}_i[j]\right\|_{2}^{2}}{\left\|\sum_{a=1}^{N}\sum_{b \ne \operatorname{signal}(\boldsymbol{X}_a)}\boldsymbol{X}_a[b]\right\|_{2}}
    \\
    &= \frac{\Theta(\sigma\sqrt{d})}{N}\sum_{i=1}^{N}\sum_{j \ne \operatorname{signal}(\boldsymbol{X}_i)} \left(v_{i,j,r}^{(t)}\right)^{2}\lambda\psi(y_i f_{\boldsymbol{W}^{(t)}}(\boldsymbol{X}_i^{\text{adv}})) , \nonumber
\end{aligned}
\end{equation}
where we use w.h.p. $\frac{\left\langle\boldsymbol{X}_i[j],\boldsymbol{X}_{i'}[j']\right\rangle}{\left\|\sum_{a=1}^{N}\sum_{b \ne \operatorname{signal}(\boldsymbol{X}_a)}\boldsymbol{X}_a[b]\right\|_{2}} \le \frac{\Theta\left(\frac{1}{\sqrt{d}}\right) \left\|\boldsymbol{X}_i[j]\right\|_{2}^{2}}{\left\|\sum_{a=1}^{N}\sum_{b \ne \operatorname{signal}(\boldsymbol{X}_a)}\boldsymbol{X}_a[b]\right\|_{2}}$ for $(i,j) \ne (i',j')$.

Now, combine the above bounds, we derive
\begin{equation}
\begin{aligned}
    \sum_{r=1}^{m}\left\|\nabla_{\boldsymbol{w}_{r}}\widehat{\mathcal{L}}_{\text{adv}}(\boldsymbol{W}^{(t)})\right\|_{2}^{2} &\ge \sum_{r=1}^{m} \left\langle \nabla_{\boldsymbol{w}_{r}}\widehat{\mathcal{L}}_{\text{adv}}(\boldsymbol{W}^{(t)}), \boldsymbol{w}^{*}\right\rangle^{2}
    \\
    &+ \sum_{r=1}^{m}\left\langle \nabla_{\boldsymbol{w}_{r}}\widehat{\mathcal{L}}_{\text{adv}}(\boldsymbol{W}^{(t)}), \frac{\sum_{i=1}^{N}\sum_{j \ne \operatorname{signal}(\boldsymbol{X}_i)}\boldsymbol{X}_i[j]}{\left\|\sum_{i=1}^{N}\sum_{j \ne \operatorname{signal}(\boldsymbol{X}_i)}\boldsymbol{X}_i[j]\right\|_{2}}\right\rangle^{2}
    \\
    &\ge \Omega\left(\frac{1}{m}\right) \left((1-\lambda)\alpha^{3}\sum_{r=1}^{m}\left(u_{r}^{(t)}\right)^{2} \psi\left(\alpha^{3} \sum_{k \in [m]} \left(u_{k}^{(t)}\right)^{3}\right) \right.
    \\
    &+ \frac{\Theta(\sigma\sqrt{d})}{N}\sum_{r=1}^{m}\sum_{i=1}^{N}\sum_{j \ne \operatorname{signal}(\boldsymbol{X}_i)} \left(v_{i,j,r}^{(t)}\right)^{2}\lambda\psi(y_i f_{\boldsymbol{W}^{(t)}}(\boldsymbol{X}_i^{\text{adv}}))
    \\
    &\left. - \frac{\Tilde{O}(\sigma)(1-\lambda)}{N}\sum_{r=1}^{m}\sum_{i=1}^{N}\sum_{j \ne \operatorname{signal}(\boldsymbol{X}_i)} \left(v_{i,j,r}^{(t)}\right)^{2}(1-\lambda)\psi(y_i f_{\boldsymbol{W}^{(t)}}(\boldsymbol{X}_i))\right)^{2} 
    \\
    &\ge \Tilde{\Omega}(1) \left( (1-\lambda) \phi\left(\alpha^{3}\sum_{r\in [m]}\left(u_{r}^{(t)}\right)^{3}\right) + \frac{\lambda}{N}\sum_{i=1}^{N}\phi\left(\mathcal{V}_{i}^{(t)}\right)\right)^{2}
    \\
    &\ge \Tilde{\Omega}(1) \left(\widehat{\mathcal{L}}_{\text{adv}}\left(\boldsymbol{W}^{(t)}\right)\right)^{2} .
    \nonumber
\end{aligned}
\end{equation}
\end{proof}

Consequently, we derive the following sub-linear convergence result by applying Lojasiewicz Inequality.
\begin{lemma}
\label{lem:convergence}

(Sub-linear Convergence for Adversarial Training) During adversarial training, with high probability, it holds that, after $T_{1} = \Theta\left(\frac{N}{\eta \sigma_{0}\sigma^{3}d^{\frac{3}{2}}}\right)$ iterations, the adversarial training loss sub-linearly converges to zero as
\begin{equation}
    \widehat{\mathcal{L}}_{\text{adv}}\left(\boldsymbol{W}^{(t)}\right) \le \frac{\Tilde{O}(1)}{\eta(t - T_{1} + 1)}. \nonumber
\end{equation}
\end{lemma}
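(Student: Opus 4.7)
The plan is to combine the Lojasiewicz-type lower bound of Lemma \ref{lem:PL} with a standard one-step descent inequality, reduce the resulting scalar recursion to the form treated in Lemma \ref{lem:tensor_3}, and read off the $1/t$ rate directly. Concretely, I would first establish a trajectory smoothness estimate for $\widehat{\mathcal{L}}_{\text{adv}}$. Since the cubic activation $\sigma(z)=z^3$ has globally unbounded second derivative, I cannot invoke a global Lipschitz-gradient constant; instead I would exploit Hypothesis \ref{hyp}, which pins $u_r^{(t)} \le \tilde{O}(\alpha^{-1})$ and $|v_{i,j,r}^{(t)}| \le \tilde{O}(1)$ for all $t<T$, so that every pre-activation $\langle\boldsymbol{w}_r^{(t)},\boldsymbol{X}_i[j]\rangle$ (clean or GTA-perturbed) stays of polylogarithmic order along the trajectory. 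Combined with the fact that the logistic loss has first and second derivatives bounded by $1$, this delivers a trajectory smoothness constant $L = \tilde{O}(1)$ on $\widehat{\mathcal{L}}_{\text{adv}}$.

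With $L$ in hand and the learning rate $\eta = O(1)$ chosen from Parameterization \ref{ref:param} small enough that $L\eta \le 1$, the usual Taylor expansion around $\boldsymbol{W}^{(t)}$ along the gradient descent update yields the descent inequality
\[
\widehat{\mathcal{L}}_{\text{adv}}(\boldsymbol{W}^{(t+1)}) \le \widehat{\mathcal{L}}_{\text{adv}}(\boldsymbol{W}^{(t)}) - \tfrac{\eta}{2}\left\|\nabla_{\boldsymbol{W}}\widehat{\mathcal{L}}_{\text{adv}}(\boldsymbol{W}^{(t)})\right\|_2^2.
\]
For $t \in [T_1,T]$, squaring the Lojasiewicz inequality of Lemma \ref{lem:PL} gives $\|\nabla_{\boldsymbol{W}}\widehat{\mathcal{L}}_{\text{adv}}(\boldsymbol{W}^{(t)})\|_2^2 \ge \tilde{\Omega}(1)\,\widehat{\mathcal{L}}_{\text{adv}}(\boldsymbol{W}^{(t)})^2$, and plugging this into the descent inequality produces the scalar recursion
\[
\widehat{\mathcal{L}}_{\text{adv}}(\boldsymbol{W}^{(t+1)}) \le \widehat{\mathcal{L}}_{\text{adv}}(\boldsymbol{W}^{(t)}) - \tilde{\Omega}(\eta)\,\widehat{\mathcal{L}}_{\text{adv}}(\boldsymbol{W}^{(t)})^2.
\]
Setting $z^{(t)} := \widehat{\mathcal{L}}_{\text{adv}}(\boldsymbol{W}^{(t)})$, $\mathcal{T} = T_1 - 1$, and $A = \tilde{\Omega}(\eta)$, this is exactly the hypothesis of Lemma \ref{lem:tensor_3}, which immediately outputs $z^{(t)} \le 1/(A(t-\mathcal{T})) = \tilde{O}(1)/(\eta(t-T_1+1))$, the desired bound.

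The main obstacle is the trajectory smoothness step: although the cubic activation and the two-term logistic objective (with $(1-\lambda)$ and $\lambda$ multipliers, both $O(1)$) would be awkward in isolation, the argument goes through because GTA generates $\boldsymbol{X}_i^{\mathrm{adv}}$ from a \emph{fixed} target classifier $g(\boldsymbol{X})=\langle\boldsymbol{w}^*,\boldsymbol{X}[\operatorname{signal}(\boldsymbol{X})]\rangle$ that does not depend on $\boldsymbol{W}^{(t)}$, so each $\boldsymbol{X}_i^{\mathrm{adv}}$ is a static deterministic perturbation of $\boldsymbol{X}_i$ and the Hessian of $\widehat{\mathcal{L}}_{\text{adv}}$ is just an average of $2N$ cubic-in-$\boldsymbol{W}$ terms evaluated on bounded patches. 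A secondary subtlety is that Hypothesis \ref{hyp} is only maintained inductively up to iteration $T$, so the descent inequality (and hence the recursion) is only valid on $[T_1,T]$; this is precisely the regime in which the conclusion is claimed, so no additional bookkeeping is required beyond what is already supplied by Lemma \ref{lem:hyp_proof}.
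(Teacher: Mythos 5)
Your proposal is correct and follows essentially the same route as the paper: a smoothness-based one-step descent inequality, combined with the squared Lojasiewicz bound from Lemma \ref{lem:PL}, reduced to the recursion of Lemma \ref{lem:tensor_3}. Your justification of trajectory smoothness via Hypothesis \ref{hyp} (and your use of the squared gradient norm in the descent step) is actually more careful than the paper's one-line appeal to smoothness, but it is the same argument.
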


\begin{proof}

Due to the smoothness of loss function $\widehat{\mathcal{L}}_{\text{adv}}\left(\boldsymbol{W}\right)$ and learning rate $\eta = \Tilde{O}(1)$, we have
\begin{equation}
\begin{aligned}
    \widehat{\mathcal{L}}_{\text{adv}}\left(\boldsymbol{W}^{(t+1)}\right) &\le \widehat{\mathcal{L}}_{\text{adv}}\left(\boldsymbol{W}^{(t)}\right)  - \frac{\eta}{2} \left\|\nabla_{\boldsymbol{W}}\widehat{\mathcal{L}}_{\text{adv}}\left(\boldsymbol{W}^{(t)}\right)\right\|_{2} \\
    &\le \widehat{\mathcal{L}}_{\text{adv}}\left(\boldsymbol{W}^{(t)}\right)  - \Tilde{\Omega}(\eta) \left(\widehat{\mathcal{L}}_{\text{adv}}\left(\boldsymbol{W}^{(t)}\right)\right)^{2} , \nonumber
\end{aligned}
\end{equation}
where we use Lojasiewicz Inequality in the last inequality. Then, by applying Tensor Power Method (Lemma \ref{lem:tensor_3}), we obtain the sub-linear convergence rate.
\end{proof}

Now, we present the following result to bound the derivative generated by training-adversarial examples.
\begin{lemma}
\label{lem:adv_grad}

During adversarial training, with high probability, it holds that, after $T_{1} = \Theta\left(\frac{N}{\eta \sigma_{0}\sigma^{3}d^{\frac{3}{2}}}\right)$ iterations, we have
$\frac{\lambda}{N}\sum_{s=0}^{t}\sum_{i=1}^{N}\psi(y_i f_{\boldsymbol{W}^{(s)}}(\boldsymbol{X}_i^{\text{adv}})) \le \Tilde{O}(\eta^{-1}\sigma_{0}^{-1}) . $
\end{lemma}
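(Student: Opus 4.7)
The plan is to extract the desired bound directly from the upper bound on the maximum signal component, Lemma \ref{lem:upper_signal}. The key observation is that the argument inside $\psi$ in that lemma, $\alpha^{3}(1-\gamma)^{3}\sum_{k\in[m]}(u_{k}^{(s)})^{3}+\mathcal{V}_{i}^{(s)}$, is precisely $y_{i}f_{\boldsymbol{W}^{(s)}}(\boldsymbol{X}_{i}^{\text{adv}})$ (this identity is carried out in Appendix B). Thus Lemma \ref{lem:upper_signal} already expresses the signal growth as a linear combination of the very quantity we wish to bound, and the proof reduces to a rearrangement.

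First I would split $\sum_{s=0}^{t}=\sum_{s<T_{0}}+\sum_{s\ge T_{0}}$. For the late phase $s\in[T_{0},t]$, Lemma \ref{lem:upper_signal} yields
\[
\max_{r\in[m]}u_{r}^{(t)}\le\Tilde{O}(\alpha^{-1})+\Tilde{O}\!\left(\frac{\eta\alpha^{3}(1-\gamma)^{3}}{N}\right)\sum_{s=T_{0}}^{t-1}\sum_{i=1}^{N}\psi\!\left(y_{i}f_{\boldsymbol{W}^{(s)}}(\boldsymbol{X}_{i}^{\text{adv}})\right).
\]
Coupling this with the induction Hypothesis \ref{hyp} that $u_{r}^{(t)}\le\Tilde{O}(\alpha^{-1})$ for all $r$, the second summand on the right must itself be $\Tilde{O}(\alpha^{-1})$. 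Rearranging and multiplying by $\lambda\le 1$ yields
\[
\frac{\lambda}{N}\sum_{s=T_{0}}^{t-1}\sum_{i=1}^{N}\psi\!\left(y_{i}f_{\boldsymbol{W}^{(s)}}(\boldsymbol{X}_{i}^{\text{adv}})\right)\le\Tilde{O}\!\left(\frac{1}{\eta\alpha^{4}(1-\gamma)^{3}}\right).
\]

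For the early phase $s<T_{0}$, I would use the trivial bound $\psi\le 1$ together with $\lambda\le 1$:
\[
\frac{\lambda}{N}\sum_{s=0}^{T_{0}-1}\sum_{i=1}^{N}\psi\!\left(y_{i}f_{\boldsymbol{W}^{(s)}}(\boldsymbol{X}_{i}^{\text{adv}})\right)\le T_{0}=\Tilde{O}\!\left(\frac{1}{\eta\alpha^{3}\sigma_{0}}\right),
\]
which is strictly dominated by $\Tilde{O}(\eta^{-1}\sigma_{0}^{-1})$ since $\alpha=d^{0.249}\operatorname{polylog}(d)\gg 1$. Summing the two contributions and substituting the parameter values so that $\alpha^{4}(1-\gamma)^{3}\ge\Tilde{\Omega}(\sigma_{0})$ under Parameterization \ref{ref:param} gives $1/[\eta\alpha^{4}(1-\gamma)^{3}]\le\Tilde{O}(\eta^{-1}\sigma_{0}^{-1})$, which is the claimed bound.

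The main obstacle is not the inequality chain itself, which is a one-line manipulation of Lemma \ref{lem:upper_signal}, but rather the parameter bookkeeping in the final step: one must verify that the specific exponents $\alpha=d^{0.249}\operatorname{polylog}(d)$, $1-\gamma=1/(\sqrt{d}\operatorname{polylog}(d))$, and $\sigma_{0}=\operatorname{polylog}(d)/\sqrt{d}$ chosen in Parameterization \ref{ref:param} conspire so that $1/[\alpha^{4}(1-\gamma)^{3}]$ matches $\sigma_{0}^{-1}$ up to the polylog factors absorbed by $\Tilde{O}$. This is really a calibration statement about the tuning of the data and attack scales, and any looseness there propagates directly into the final constant.
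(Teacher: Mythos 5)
Your reduction to Lemma \ref{lem:upper_signal} contains a genuine logical gap at its central step. That lemma is an \emph{upper} bound of the form $\max_r u_r^{(t)} \le A + B$ with $B \ge 0$; knowing in addition (from Hypothesis \ref{hyp}) that $\max_r u_r^{(t)} \le \Tilde{O}(\alpha^{-1})$ tells you nothing about $B$ --- two upper bounds on the same quantity do not bound the terms appearing inside one of them. To make the idea work you need the reverse direction: a \emph{lower} bound on the signal increment in terms of $\sum_{s,i}\psi\bigl(y_i f_{\boldsymbol{W}^{(s)}}(\boldsymbol{X}_i^{\text{adv}})\bigr)$. Such a bound is available from the exact gradient projection in the proof of Theorem \ref{thm:lower_signal}, namely $u_r^{(t+1)} - u_r^{(t)} \ge \frac{3\eta\lambda\alpha^3(1-\gamma)^3}{N}\bigl(u_r^{(t)}\bigr)^2\sum_i\psi\bigl(y_i f_{\boldsymbol{W}^{(t)}}(\boldsymbol{X}_i^{\text{adv}})\bigr)$, and telescoping this against the uniform bound $u_r^{(t)}\le\Tilde{O}(\alpha^{-1})$ does control the sum. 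But then the factor $\bigl(u_r^{(t)}\bigr)^2 = \Tilde{\Theta}(\alpha^{-2})$ enters, and the repaired argument yields only $\frac{\lambda}{N}\sum_{s\ge T_0}\sum_i\psi \le \Tilde{O}\bigl(\eta^{-1}\alpha^{-2}(1-\gamma)^{-3}\bigr) = \Tilde{O}(\eta^{-1}d^{1.002})$, polynomially larger than the claimed $\Tilde{O}(\eta^{-1}\sigma_0^{-1}) = \Tilde{O}(\eta^{-1}d^{0.5})$. Even your own arithmetic, which silently drops the $\bigl(u_r^{(t)}\bigr)^2$ factor, gives $1/[\alpha^4(1-\gamma)^3] = \Tilde{\Theta}(d^{0.504})$ versus $\sigma_0^{-1} = \Tilde{\Theta}(d^{0.5})$ --- a gap of $d^{0.004}$ that polylog factors do not absorb. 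So the ``calibration'' you flag as the main obstacle in fact does not close.

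The paper's proof takes a different route that bypasses the signal channel entirely: it controls the late-phase sum through the convergence of the training loss. Lemma \ref{lem:noise_lem} converts $\psi\bigl(y_if_{\boldsymbol{W}^{(s)}}(\boldsymbol{X}_i^{\text{adv}})\bigr)$ into noise-component increments, these are bounded by $\frac{1}{N}\sum_i\phi\bigl(\mathcal{V}_i^{(s)}\bigr) \lesssim \widehat{\mathcal{L}}_{\text{adv}}\bigl(\boldsymbol{W}^{(s)}\bigr)$, and the Lojasiewicz inequality (Lemma \ref{lem:PL}) gives the sub-linear rate $\widehat{\mathcal{L}}_{\text{adv}}\bigl(\boldsymbol{W}^{(s)}\bigr) \le \Tilde{O}(1)/\bigl(\eta(s-T_1+1)\bigr)$ (Lemma \ref{lem:convergence}), so the sum over $s$ is a harmonic series of size $\Tilde{O}(\eta^{-1})$; the early phase contributes the remaining $\Tilde{O}(\sigma_0^{-1})$. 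That convergence mechanism is the missing ingredient in your proposal: without it the estimate is off by a polynomial factor in $d$, which would in turn break the induction in Lemma \ref{lem:hyp_proof}, where this lemma is consumed.
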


\begin{proof}

First, we bound the total derivative during iteration $s=T_{1},\dots,t$. By applying the conclusion of Lemma \ref{lem:noise_lem}, we have
\begin{equation}
\begin{aligned}
    \frac{\lambda}{N}\sum_{s=T_{1}}^{t}\sum_{i=1}^{N}\psi(y_i f_{\boldsymbol{W}^{(s)}}(\boldsymbol{X}_i^{\text{adv}})) &\le \frac{\Tilde{O}(1)}{N}\sum_{s=T_{1}}^{t-1}\sum_{i=1}^{N}\Tilde{\psi}_{i}^{(s)}\left(v_{i,j,r}^{(s)}\right)^{2}\\
    &+ \Tilde{O}\left(\frac{\lambda\alpha^{3}(1-\gamma)^{3}}{N\sigma^{2}d}\right)\sum_{s=T_{1}}^{t-1}\sum_{i=1}^{N}\psi(y_i f_{\boldsymbol{W}^{(s)}}(\boldsymbol{X}_i^{\text{adv}})) + \Tilde{O}\left(\frac{P}{\eta\alpha\sqrt{d}}\right) . \nonumber
\end{aligned}
\end{equation}
Due to $\Tilde{O}\left(\frac{\alpha^{3}(1-\gamma)^{3}}{\sigma^{2}d}\right) \ll 1$, we know
\begin{equation}
\begin{aligned}
    \frac{\lambda}{N}\sum_{s=T_{1}}^{t}\sum_{i=1}^{N}\psi(y_i f_{\boldsymbol{W}^{(s)}}(\boldsymbol{X}_i^{\text{adv}})) &\le \frac{\Tilde{O}(1)}{N}\sum_{s=T_{1}}^{t-1}\sum_{i=1}^{N}\Tilde{\psi}_{i}^{(s)}\left(v_{i,j,r}^{(s)}\right)^{2} + \Tilde{O}\left(\frac{P}{\eta\alpha\sqrt{d}}\right)
    \\
    &\le \frac{\Tilde{O}(1)}{N}\sum_{s=T_{1}}^{t-1}\sum_{i=1}^{N}\phi\left(\mathcal{V}_{i}^{(s)}\right) + \Tilde{O}\left(\frac{P}{\eta\alpha\sqrt{d}}\right)
    \\
    &\le \Tilde{O}(1)\sum_{s=T_{1}}^{t-1}\widehat{\mathcal{L}}_{\text{adv}}\left(\boldsymbol{W}^{(t)}\right) + \Tilde{O}\left(\frac{P}{\eta\alpha\sqrt{d}}\right)
    \\
    &\le \Tilde{O}(1)\sum_{s=T_{1}}^{t-1}\frac{\Tilde{O}(1)}{\eta(t - T_{1} + 1)} + \Tilde{O}\left(\frac{P}{\eta\alpha\sqrt{d}}\right)
    \le \Tilde{O}(\eta^{-1}) . \nonumber
\end{aligned}
\end{equation}
Thus, we obtain $\frac{\lambda}{N}\sum_{s=0}^{t}\sum_{i=1}^{N}\psi(y_i f_{\boldsymbol{W}^{(s)}}(\boldsymbol{X}_i^{\text{adv}}))
     \le \Tilde{O}(\sigma_{0}^{-1}) + \Tilde{O}(\eta^{-1}) \le \Tilde{O}(\eta^{-1}\sigma_{0}^{-1}) . $
\end{proof}

Consequently, we have the following lemma that verifies Hypothesis \ref{hyp} for $t=T$.
\begin{lemma}
\label{lem:hyp_proof}

During adversarial training, with high probability, it holds that, for any $t\le T$, we have $\max_{r\in [m]}u_{r}^{(t)} \le \Tilde{O}(\alpha^{-1})$ and $|v_{i,j,r}^{(t)}|\le \Tilde{O}(1)$ for each $r\in[m], i\in [N], j \in [P]\setminus \operatorname{signal}(\boldsymbol{X}_i)$.
\end{lemma}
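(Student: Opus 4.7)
I would prove Lemma \ref{lem:hyp_proof} by induction on $t \le T$, thereby closing the bootstrap implicit in Lemmas \ref{lem:lower_signal}--\ref{lem:lower_noise}, \ref{lem:noise_lem}, and \ref{lem:adv_grad}. The base case $t=0$ follows from Gaussian concentration on the initialization: since $\boldsymbol{w}_r^{(0)} \sim \mathcal{N}(0, \sigma_0^2 \mathbf{I}_d)$ and each noise patch $\boldsymbol{X}_i[j] \sim \mathcal{N}(0, (\mathbf{I}_d-\boldsymbol{w}^{*} \boldsymbol{w}^{*\top})\sigma^2)$, a union bound over the $mNP = \operatorname{poly}(d)$ indices gives $|u_r^{(0)}| \le \Tilde{O}(\sigma_0) \ll \Tilde{O}(\alpha^{-1})$ and $|v_{i,j,r}^{(0)}| \le \Tilde{O}(\sigma_0 \sigma \sqrt{d}) = o(1)$ with high probability. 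For the inductive step I assume Hypothesis \ref{hyp} for all $s < t$ and verify both bounds at iteration $t$.

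\textbf{Signal upper bound.} I would invoke Lemma \ref{lem:upper_signal} and control its error sum via Lemma \ref{lem:adv_grad}. Under Hypothesis \ref{hyp} for $s<t$, the argument inside the $\psi$ in Lemma \ref{lem:upper_signal} coincides up to multiplicative constants with $y_i f_{\boldsymbol{W}^{(s)}}(\boldsymbol{X}_i^{\text{adv}})$, so Lemma \ref{lem:adv_grad} yields $\sum_{s,i} \psi(\cdot) \le \tfrac{N}{\lambda}\Tilde{O}(\eta^{-1}\sigma_0^{-1})$. Substituting back gives $\max_{r\in[m]} u_r^{(t)} \le \Tilde{O}(\alpha^{-1}) + \Tilde{O}(\alpha^{3}(1-\gamma)^{3}/(\lambda \sigma_0))$, and plugging in $\alpha = d^{0.249}\operatorname{polylog}(d)$, $(1-\gamma)^{3} = d^{-3/2}\operatorname{polylog}(d)$, and $\sigma_0 = \operatorname{polylog}(d)/\sqrt{d}$ shows that the extra term is absorbed into $\Tilde{O}(\alpha^{-1})$ within the admissible range of $\lambda$, so the signal bound propagates to iteration $t$.

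\textbf{Noise upper bound.} I would combine the two-sided form of Lemma \ref{lem:noise_lem} with the sub-linear loss convergence of Lemma \ref{lem:convergence}. Under Hypothesis \ref{hyp} for $s<t$ the quadratic factors $(v_{i,j,r}^{(s)})^2$ are $\Tilde{O}(1)$, hence
\begin{equation*}
|v_{i,j,r}^{(t)}| \le |v_{i,j,r}^{(0)}| + \Tilde{O}\!\left(\tfrac{\eta \sigma^2 d}{N}\right) \sum_{s=0}^{t-1} \Tilde{\psi}_i^{(s)} + \Tilde{O}(P \sigma^2 \alpha^{-1}\sqrt{d}).
\end{equation*}
Using $\phi(z) \ge \psi(z)$ pointwise, we obtain $\sum_{s,i} \Tilde{\psi}_i^{(s)} \le N \sum_s \widehat{\mathcal{L}}_{\text{adv}}(\boldsymbol{W}^{(s)}) = \Tilde{O}(N)$ from Lemma \ref{lem:convergence}, which bounds $\sum_s \Tilde{\psi}_i^{(s)}$ on average (and for each $i$ via a per-sample convergence argument) by $\Tilde{O}(1)$. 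With $\sigma^2 d = d^{-0.018}$, the growth term is $\Tilde{O}(\eta \sigma^2 d) = \Tilde{O}(1)$, so combined with $|v_{i,j,r}^{(0)}| = o(1)$ and the $o(1)$ tail we get $|v_{i,j,r}^{(t)}| \le \Tilde{O}(1)$, as required.

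\textbf{Main obstacle.} The delicate part is the noise upper bound. Lemmas \ref{lem:lower_noise} and \ref{lem:total_noise} provide only lower bounds on $\mathcal{V}_i^{(s)}$, so preventing individual $v_{i,j,r}^{(s)}$ from escaping requires carefully coupling each coordinate to the aggregate through the sigmoid feedback. In particular, the bound $\sum_{s,i}\Tilde{\psi}_i^{(s)} = \Tilde{O}(N)$ from loss convergence is only an average statement; upgrading it to a uniform $\sum_s \Tilde{\psi}_i^{(s)} = \Tilde{O}(1)$ for every individual $i$ demands either a per-sample convergence argument or an explicit self-regulating saturation argument that rules out anisotropic failure modes in which a single coordinate dominates $\mathcal{V}_i^{(s)}$ while its siblings stagnate near $0$. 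Carrying out this uniform control, while simultaneously tracking the interplay between signal and noise components across all $mNP$ indices during $T = \operatorname{poly}(d)$ iterations, is the principal technical burden of the inductive step.
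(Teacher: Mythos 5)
Your proposal follows essentially the same route as the paper's proof: the signal bound is obtained by combining Lemma \ref{lem:upper_signal} with Lemma \ref{lem:adv_grad}, and the noise bound by instantiating the two-sided estimate of Lemma \ref{lem:noise_lem} under the induction hypothesis and controlling the accumulated sigmoid terms. The per-sample uniformity issue you flag for $\sum_{s}\Tilde{\psi}_i^{(s)}$ is a genuine subtlety, but the paper's own proof is no more detailed on that point (it simply asserts the growth term is $\Tilde{O}(\sigma^{2}d)$), so your argument matches the paper's in both structure and level of rigor.
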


\begin{proof}

Combined with Theorem \ref{thm:upper_signal} and Lemma \ref{lem:adv_grad}, we can derive $\max_{r\in [m]}u_{r}^{(T)} \le \Tilde{O}(\alpha^{-1})$.

By applying Lemma \ref{lem:noise_lem}, we have
\begin{equation}
\begin{aligned}
    |v_{i,j,r}^{(T)}| &\le |v_{i,j,r}^{(0)}| + \Theta\left(\frac{\eta\sigma^{2}d}{N}\right)\sum_{s=t_{0}}^{t-1}\Tilde{\psi}_i^{(s)} \left(v_{i,j,r}^{(s)}\right)^{2}
    \\
    &+ \Tilde{O}\left(\frac{\lambda \eta \alpha^{3} (1 - \gamma)^{3}}{N}\right) \sum_{s = t_{0}}^{t - 1}\sum_{i=1}^{N}\psi(y_i f_{\boldsymbol{W}^{(s)}}(\boldsymbol{X}^{\text{adv}}_i)) + \Tilde{O}(P\sigma^{2}\alpha^{-1}\sqrt{d})
    \\
    &\le \Tilde{O}(1) + \Tilde{O}(\sigma^{2}d) + \Tilde{O}(\alpha^{3}(1-\gamma)^{3}\sigma_{0}^{-1}) + \Tilde{O}(P\sigma^{2}\alpha^{-1}\sqrt{d}) \le \Tilde{O}(1). \nonumber
\end{aligned}
\end{equation}
Therefore, our Hypothesis \ref{hyp} holds for iteration $t = T$.
\end{proof}

Finally, we prove our main result as follow.
\begin{theorem}
\label{thm:main_proof}

(Restatement of Theorem \ref{thm:main}) Under Assumption \ref{ass:hyper}, we run the adversarial training algorithm to update the weight of the simplified CNN model for $T = \Omega(\operatorname{poly}(d))$ iterations. Then, with high probability, it holds that the CNN model
\begin{enumerate}
    \item 
    partially learns the true feature, i.e. $\mathcal{U}^{(T)} = \Theta({\alpha}^{-3})$;
    \item 
    exactly memorizes the spurious feature, i.e. for each $i \in [N], \mathcal{V}_{i}^{(T)} = \Theta(1)$,
\end{enumerate}
where $\mathcal{U}^{(t)}$ and $\mathcal{V}_{i}^{(t)}$ is defined for $i- $th instance $(\boldsymbol{X}_i, y_i)$ and $t-$th iteration as the same in (\ref{ref:true_feature})(\ref{ref:spurious_feature}). Consequently, the clean test error and robust training error are both smaller than $o(1)$, but the robust test error is at least $\frac{1}{2} - o(1)$.
\end{theorem}

\begin{proof}

First, by applying Lemma \ref{lem:max_signal}, Lemma \ref{lem:total_noise} and Lemma \ref{lem:hyp_proof}, we know for any $i \in [N]$
\begin{equation}
\begin{aligned}
    \mathcal{U}^{(T)} &= \sum_{r\in [m]} \left(u_{r}^{(T)}\right)^{3} = \Theta(\alpha^{-3})
    \\
    \mathcal{V}_{i}^{(T)} &= \sum_{r\in [m]}\sum_{j \ne \operatorname{signal}(\boldsymbol{X}_i)}\left(v_{i,j,r}^{(T)}\right)^{3} = \Theta(1)
    . \nonumber
\end{aligned}
\end{equation}
Then, since adversarial loss sub-linearly converges to zero i.e. $\widehat{\mathcal{L}}_{\text{adv}}\left(\boldsymbol{W}^{(T)}\right) \le \frac{\Tilde{O}(1)}{\eta(T - T_{1} + 1)} \le \Tilde{O}\left(\frac{1}{\operatorname{poly}(d)}\right) = o(1)$, the robust training error is also at most $o(1)$.

To analyze test errors, we decompose $\boldsymbol{w}_{r}^{(T)}$ into $\boldsymbol{w}_{r}^{(T)} = \mu_{r}^{(T)}\boldsymbol{w}^{*} + \boldsymbol{\beta}_{r}^{(T)}$ for each $r\in [m]$, where $\boldsymbol{\beta}_{r}^{(T)} \in \left(\operatorname{span}(\boldsymbol{w}^{*})\right)^{\perp}$. Due to $\mathcal{V}_{i}^{(T)} = \Theta(1)$, we know $\|\boldsymbol{\beta}_{r}^{(T)}\|_{2} = \Theta(1)$.

For the clean test error, we have
\begin{equation}
\begin{aligned}
    \mathbb{P}_{(\boldsymbol{X},y)\sim \mathcal{D}}\left[y f_{\boldsymbol{W}^{(T)}}(\boldsymbol{X}) < 0\right]
    &= \mathbb{P}_{(\boldsymbol{X},y)\sim \mathcal{D}}\left[\alpha^{3}\sum_{r=1}^{m}\left(u_{r}^{(T)}\right)^{3} + y \sum_{r=1}^{m}\sum_{j \in [P]\setminus \operatorname{signal}(\boldsymbol{X})} \left\langle \boldsymbol{w}_{r}^{(T)}, \boldsymbol{X}[j] \right\rangle^{3} < 0 \right]
    \\
    &\le 
    \mathbb{P}_{(\boldsymbol{X},y)\sim \mathcal{D}}\left[ \sum_{r=1}^{m}\sum_{j \in [P]\setminus \operatorname{signal}(\boldsymbol{X})} \left\langle \boldsymbol{\beta}_{r}^{(T)}, \boldsymbol{X}[j] \right\rangle^{3} \ge \Tilde{\Omega}(1) \right]
    \\
    &\le \exp\left(-\frac{\Tilde{\Omega}(1)}{\sigma^{6}\sum_{r=1}^{m}\|\boldsymbol{\beta}_{r}^{(T)}\|_{2}^{6}}\right) \le O\left(\frac{1}{\operatorname{poly}(d)}\right) = o(1) , \nonumber
\end{aligned}
\end{equation}
where we use the fact that $\sum_{r=1}^{m}\sum_{j \in [P]\setminus \operatorname{signal}(\boldsymbol{X})} \left\langle \boldsymbol{\beta}_{r}^{(T)}, \boldsymbol{X}[j] \right\rangle^{3}$ is a sub-Gaussian random variable with parameter $\sigma^{3}\sqrt{(P-1)\sum_{r=1}^{m}\|\boldsymbol{\beta}_{r}^{(T)}\|_{2}^{6}}$.

For the robust test error, we use $\mathcal{A}(\cdot)$ to denote attack in Definition \ref{def:at}, and then we derive
\begin{equation}
\begin{aligned}
    &\mathbb{P}_{(\boldsymbol{X},y)\sim \mathcal{D}}\left[\min_{\|\boldsymbol{\xi}\|_{2}\le \delta}y f_{\boldsymbol{W}^{(T)}}(\boldsymbol{X}+\boldsymbol{\xi}) < 0\right] \ge \mathbb{P}_{(\boldsymbol{X},y)\sim \mathcal{D}}\left[y f_{\boldsymbol{W}^{(T)}}(\mathcal{A}(\boldsymbol{X})) < 0\right]
    \\
    &= \mathbb{P}_{(\boldsymbol{X},y)\sim \mathcal{D}}\left[\alpha^{3}\sum_{r=1}^{m}\left(u_{r}^{(T)}\right)^{3}(1-\gamma)^{3} + y \sum_{r=1}^{m}\sum_{j \in [P]\setminus \operatorname{signal}(\boldsymbol{X})} \left\langle \boldsymbol{w}_{r}^{(T)}, \boldsymbol{X}[j] \right\rangle^{3} < 0 \right]
    \\
    &\ge \frac{1}{2} \mathbb{P}_{(\boldsymbol{X},y)\sim \mathcal{D}}\left[ \left|\sum_{r=1}^{m}\sum_{j \in [P]\setminus \operatorname{signal}(\boldsymbol{X})} \left\langle \boldsymbol{\beta}_{r}^{(T)}, \boldsymbol{X}[j] \right\rangle^{3}\right| \ge \Tilde{\Omega}\left((1-\gamma)^{3}\right) \right] \ge \frac{1}{2}\left(1 - \frac{\Tilde{O}(d)}{2^{d}}\right) = \frac{1}{2} - o(1) , \nonumber
\end{aligned}
\end{equation}
where we use Lemma \ref{lem:prob} in the last inequality.
\end{proof}

\newpage

\section{Proof for Section \ref{representation}}

We prove Theorem \ref{thm:rep_upper} by using ReLU network to approximate $f_{\mathcal{S}}$ proposed in Section \ref{intro}.
\begin{theorem}
\label{thm:proof_rep_upper}

(Restatement of Theorem \ref{thm:rep_upper}) Under Assumption \ref{ass:bounded}, \ref{ass:sep} and \ref{ass:clean_exists}, with $N-$sample training dataset $\mathcal{S} = \{(\boldsymbol{X}_1,y_1),(\boldsymbol{X}_2,y_2),\dots,(\boldsymbol{X}_N,y_N)\}$ drawn from the data distribution $\mathcal{D}$, there exists a CGRO classifier that can be represented as a ReLU network with $\operatorname{poly}(D) + \Tilde{O}(N D)$ parameters, which means that, under the distribution $\mathcal{D}$ and dataset $\mathcal{S}$, the network achieves zero clean test and robust training errors but its robust test error is at least $\Omega(1)$.
\end{theorem}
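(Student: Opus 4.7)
I would realize the construction $f_{\mathcal{S}}$ from (\ref{ref:construction}) explicitly as a ReLU network, by taking the neural-separable clean classifier provided by Assumption \ref{ass:general_data} and attaching one ``ball bump'' gadget per training point.

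\emph{Step 1 (clean part).} By Assumption \ref{ass:general_data}(2) there is a ReLU network $\tilde{f}_{\text{clean}}$ of size $\operatorname{poly}(D)$ with zero clean test error and robust test error at least $\Omega(1)$. I would clip its output so that $|\tilde{f}_{\text{clean}}(X)| \le B$ on $[0,1]^D$ for some $B = \operatorname{poly}(D)$; clipping adds only $O(1)$ ReLU units.

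\emph{Step 2 (ball indicators).} For each $i \in [N]$ I would construct a ReLU subnetwork $\tilde{\chi}_i : \mathbb{R}^D \to [0,1]$ with $\tilde{O}(D)$ parameters such that $\tilde{\chi}_i \equiv 1$ on $\mathbb{B}_p(\boldsymbol{X}_i, \delta)$ and $\tilde{\chi}_i \equiv 0$ outside a slightly enlarged ball. The gadget is the composition of (i) a ReLU computation of $X \mapsto \|X - \boldsymbol{X}_i\|_p$ --- realized exactly via $|z| = \mathrm{ReLU}(z) + \mathrm{ReLU}(-z)$ together with a binary tree of sums or pairwise maxima for $p \in \{1, \infty\}$, and via a standard polynomial-then-root ReLU approximation for general $p$ --- with (ii) a piecewise-linear bump $t \mapsto \max\!\bigl(0, \min(1, 1 - K(t - \delta))\bigr)$ of slope $K$ chosen large enough that $\tilde{\chi}_i$ vanishes before any point of opposite class, which is feasible by the well-separated condition $\operatorname{dist}_p(\mathcal{A},\mathcal{B}) > 2\delta$.

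\emph{Step 3 (assembly and verification).} Define
\begin{equation}
    \tilde{f}_{\mathcal{S}}(X) := \tilde{f}_{\text{clean}}(X) + M \sum_{i=1}^N y_i \, \tilde{\chi}_i(X), \qquad M > 2B, \nonumber
\end{equation}
which is a ReLU network with $\operatorname{poly}(D) + \tilde{O}(ND)$ parameters. For clean test accuracy, take $X \in \operatorname{supp}(\mathcal{D})$ with label $y$: if $X \notin \bigcup_i \mathbb{B}_p(\boldsymbol{X}_i, \delta)$ all bumps vanish and $\operatorname{sign}\bigl(\tilde{f}_{\mathcal{S}}(X)\bigr) = \operatorname{sign}\bigl(\tilde{f}_{\text{clean}}(X)\bigr) = y$; otherwise $X \in \mathbb{B}_p(\boldsymbol{X}_i, \delta)$ and the separation forces $y_i = y$, and every bump active at $X$ must carry the same label, so the memorization term has sign $y$ and magnitude $\ge M > 2B$, dominating $\tilde{f}_{\text{clean}}(X)$. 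The same argument applied to any $\boldsymbol{X}_i^{\text{adv}} \in \mathbb{B}_p(\boldsymbol{X}_i, \delta)$ yields zero robust training error.

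\emph{Main obstacle.} The most delicate step is preserving the $\Omega(1)$ robust test error. A priori, the adversarial perturbations that flipped $\tilde{f}_{\text{clean}}$ on a test point could now land inside a same-label training ball and get ``corrected'' by the memorization term. I would handle this by observing that $\bigcup_{i=1}^{N} \mathbb{B}_p(\boldsymbol{X}_i, \delta)$ is a union of only $N$ balls inside $[0,1]^D$, so for a constant fraction of those test points on which $\tilde{f}_{\text{clean}}$ is non-robust, the adversary can choose a nearby perturbation that lies outside every training ball while still flipping the clean classifier; this inherits the $\Omega(1)$ robust test error from $\tilde{f}_{\text{clean}}$ to $\tilde{f}_{\mathcal{S}}$.
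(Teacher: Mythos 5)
Your construction is correct in substance but takes a genuinely different route from the paper's. The paper rewrites $f_{\mathcal{S}}(\boldsymbol{X}) = f_{\text{clean}}(\boldsymbol{X}) + \sum_{i=1}^{N}(y_i - f_{\text{clean}}(\boldsymbol{X}))\,\mathbb{I}\{\|\boldsymbol{X}-\boldsymbol{X}_i\|_2^2 \le \delta^2\}$ and approximates it term by term: the squared $\ell_2$-distance via Yarotsky's ReLU approximation of $x^2$ ($O(D\log(\epsilon^{-1}D))$ parameters per ball), the indicator via a two-neuron soft ramp, and crucially the \emph{product} $(y_i - f_{\text{clean}}(\boldsymbol{X}))\cdot\mathbb{I}\{\cdot\}$ via Yarotsky's approximate-multiplication gadget. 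Your additive construction $\tilde{f}_{\text{clean}} + M\sum_i y_i\tilde{\chi}_i$ with $M > 2B$ dispenses with the multiplication gadget entirely, at the mild cost of first clipping $f_{\text{clean}}$ to a bounded range; this is more elementary and makes the sign analysis on each ball transparent, whereas the paper's multiplicative form automatically hands back $y_i$ inside the ball without any dominance argument. Two points to tighten. First, your claim that ``all bumps vanish'' for $\boldsymbol{X}\notin\bigcup_i\mathbb{B}_p(\boldsymbol{X}_i,\delta)$ is literally false in the transition annulus $\delta < \|\boldsymbol{X}-\boldsymbol{X}_i\|_p < \delta + 1/K$; the conclusion survives because (choosing $K$ so that $2\delta + 1/K < \operatorname{dist}_p(\mathcal{A},\mathcal{B})$) every partially active bump still carries the label $y$, so the memorization term only reinforces the correct sign --- say this explicitly. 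Second, your ``main obstacle'' paragraph on preserving the $\Omega(1)$ robust test error is the right instinct but is only a heuristic as written: you assert that a constant fraction of non-robust test points admit a flipping perturbation avoiding all $N$ training balls without proving it. Note, however, that the paper's own proof does not address this point at all (it ends with ``it is easy to check that $h$ belongs to CGRO classifiers''), so you are at worst no less rigorous and have at least identified where the gap lives.
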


\begin{proof}

First, we give the following useful results about function approximation by ReLU nets.
\begin{lemma}
\citep{yarotsky2017error}
\label{lem:square}
The function $f(x)=x^{2}$ on the segment $[0,1]$ can be approximated with any error $\epsilon>0$ by a ReLU network having the depth and the number of weights and computation units $O(\log (1 / \epsilon))$.
\end{lemma}

\begin{lemma}
\citep{yarotsky2017error}
\label{lem:product}
Let $\epsilon > 0$, $0 < a < b$ and $B \geq 1$ be given. There exists a function $\widetilde{\times}: [0,B]^2 \to [0,B^2]$ computed by a ReLU network with $O\left( \log^2\left(\epsilon^{-1}B\right)\right)$ parameters such that
    \begin{equation}
        \notag
        \sup_{x,y\in [0,B]}\left| \widetilde{\times}(x,y) - x y\right| \leq \epsilon,
    \end{equation}
    and $\widetilde{\times}(x,y) = 0$ if $x y=0$.
\end{lemma}

Since for $\forall \bm{X}_0\in [0,1]^D$, the $\ell_{2}-$distance function $\|\bm{X}-\bm{X}_0\|^2 = \sum_{i=1}^{D}|\bm{X}^{(i)}-\bm{X}_{0}^{(i)}|^{2}$, by using Lemma \ref{lem:square}, there exists a function $\phi_1$ computed by a ReLU network with $\O\left( D\log\left(\eps_1^{-1}D\right)\right)$ parameters such that $\sup_{\bm{X}\in[0,1]^D}\left| \phi_1(\bm{X}) - \|\bm{X}- \bm{X}_0\|^2\right| \leq \eps_1$.

Return to our main proof back, indeed, functions computed by ReLU networks are piecewise linear but the indicator functions are not continuous, so we need to relax the indicator such that $\hat{I}_{\text{soft}}(x) = 1$ for $x \leq \delta+\epsilon_{0}$, $\hat{I}_{\text{soft}}(x) = 0$ for $x \geq R- \delta\epsilon_{0}$ and $\hat{I}_{\text{soft}}$ is linear in $(\delta+\epsilon_{0}, R- \delta\epsilon_{0})$ by using only two ReLU neurons, where $\epsilon_{0}$ is sufficient small for approximation.

Now, we notice that the constructed function $f_{\mathcal{S}}$ can be re-written as
\begin{equation}
\begin{aligned}
    f_{\mathcal{S}}(\boldsymbol{X}) &= f_{\text{clean}}(\boldsymbol{X})\left(1-\mathbb{I}\{\boldsymbol{X} \in \cup_{i=1}^{N} \mathbb{B}_{2}(\boldsymbol{X}_i, \delta)\}\right) + \sum_{i=1}^{N} y_i \mathbb{I}\{\boldsymbol{X} \in \mathbb{B}_{2}(\boldsymbol{X}_i, \delta)\}
    \\
    &= f_{\text{clean}}(\boldsymbol{X}) + \sum_{i=1}^{N}(y_i - f_{\text{clean}}(\boldsymbol{X}))\mathbb{I}\{\|\boldsymbol{X}-\boldsymbol{X_i}\|_{2}^{2} \le \delta^{2} \} . \nonumber
\end{aligned}
\end{equation}
Combined with Lemma \ref{lem:square}, Lemma \ref{lem:product} and the relaxed indicator, we know that there exists a ReLU net $h$ with at most $\operatorname{poly}(D) + \Tilde{O}(N D)$ parameters such that $|h - f_{\mathcal{S}}| = o(1)$ for all input $X \in [0,1]^{D}$. Thus, it is easy to check that $h$ belongs to CGRO classifiers.
\end{proof}

Next, we prove Theorem \ref{thm:rep_lower} by using the VC-dimension theory.
\begin{theorem}
\label{proof_rep_lower}

(Restatement of Theorem \ref{thm:rep_lower}) Let $\mathcal{F}_{M}$ be the family of function represented by ReLU networks with at most $M$ parameters. There exists a number $M_{D} = \Omega(\operatorname{exp}(D))$ and a distribution $\mathcal{D}$ satisfying Assumption \ref{ass:bounded}, \ref{ass:sep} and \ref{ass:clean_exists} such that, for any classifier in the family $\mathcal{F}_{M_{D}}$, under the distribution $\mathcal{D}$, the robust test error is at least $\Omega(1)$.
\end{theorem}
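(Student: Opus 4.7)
The plan is to generalize the VC-dimension-based lower bound of \citet{li2022robust}, originally stated for linear-separable distributions, to the neural-separable setting of Assumption~\ref{ass:general_data}. Since linear classifiers are a special case of polynomial-size ReLU networks, any linear-separable distribution also satisfies neural-separability, and the argument therefore reduces to importing the hard distribution of \citet{li2022robust}, verifying Assumption~\ref{ass:general_data}, and re-running their growth-function calculation using Lemmas~\ref{lem:vc_dim} and \ref{growth_func_upper_bound} to obtain the $\exp(\Omega(D))$ lower bound on $M_D$.

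First, I would import the distribution $\mathcal{D}$ from the linear-separable construction of \citet{li2022robust}. Its support $\mathcal{P}$ consists of $K = 2^{\Omega(D)}$ well-separated points in $[0,1]^D$ (pairwise $\ell_p$-distance at least $4\delta$, so the balls $B_p(p_i,\delta)$ are disjoint with margin), partitioned into two classes by a linear functional. This functional is representable as a depth-one ReLU network with $O(D)$ parameters, so neural-separability holds with a poly-size clean classifier whose robust error is $\Omega(1)$ because the $\delta$-perturbation can flip the sign of the linear functional at a constant fraction of support points. Well-separatedness is automatic from the pairwise distance bound, so both conditions of Assumption~\ref{ass:general_data} are met.

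Next, suppose for contradiction that some $f \in \mathcal{F}_{M_D}$ with $M_D = o(\exp(D))$ attains $o(1)$ robust error on $\mathcal{D}$. Because the $\delta$-balls around the support points are disjoint, robust correctness forces $f$ to be constant and correctly signed on each ball, so the restriction $(\mathrm{sign}(f(p_1)),\ldots,\mathrm{sign}(f(p_K)))$ lies in the family of "admissible" sign patterns produced by the packing. The combinatorial packing argument of \citet{li2022robust} then shows that this family has size at least $2^{\Omega(K/\log K)}$, yielding a growth-function lower bound $\Pi_{\mathcal{F}_{M_D}}(K) \ge 2^{\Omega(K/\log K)}$. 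Combining this with Lemma~\ref{growth_func_upper_bound} (so $\Pi_{\mathcal{F}_{M_D}}(K) \le (eK/V)^V$ for $V = \text{VC-dim}(\mathcal{F}_{M_D})$) and Lemma~\ref{lem:vc_dim} (so $V = O(M_D L \log M_D)$) forces $M_D L \log M_D = \exp(\Omega(D))$, hence $M_D = \exp(\Omega(D))$, a contradiction.

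The main obstacle is the combinatorial growth-function lower bound invoked above, which is the technical heart of the \citet{li2022robust} argument: one must build $\mathcal{P}$ with a packing structure (for instance, an appropriately scaled hypercube packing) so that exponentially many admissible labelings of the $\delta$-thickening of $\mathcal{P}$ coexist, each corresponding to a distinct sign pattern that any robust classifier must realize. The extension from the linear- to the neural-separable setting is essentially cosmetic, since the mechanism depends only on the geometry of $\mathcal{P}$ and its $\delta$-thickening rather than on the functional form of $f_\mathrm{clean}$; the only added verification is that the linear clean classifier, realized as a ReLU net, satisfies the $\Omega(1)$-robust-error requirement of Assumption~\ref{ass:general_data}, which follows directly from the same margin structure that drives the lower bound.
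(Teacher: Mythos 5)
Your overall strategy is the same as the paper's: place an $\exp(\Omega(D))$-point $2\delta$-packing near the clean decision boundary, argue that robust classification of the packing forces the network family to realize exponentially many sign patterns, and contradict the $O(WL\log W)$ VC-dimension bound of Lemma~\ref{lem:vc_dim} via Lemma~\ref{growth_func_upper_bound}. Your observation that a linear-separable hard instance automatically satisfies neural-separability (a linear threshold is a two-neuron ReLU net) is a legitimate shortcut for the existence statement; the paper instead invokes the bound on the number of linear regions of a ReLU network to find a locally-linear piece of a \emph{given} poly-size clean classifier's boundary and embeds the packing there, which is only needed if one wants hardness relative to a prescribed $f_{\text{clean}}$.

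However, there is a genuine logical gap in your contradiction setup. You fix a single distribution $\mathcal{D}$ (with one particular labeling of $\mathcal{P}$ by a linear functional) and suppose a single $f\in\mathcal{F}_{M_D}$ robustly classifies it; from this you cannot conclude $\Pi_{\mathcal{F}_{M_D}}(K)\ge 2^{\Omega(K/\log K)}$ --- one successful classifier witnesses exactly one realized sign pattern, not exponentially many. The counting argument requires quantifying over all $2^{K}$ labelings $\phi$ of the packing (each inducing a distinct distribution $\mathcal{D}_\phi$, every one of which satisfies Assumption~\ref{ass:general_data} since the paper's construction keeps each $S_\phi$ separated by the same hyperplane): \emph{if} every $\mathcal{D}_\phi$ were robustly classifiable by some member of $\mathcal{F}_{M_D}$, the family would realize a $\mu K$-Hamming-covering of $\{-1,1\}^{K}$, forcing $\mathrm{VC}\text{-}\mathrm{dim}=\Omega(K)$ and hence $M_D=\exp(\Omega(D))$. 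The conclusion is then the non-constructive existence of some hard labeling $\phi^{*}$; the hard distribution cannot be pinned down in advance as you do. Two smaller points: robustness forces $\operatorname{sign}(f)$, not $f$ itself, to be constant on each ball; and since $\mathcal{F}_{M_D}$ ranges over varying architectures, you need the paper's enveloping-network step to apply the fixed-architecture VC bound of Lemma~\ref{lem:vc_dim}.
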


\begin{proof}

Now, we notice that ReLU networks are piece-wise linear functions. \citet{montufar2014number} study the number of local linear regions, which provides the following result.
\begin{proposition}
\label{number}

The maximal number of linear regions of the functions computed by any ReLU network with a total of $n$ hidden units is bounded from above by $2^{n}$.
\end{proposition}

Thus, for a given clean classifier $f_{\text{clean}}$ represented by a ReLU net with $\operatorname{poly}(D)$ parameters, we know there exists at least a local region $V$ such that decision boundary of $f_{\text{clean}}$ is linear hyperplane in $V$. And we assume that the hyperplane is $\boldsymbol{X}^{(D)} = \frac{1}{2}$.

Then, let $V'$ be the projection of $V$ on the decision boundary of $f_{\text{clean}}$, and
$\mathcal{P}$ be an $2\delta$-packing of $V'$. Since the packing number $\mathcal{P}(V',\|\cdot\|,2\delta) \geq \mathcal{C}(V',\|\cdot\|_2,2\delta) = \operatorname{exp}(\Omega(D))$, where $\mathcal{C}(\Theta,\|\cdot\|,\delta)$ is the $\delta$-covering number of a set $\Theta$. For any $\epsilon_0 \in (0,1)$, we can consider the construction
\begin{equation}
    \notag
    S_{\phi} = \left\{ \left( \bm{x}, \frac{1}{2} + \epsilon_0 \cdot \phi(\bm{x})\right): \bm{x} \in \mathcal{P}\right\},
\end{equation}
where $\phi: \mathcal{P} \to \{-1,+1\}$ is an arbitrary mapping. It's easy to see that all points in $S_{\phi}$ with first $D-1$ components satisfying $\left\|\bm{x}\right\|_2 \leq \sqrt{1-\epsilon_0^2}$ are in $V'$, so that by choosing $\epsilon_0$ sufficiently small, we can guarantee that $\left|S_{\phi} \cap V\right| = \operatorname{exp}(\Omega(D))$. For convenience we just replace $S_{\phi}$ with $S_{\phi}\cap V$ from now on.

Let $A_{\phi} = S_{\phi} \cap \left\{ \bm{X}\in V: \bm{x}^{(D)} > \frac{1}{2}\right\}$, $B_{\phi} = S_{\phi} - A_{\phi}$. It's easy to see that for arbitrary $\phi$, the construction is linear-separable and satisfies $2\delta$-separability. 

Assume that for any choices of $\phi$, the induced sets $A_{\phi}$ and $B_{\phi}$ can always be robustly classified with $(O(\delta),1-\mu)$-accuracy by a ReLU network with at most $M$ parameters. Then, we can construct an \textit{enveloping network} $F_{{\theta}}$ with $M-1$ hidden layers, $M$ neurons per layer and at most $M^3$ parameters such that any network with size $\leq M$ can be embedded into this envelope network. As a result, $F_{{\theta}}$ is capable of $(O(\delta),1-\mu)$-robustly classify any sets $A_{\phi},B_{\phi}$ induced by arbitrary choices of $\phi$. We use $R_{\phi}$ to denote the subset of $S_{\phi} = A_{\phi} \cup B_{\phi}$ satisfying $\left| R_{\phi}\right| = (1-\mu) \left| S_{\phi}\right| = \operatorname{exp}(\Omega(D))$ such that $R_{\phi}$ can be $O(\delta)$-robustly classified.

Next, we estimate the lower and upper bounds for the cardinal number of the vector set 
\begin{equation}
    R := \{(f(\bm{x}))_{\bm{x} \in \mathcal{P}}|f \in \mathcal{F}_{M_D}\} . \nonumber
\end{equation}
Let $n$ denote $|\mathcal{P}|$, then we have
\begin{equation}
    R=\{(f(\bm{x}_1),f(\bm{x}_2),...f(\bm{x}_n))|f\in \mathcal{F}_{M_D}\} , \nonumber
\end{equation}
where $\mathcal{P}=\{\bm{x}_1,\bm{x}_2,...,\bm{x}_n\}$.

On one hand, we know that for any $u \in \{-1,1\}^{n}$, there exists a $v \in R$ such that $d_H(u,v) \leq \alpha n$, where $d_H(\cdot,\cdot)$ denotes the Hamming distance, then we have
\begin{equation}
    |R| \geq \mathcal{N}(\{-1,1\}^{n},d_H,\mu n) \geq \frac{2^{n}}{\sum_{i=0}^{\mu n}\binom{n}{i}} . \nonumber
\end{equation}

On the other hand, by applying Lemma \ref{growth_func_upper_bound}, we have 
\begin{equation}
    \frac{2^{n}}{\sum_{i=1}^{\mu n}\binom{n}{i}} \leq |R| \leq \Pi_{\mathcal{F}_{M_D}}(n) \leq \sum_{j=0}^{l}\binom{n}{j} . \nonumber
\end{equation}
where $l$ is the VC-dimension of $\mathcal{F}_{M_D}$. In fact, we can derive $l = \Omega(n)$ when $\mu$ is a small constant. Assume that $l < n-1$ , then we have $\sum_{j=0}^{l}\binom{n}{j} \leq (e n / l)^{l}$ and $\sum_{i=1}^{\mu n}\binom{n}{i} \leq (e/\mu)^{\mu n}$, so
\begin{equation}
    \frac{2^{n}}{\left(e/\mu\right)^{\mu n}} \leq |R| \leq (e n / l)^{l} . \nonumber
\end{equation}
We define a function $h(x)$ as $h(x)=(e/x)^{x}$, then we derive
\begin{equation}
    2 \leq \left(\frac{e}{\mu}\right)^{\mu} \left(\frac{e}{l/n}\right)^{l/n} = h(\mu) h(l/n) . \nonumber
\end{equation}

When $\mu$ is sufficient small, $l/n \geq C(\mu)$ that is a constant only depending on $\mu$, which implies $l=\Omega(n)$. Finally, by using Lemma \ref{lem:vc_dim} and $n = |\mathcal{P}| = \operatorname{exp}(\Omega(D))$, we know $M_D = \operatorname{exp}(\Omega(D))$.
\end{proof}

\newpage

\section{Proof for Section \ref{generalization}}

\begin{theorem}
\label{thm:proof_generalization}

(Restatement of Theorem \ref{robust_generalization_bound}) Let $\mathcal{D}$ be the underlying distribution with a smooth density function, and $N-$sample training dataset $\mathcal{S} = \{(\boldsymbol{X}_1,y_1),(\boldsymbol{X}_2,y_2),\dots,(\boldsymbol{X}_N,y_N)\}$ is i.i.d. drawn from $\mathcal{D}$. Then, with high probability, it holds that,
\begin{equation}
\begin{aligned}
    \mathcal{L}_{\text{adv}}(f) \le \widehat{\mathcal{L}}_{\text{adv}}(f) +  N^{-\frac{1}{D+2}}O\left(
    \underbrace{\mathbb{E}_{(\boldsymbol{X},y) \sim \mathcal{D}}\left[\max_{\|\boldsymbol{\xi}\|_{p}\leq \delta}\|\nabla_{\boldsymbol{X}}\mathcal{L}(f(\boldsymbol{X}+\boldsymbol{\xi}),y)\|_{q}\right]}_{\text{global flatness}} \right) . \nonumber
\end{aligned}
\end{equation}
\end{theorem}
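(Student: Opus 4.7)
The plan is to recast the bound as a smoothness-based comparison between the test expectation and the empirical average, controlled by a Lipschitz-type property of the per-sample adversarial loss. First, I introduce the shorthand
$$L(X,y) := \max_{\|\boldsymbol{\xi}\|_p \le \delta} \mathcal{L}(f(X+\boldsymbol{\xi}), y),$$
so the two risks read $\mathcal{L}_{\text{adv}}(f) = \mathbb{E}_{\mathcal{D}}[L(X,y)]$ and $\widehat{\mathcal{L}}_{\text{adv}}(f) = \tfrac{1}{N}\sum_i L(X_i, y_i)$. An envelope (Danskin) argument gives, for any $X, X'$ with fixed $y$,
$$|L(X,y) - L(X',y)| \le \|X - X'\|_p \cdot \int_0^1 \max_{\|\boldsymbol{\xi}\|_p \le \delta} \|\nabla_X \mathcal{L}(f((1-t)X' + tX + \boldsymbol{\xi}), y)\|_q \, dt,$$
so $L(\cdot, y)$ is Lipschitz in $X$ with a locally-varying constant whose $\mathcal{D}$-expectation is exactly the global-flatness factor $G$ on the right-hand side of the theorem.

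Next, I would partition the support $[0,1]^D$ into axis-aligned cubes of side $h > 0$, producing $M = \Theta(h^{-D})$ cells. For each cell $C$ and each label $y \in \{-1,+1\}$, call the pair $(C,y)$ \emph{covered} if it contains at least one training point with label $y$, otherwise \emph{empty}. In a covered bin, picking any training representative $(X_i, y)$ and applying the inequality above bounds the test-point loss $L(X,y)$ by $L(X_i,y)$ plus $h$ times the integrated local flatness; in an empty bin, I bound $L$ by its supremum $L_\infty$ (bounded loss). Taking expectation over $(X,y) \sim \mathcal{D}$ yields
$$\mathcal{L}_{\text{adv}}(f) \le \widehat{\mathcal{L}}_{\text{adv}}(f) + O(h) \cdot G + L_\infty \cdot \Pr\{(X,y) \text{ lies in an empty bin}\} + \text{(sampling fluctuation of cell weights)}.$$

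The smooth-density assumption guarantees that each $(C, y)$-bin has probability $\asymp h^D$, so classical multinomial concentration (Bernstein) shows that the empty-bin mass concentrates at scale $\sqrt{h^{-D}/N}$ rather than the naive $h^{-D}/N$. Balancing the Lipschitz bias $hG$ against this $\sqrt{h^{-D}/N}$ variance term gives the optimal bandwidth $h \sim N^{-1/(D+2)}$ and the advertised rate $N^{-1/(D+2)} \cdot G$. The main obstacle I anticipate is obtaining precisely this sharp rate: a naive deterministic covering bound only yields the inferior $N^{-1/(D+1)}$, and sharpening to $N^{-1/(D+2)}$ requires a Bernstein-style (variance-sensitive) handling of the rare empty-cell event, mirroring the classical minimax rate for estimating Lipschitz functionals on $[0,1]^D$. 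A secondary subtlety is dealing with the discrete label $y$: one must stratify the partition by label and rely on the smoothness of the per-label conditional density, otherwise $D+1$ rather than $D$ would appear in the exponent.
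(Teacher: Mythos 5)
Your argument is sound in outline but follows a genuinely different route from the paper. The paper pivots through the clean test risk: it writes $\mathcal{L}_{\text{adv}}(f) - \widehat{\mathcal{L}}_{\text{adv}}(f) = (\mathcal{L}_{\text{clean}}(f) - \widehat{\mathcal{L}}_{\text{adv}}(f)) + (\mathcal{L}_{\text{adv}}(f) - \mathcal{L}_{\text{clean}}(f))$, lower-bounds each empirical max over the perturbation ball by a kernel average supported in that ball (so that $\widehat{\mathcal{L}}_{\text{adv}}(f) \ge \int p_{\mathcal{S}}(\boldsymbol{X})\mathcal{L}(f(\boldsymbol{X}),y)\,d\boldsymbol{X}$ for the kernel density estimate $p_{\mathcal{S}}$), controls $\int (p_{\mathcal{D}} - p_{\mathcal{S}})\mathcal{L}$ by the standard KDE bias term $O(\delta^2)$ and variance term $O(N^{-1/2}\delta^{-D/2})$, and finally bounds $\mathcal{L}_{\text{adv}}(f) - \mathcal{L}_{\text{clean}}(f)$ by $O(\delta)$ times the global flatness via a first-order Taylor expansion. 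You instead compare the two adversarial risks directly, through a Danskin-type Lipschitz bound on the per-sample adversarial loss combined with an explicit label-stratified binning of $[0,1]^D$ at a free scale $h$, optimized to $h \sim N^{-1/(D+2)}$. One thing your version buys: in the paper's argument the smoothing bandwidth is forced to equal the perturbation radius $\delta$ (the kernel must live inside the perturbation ball for the max-versus-average step), so there is no free parameter to balance and the advertised $N^{-1/(D+2)}$ rate only emerges by implicitly matching $\delta$ to that scale; your bandwidth is genuinely free, which makes the rate derivation more transparent. One thing to flag: your empty-bin contribution is weighted by $L_\infty$ rather than by the flatness $G$, so what you actually prove is $N^{-1/(D+2)}\,O(G + L_\infty)$ rather than $N^{-1/(D+2)}\,O(G)$; the paper's variance term $(II)$ has the same defect (it is not multiplied by the flatness either), so this looseness is shared with, not worse than, the paper's own proof, but it should be stated honestly or repaired by assuming $L_\infty = O(G)$.
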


\begin{proof}

Indeed, we notice the following loss decomposition,
\begin{equation}
\begin{aligned}
    \mathcal{L}_{\text{adv}}(f) - \widehat{\mathcal{L}}_{\text{adv}}(f)
    =
    \left(\mathcal{L}_{\text{clean}}(f) - \widehat{\mathcal{L}}_{\text{adv}}(f)\right)
    +
    \left(\mathcal{L}_{\text{adv}}(f) - \mathcal{L}_{\text{clean}}(f)\right) .
    \nonumber
\end{aligned}
\end{equation}
To bound the first term, by applying $\lambda_i$ to denote kernel density estimation (KDE) proposed in \citet{petzka2020relative}, then we derive
\begin{equation}
\begin{aligned}
    \mathcal{L}_{\text{clean}}(f) - \widehat{\mathcal{L}}_{\text{adv}}(f)
    &= \mathbb{E}_{(\boldsymbol{X},y)\sim\mathcal{D}}[\mathcal{L}(f(\boldsymbol{X}),y)] - \frac{1}{N}\sum_{i=1}^{N}\max_{\|\boldsymbol{\xi}\|_{p}\leq\delta}\mathcal{L}(f(\boldsymbol{X}_i+\boldsymbol{\xi},y_i)) 
    \\
    & \leq \mathbb{E}_{(\boldsymbol{X},y)\sim\mathcal{D}}[\mathcal{L}(f(\boldsymbol{X}),y)] - \frac{1}{N}\sum_{i=1}^{N}\mathbb{E}_{\boldsymbol{\xi}\sim\lambda_i}[\mathcal{L}(f(\boldsymbol{X}_i+\boldsymbol{\xi}),y_i)]
    \\
    & =
    \int_{\boldsymbol{X}} p_{\mathcal{D}}(\boldsymbol{X})\mathcal{L}(f(\boldsymbol{X}),y) d\boldsymbol{X}
    -\int_{\boldsymbol{X}} p_{\mathcal{S}}(\boldsymbol{X})\mathcal{L}(f(\boldsymbol{X}),y) d\boldsymbol{X}
    \\
    & \leq
    \underbrace{\left|\int_{\boldsymbol{X}} (p_{\mathcal{D}}(\boldsymbol{X})-\mathbb{E}_{\mathcal{S}}[p_{\mathcal{S}}(\boldsymbol{X})])\mathcal{L}(f(\boldsymbol{X}),y(\boldsymbol{X})) d\boldsymbol{X}\right|}_{(I)}
    \\
    &+
    \underbrace{\left|\int_{\boldsymbol{X}} (\mathbb{E}_{\mathcal{S}}[p_{\mathcal{S}}(\boldsymbol{X})]-p_{\mathcal{S}}(\boldsymbol{X}))\mathcal{L}(f(\boldsymbol{X}),y(\boldsymbol{X})) d\boldsymbol{X}\right|}_{(II)} ,
    \nonumber
\end{aligned}
\end{equation}
where $p_{\mathcal{D}}(\boldsymbol{X})$ is the density function of the distribution $\mathcal{D}$, and $p_{\mathcal{S}}(\boldsymbol{X})$ is the KDE of point $\boldsymbol{X}$.

With the smoothness of density function of $\mathcal{D}$ and \citet{silverman2018density}, we know that $(I) = O(\delta^{2})$.

For (II), by using Chebychef inequality and \citet{silverman2018density}, with probability $1-\Delta$, we have
\begin{equation}
    (II) = O(\Delta^{-\frac{1}{2}}N^{-\frac{1}{2}}\delta^{-\frac{D}{2}} + N^{-2}) . \nonumber
\end{equation}

On the other hand, by Taylor expansion, we know
\begin{equation}
    \mathcal{L}_{\text{adv}}(f) - \mathcal{L}_{\text{clean}}(f) \leq O(\delta) \mathbb{E}_{(\boldsymbol{X},y) \sim \mathcal{D}}\left[\max_{\|\boldsymbol{\xi}\|_{p}\leq \delta}\|\nabla_{\boldsymbol{X}}\mathcal{L}(f(\boldsymbol{X}+\boldsymbol{\xi}),y)\|_{q}\right] . \nonumber
\end{equation}
Combined with the bounds for $(I)$ and $(II)$, we can derive Theorem \ref{robust_generalization_bound}.
\end{proof}
    
\end{appendix}

\end{document}